\PassOptionsToPackage{sort&compress,numbers}{natbib}
\documentclass[preprint,12pt]{elsarticle}

\usepackage{mathtools}
\usepackage{amssymb}
\usepackage{booktabs}
\usepackage{tabularx} 
\usepackage{algorithm}
\usepackage{algpseudocode}
\usepackage{graphicx}
\usepackage{subcaption}
\usepackage{multirow}
\usepackage{placeins}
\usepackage{bm}
\providecommand{\symbfit}[1]{\bm{#1}}
\usepackage[colorlinks=true, linkcolor=blue, citecolor=blue, urlcolor=blue]{hyperref}
\usepackage{amsthm}

\theoremstyle{plain}
\newtheorem{theorem}{Theorem}

\theoremstyle{definition}

\theoremstyle{remark}

\usepackage{tikz}
\usetikzlibrary{arrows.meta,positioning,calc}

\usepackage{amsmath} 

\newcommand{\valci}[2]{\begin{tabular}[c]{@{}c@{}}#1\\[-1pt]$(\pm #2)$\end{tabular}}
\journal{}

\begin{document}

\begin{frontmatter}

\title{Nonlinear Data Integration via Kernel Methods for Data Collaboration Analysis}

\author[tsukuba-gsst]{Yamato Suetake}
\ead{s2620455@u.tsukuba.ac.jp}

\author[tsukuba-gsst]{Yuta Kawakami}

\author[tsukuba-iise]{Shunnosuke Ikeda}

\author[tsukuba-iise,tsukuba-tiar]{Yuichi Takano}

\affiliation[tsukuba-gsst]{
  organization={Graduate School of Science and Technology, University of Tsukuba},
  city={Tsukuba},
  state={Ibaraki},
  country={Japan}
}

\affiliation[tsukuba-iise]{
  organization={Institute of Systems and Information Engineering, University of Tsukuba},
  city={Tsukuba},
  state={Ibaraki},
  country={Japan}
}

\affiliation[tsukuba-tiar]{
  organization={Center for Artificial Intelligence Research, Tsukuba Institute for Advanced Research (TIAR), University of Tsukuba},
  addressline={1--1--1 Tennodai},
  city={Tsukuba-shi},
  postcode={305--8577},
  state={Ibaraki},
  country={Japan}
}

\begin{abstract}
Collaborative analysis of decentralized confidential datasets is important, but direct sharing of original datasets is often restricted by privacy and institutional constraints. Data collaboration (DC) analysis transforms each dataset into privacy-preserving intermediate representations via party-specific obfuscation functions and integrates them into common collaboration representations using an anchor dataset. However, many existing DC analysis methods rely on linear transformations for data obfuscation and integration, which may increase reconstruction risk. Although nonlinear dimensionality reduction can mitigate this risk, conventional linear integration methods cannot accurately align intermediate representations produced by nonlinear transformations. Moreover, existing integration methods mainly minimize discrepancies among parties and do not explicitly incorporate geometric or target-variable information useful for downstream analysis. To overcome these limitations, we first formulate linear target-normalized integration (LTI) as a linear integration method and then kernelize it to obtain kernel-based target-normalized integration (KTI). KTI admits a globally optimal solution via kernel ridge regression and an eigenvalue problem. We also introduce graph regularization and a centering constraint so that the target representation can capture geometric and target-variable information useful for downstream analysis. Experiments on image classification tasks demonstrate that KTI improves classification accuracy over existing linear integration methods under nonlinear dimensionality reduction, with further gains from target-variable-aware graph regularization and centering. The results also show that dimensionality reduction choices substantially affect both classification accuracy and reconstruction risk.
\end{abstract}

%

\begin{keyword}
Data collaboration (DC) analysis \sep Privacy-preserving machine learning \sep
Kernel method \sep Nonlinear integration \sep Distributed learning
\end{keyword}

\end{frontmatter}

\section{Introduction}
In recent years, the collaborative use of decentralized datasets held by multiple parties has become increasingly important in domains such as healthcare, finance, manufacturing, and public administration.
Such collaboration can increase the effective sample size, cover more diverse data distributions, and improve prediction or estimation accuracy compared with analyses conducted by a single party.
In practice, however, original datasets often contain confidential information such as personal data and trade secrets, making central analysis---which aggregates all original datasets at a single analytical entity---legally, ethically, and practically challenging~\cite{need-protection1, need-protection2, GDPR}.
This has motivated privacy-preserving machine learning (PPML), which aims to enable collaborative analysis while protecting the confidentiality of original datasets~\cite{PPML}.

A wide variety of PPML frameworks have been proposed~\cite{DP, SecureComputation, HomomorphicEncryption1, FL1}.
Although these methods provide ways to analyze decentralized datasets without sharing original datasets, many tightly couple the privacy mechanism with a specific downstream analysis algorithm, restricting the reusability of the protected data.
Practical decentralized analysis therefore calls for a framework in which the privacy-preserving transformation performed by each party is independent of the downstream analysis performed by the analyst.

Data collaboration (DC) analysis has been proposed as such a framework~\cite{DCframework1, DCframework2}.
In DC analysis, each party independently applies its own obfuscation function to transform its original dataset into intermediate representations and provides only these representations to an analyst.
The analyst constructs integration functions using a shared anchor dataset and integrates the intermediate representations into collaboration representations in a common feature space.
Because each party's obfuscation process and the analyst's downstream analysis are completely decoupled, the resulting integrated dataset can be analyzed with any method without modifying the upstream obfuscation.

However, existing DC analysis methods still face important limitations in the design of integration functions.
First, existing linear integration methods do not guarantee a globally optimal solution without either retaining unnecessary degrees of freedom, imposing strong structural assumptions, or relying on rank conditions that may be violated.
Second, existing integration methods in DC analysis are based on linear integration functions, which cannot generally align intermediate representations produced by more secure nonlinear dimensionality reduction.
Third, existing integration methods do not explicitly incorporate information useful for downstream analysis into the construction of the target representation.

The main goal of this paper is to construct integration methods for DC analysis that retain an analytically tractable optimization structure while enabling the integration of intermediate representations obtained via nonlinear dimensionality reduction.
To this end, we first propose Linear Target-Normalized Integration (LTI), which formulates the construction of integration functions as a least-squares problem with an orthonormal target representation.
We show that LTI admits an analytically derived globally optimal target representation of the anchor dataset, which can be obtained by solving an eigenvalue problem without unnecessary degrees of freedom, strong structural assumptions, or rank conditions.

We then extend LTI to a nonlinear method based on kernel ridge regression, referred to as Kernel-based Target-Normalized Integration (KTI).
For a fixed target representation, each integration function in KTI is obtained from a finite-dimensional kernel ridge regression-type problem, and the target representation is estimated by solving an eigenvalue problem.
This extension enables integration functions to be constructed for intermediate representations obtained through nonlinear dimensionality reduction.

Furthermore, we incorporate graph-based regularization and a centering constraint into KTI.
Graph-based regularization enables geometric structure and target-variable information to be reflected in the target representation, whereas the centering constraint suppresses degenerate constant solutions.
Through numerical experiments on image classification tasks, we evaluate the proposed methods in terms of analytical accuracy, computation time, anchor dataset design, and reconstruction risk, and show that KTI improves prediction accuracy under nonlinear dimensionality reduction.

\section{Related Work}
\label{sec:related-work}

This section reviews studies related to privacy-preserving decentralized data analysis.
We first summarize representative PPML frameworks and data obfuscation methods, with particular attention to how they handle privacy protection, data reuse, and downstream analysis.
We then review DC analysis and existing integration methods, discussing their limitations and how they motivate the proposed method.

\subsection{Privacy-preserving Machine Learning and Data Obfuscation}

A wide variety of PPML techniques have been proposed to protect confidential information in decentralized data environments.
Representative approaches include differential privacy~\cite{DP_Algorithmic, DP, AGM}, secure computation based on cryptography~\cite{SecureComputation, Garbled-circuit1, Garbled-circuit2, SecureComputationExample2, SecureComputationExample3}, homomorphic encryption~\cite{HomomorphicEncryption1, HomomorphicEncryption2}, and federated learning~\cite{FL1, FL2, FLsurvey, FLSurvey2}.
Differential privacy controls the influence of individual records on analytical results, but stronger privacy guarantees often require larger amounts of noise and can degrade analytical accuracy and data utility~\cite{AGM}.
Secure computation and homomorphic encryption enable computation on encrypted data, but they incur substantially higher computational costs than plaintext computation~\cite{SecureComputation, HomomorphicEncryption1}.
Federated learning enables collaborative model training without directly sharing original datasets by exchanging model updates among parties.
However, it requires iterative communication, can increase communication costs and runtime, and may leak information through gradients or model updates~\cite{FLnon-iid, FLnon-iid2, FLChallenges, LeakageFromGradients, SecureAggregation}.
Furthermore, because aggregation in federated learning is tightly coupled with a specific learning algorithm, it is difficult to construct a reusable integrated dataset and then freely select a downstream analysis method.

Data obfuscation-based methods, also referred to as data perturbation, are closely related to this study because they transform original datasets before sharing and can decouple data aggregation from downstream analysis without iterative communication~\cite{PPDM, RandomProjection, rotation_perturbation}.
A representative example is multiplicative data perturbation using a common random matrix~\cite{common_obfuscation}.
However, when a shared linear transformation is employed, a structural vulnerability arises: if correspondences between original and transformed data are partially leaked, the transformation matrix can be estimated and other parties' datasets may be reconstructed~\cite{PPDM-attack}.
Private obfuscation mitigates this issue by allowing each party to apply an independently generated transformation~\cite{private_obfuscation}.
In this setting, however, party-specific transformations map datasets onto different bases or coordinate systems, and naive aggregation of the resulting intermediate representations tends to cause misalignment in the feature space.
These two limitations---reconstruction risk from shared transformations and misalignment from private transformations---indicate that privacy-preserving data obfuscation requires not only secure transformations of original datasets but also an integration mechanism that aligns party-specific intermediate representations.

\subsection{DC Analysis and Limitations of Existing Methods}

Data collaboration (DC) analysis integrates party-specific intermediate representations into collaboration representations using a shared anchor dataset~\cite{DCframework1, DCframework2}.
DC analysis provides theoretical privacy guarantees for intermediate representations~\cite{DCprivacy} and can be combined with differential privacy and federated learning to complement the limitations of each approach~\cite{DC-DP, FedDCL}.

Owing to its computational efficiency and versatility, DC analysis has been applied to a wide range of decentralized data analysis tasks, including predictive modeling and causal inference~\cite{DC-QE, DC-QE-med}, survival analysis~\cite{DC-COX}, explainable machine learning~\cite{DC-SHAP}, recommender systems~\cite{DCrecommender}, and clustering~\cite{DC-Clustering}.
Privacy-oriented variants of DC analysis have also been proposed to make correspondence with original datasets more difficult to establish~\cite{NRI-DC}.
These studies indicate that DC analysis is a practical framework for decentralized data analysis, but its performance and confidentiality depend strongly on how party-specific intermediate representations are integrated.

As a representative integration method in DC analysis, Imakura et al.~\cite{DCframework2} proposed the minimum perturbation problem (MPP), which minimizes discrepancies among the collaboration representations of the anchor dataset and has been widely adopted in subsequent DC studies~\cite{DC-DP, FedDCL, DC-QE, DC-QE-med, DCrecommender, DC-SHAP, DC-COX, DC-Clustering, NRI-DC, DC-fed-17, DCapp5}.
However, MPP retains an unnecessary invertible degree of freedom in the target representation, which can substantially affect downstream performance~\cite{DC-odc}.

Orthogonal data collaboration (ODC)~\cite{DC-odc} reduces this degree of freedom by restricting integration functions to orthogonal transformations and stabilizing the scale and inner-product structure of the collaboration representations.
However, ODC depends on a prescribed target representation and imposes a strong structural assumption on the obfuscation functions.

Kawakami et al.~\cite{KawakamiDC} proposed a generalized-eigenvalue-problem-based integration method that directly minimizes discrepancies among anchor collaboration representations without prescribing a target representation.
Although this method avoids both the invertible degree of freedom in MPP and the orthogonality assumption in ODC, it relies on rank conditions and lacks an explicit global optimality guarantee.
These formulations and their limitations are analyzed mathematically in Section~\ref{subsec:existing-linear-integration}.

Existing DC analysis methods often employ linear dimensionality reduction methods, particularly principal component analysis (PCA), as obfuscation functions.
Linear dimensionality reduction is theoretically tractable and simplifies the construction of integration functions.
However, it has a structural vulnerability: although the analyst normally receives only the intermediate representations of the anchor dataset, if the original anchor dataset---which is commonly held across all parties---is also accessed by the analyst, the linear obfuscation function can be estimated by applying the pseudoinverse of the anchor data to its intermediate representations.
In particular, if a party colludes with the analyst such that the number of leaked original anchor instances substantially exceeds the feature dimensionality, the linear obfuscation function can be completely identified and the original datasets of other parties reconstructed~\cite{DC-odc, common_obfuscation}.

Nonlinear dimensionality reduction is a promising way to mitigate reconstruction risk because a single global linear inverse mapping is generally insufficient for recovering original datasets.
However, simply replacing linear dimensionality reduction with nonlinear dimensionality reduction is not enough.
Previous attempts to introduce nonlinear dimensionality reduction into DC analysis reported degradation in analytical accuracy compared with linear methods~\cite{Mashiko2025}.
A possible reason is that nonlinear dimensionality reduction changes local coordinate structures in a party-specific manner, and a single linear integration function cannot generally align the resulting nonlinear intermediate representations.
Therefore, a nonlinear integration method is needed to construct collaboration representations from nonlinear intermediate representations.

In addition to this limitation, existing integration methods are mainly designed to minimize discrepancies among the collaboration representations of the anchor dataset across parties.
However, they do not provide a mechanism for incorporating downstream-relevant information, such as target variables or external sample-wise relationships, into the target representation.

Taken together, the above studies substantiate the three limitations outlined in the Introduction: unresolved issues in linear integration, difficulty in handling nonlinear intermediate representations, and the lack of a mechanism for incorporating downstream-relevant information into the target representation.
These limitations motivate the proposed framework, which addresses linear integration, nonlinear integration, and downstream-relevant target representation design within a unified optimization-based formulation.

\section{Data Collaboration (DC) Analysis}
\label{sec:dca}
This section describes the DC analysis framework considered in this study and introduces the notation used to formulate the proposed integration methods.
Fig.~\ref{fig:workflow} provides an overview of the DC analysis framework considered in this study.

\begin{figure*}[t]
\centering

\definecolor{partycolor}{HTML}{7030A0}
\definecolor{datacolor}{HTML}{222222}
\definecolor{analystcolor}{HTML}{2CA02C}

\resizebox{0.82\textwidth}{!}{%
\begin{tikzpicture}[
  font=\sffamily,
  >={Stealth[length=2.4mm]},
  box/.style={draw=#1, fill=white, rounded corners=2pt,
              line width=0.9pt, minimum width=1.0cm,
              minimum height=0.82cm, align=center,
              inner sep=1.5pt, font=\small\bfseries},
  splitbox/.style={draw=#1, fill=white, rounded corners=2pt,
              line width=0.9pt, minimum width=1.0cm,
              minimum height=0.82cm, align=center,
              inner sep=1.5pt, font=\small\bfseries,
              path picture={
                \draw[datacolor!95, dash pattern=on 0.7pt off 0.7pt, line width=0.6pt]
                  (path picture bounding box.west)
                  -- (path picture bounding box.east);
              }},
  proc/.style={circle, draw=#1, fill=#1!85, text=white,
               line width=0.8pt, minimum size=0.52cm,
               inner sep=0pt, font=\small\itshape},
  arr/.style={-{Stealth[length=2.4mm]}, line width=0.9pt},
  lineonly/.style={line width=0.9pt},
  bidir/.style={{Stealth[length=2.0mm]}-{Stealth[length=2.0mm]},
                line width=0.9pt, draw=datacolor}
]

\newcommand{\hospital}[2]{%
  \begin{scope}[shift={(#1,#2)}, scale=0.38]
    \fill[partycolor] (-0.55,-0.70) rectangle (0.55,0.70);
    \fill[white] (-0.13,-0.70) rectangle (0.13,-0.35);
    \fill[white] (-0.34,0.10) rectangle (0.34,0.28);
    \fill[white] (-0.10,-0.14) rectangle (0.10,0.52);
  \end{scope}
}

\node[font=\bfseries, text=partycolor] at (2.35,5.65) {Each Party};
\node[font=\bfseries, text=analystcolor] at (8.85,5.65) {Analyst};

\draw[partycolor!45, dashed, rounded corners=8pt, line width=0.8pt]
  (0.53,0.55) rectangle (4.40,5.25);

\draw[analystcolor, fill=analystcolor!10, rounded corners=8pt, line width=1.0pt]
  (4.85,0.65) rectangle (12.95,5.25);

\foreach \i/\y/\lab in {1/4.55/1,2/3.25/2,K/1.25/K} {
  \hospital{0.15}{\y}

  \draw[partycolor, fill=partycolor!11, rounded corners=5pt, line width=0.9pt]
    (0.68,\y-0.58) rectangle (4.23,\y+0.58);

  \node[splitbox=partycolor] (raw\i) at (1.35,\y)
    {$\symbfit{X}^{(\lab)}$\\[0.5mm]$\symbfit{A}$};

  \node[proc=datacolor] (f\i) at (2.25,\y) {$f_{\lab}$};

  \node[splitbox=datacolor, fill=datacolor!8] (local\i) at (3.35,\y)
    {$\tilde{\symbfit{X}}^{(\lab)}$\\[0.5mm]
     $\tilde{\symbfit{A}}^{(\lab)}$};

  \draw[lineonly, draw=datacolor] (raw\i) -- (f\i);
  \draw[arr, draw=datacolor] (f\i) -- (local\i);
  \draw[arr, draw=datacolor!70, line width=1.2pt] (local\i.east) -- (5.05,\y);
}

\node[font=\Large] at (1.35,2.25) {$\vdots$};
\node[font=\Large] at (3.35,2.25) {$\vdots$};

\foreach \i/\y/\lab in {1/4.55/1,2/3.25/2,K/1.25/K} {
  \node[splitbox=datacolor, fill=datacolor!8] (recv\i) at (5.55,\y)
    {$\tilde{\symbfit{X}}^{(\lab)}$\\[0.5mm]
     $\tilde{\symbfit{A}}^{(\lab)}$};

  \node[proc=analystcolor] (g\i) at (6.45,\y) {$g_{\lab}$};

  \node[splitbox=analystcolor, fill=white] (hat\i) at (7.55,\y)
    {$\hat{\symbfit{X}}^{(\lab)}$\\[0.5mm]
     $\hat{\symbfit{A}}^{(\lab)}$};

  \draw[lineonly, draw=analystcolor] (recv\i) -- (g\i);
  \draw[arr, draw=analystcolor] (g\i) -- (hat\i);
}

\node[font=\Large] at (5.55,2.25) {$\vdots$};
\node[font=\Large] at (7.55,2.25) {$\vdots$};

\node[box=analystcolor, minimum width=1.15cm, minimum height=2.15cm] (stack) at (9.45,3.85)
  {$\hat{\symbfit{X}}^{(1)}$\\[1.0mm]
   $\hat{\symbfit{X}}^{(2)}$\\[1.0mm]
   $\vdots$\\[1.0mm]
   $\hat{\symbfit{X}}^{(K)}$};

\node[box=analystcolor, minimum width=1.05cm, minimum height=0.62cm] (z) at (9.45,1.25)
  {$\symbfit{Z}$};

\node[proc=analystcolor, minimum size=0.62cm] (h) at (10.70,3.85) {$h$};

\node[box=analystcolor, minimum width=1.1cm, minimum height=1.9cm] (yout) at (11.95,3.85)
  {$\symbfit{y}^{(1)}$\\[1.0mm]
   $\symbfit{y}^{(2)}$\\[1.0mm]
   $\vdots$\\[1.0mm]
   $\symbfit{y}^{(K)}$};

\coordinate (hat1x) at ($(hat1.east)+(0,0.21)$);
\coordinate (hat2x) at ($(hat2.east)+(0,0.21)$);
\coordinate (hatKx) at ($(hatK.east)+(0,0.21)$);

\coordinate (hat1a) at ($(hat1.east)+(0,-0.21)$);
\coordinate (hat2a) at ($(hat2.east)+(0,-0.21)$);
\coordinate (hatKa) at ($(hatK.east)+(0,-0.21)$);

\draw[bidir] (hat1a) -- (z.north west);
\draw[bidir] (hat2a) -- ($(z.north west)!0.55!(z.west)$);
\draw[bidir] (hatKa) -- (z.west);

\draw[arr, draw=analystcolor] (hat1x) -- (stack.west |- hat1x);
\draw[arr, draw=analystcolor] (hat2x) -- (stack.west);
\draw[arr, draw=analystcolor] (hatKx) -- (stack.south west);

\draw[lineonly, draw=analystcolor] (stack.east) -- (h.west);
\draw[arr, draw=analystcolor] (h.east) -- (yout.west);

\draw[draw=analystcolor, line width=1.4pt, -{Stealth[length=3mm]}]
  (10.20,0.65)
  .. controls (8.0,-0.25) and (4.5,-0.35) ..
  (2.35,0.48);

\node[draw=analystcolor, fill=analystcolor!85, text=white,
      rounded corners=2pt, line width=0.8pt,
      inner xsep=4pt, inner ysep=2pt,
      font=\small\itshape] (feedback) at (8.90,0.18)
  {$g_1,\;g_2,\dots,g_K, \;h$};

\end{tikzpicture}%
}
\caption{Workflow of DC analysis.}
\label{fig:workflow}
\end{figure*}

Throughout this paper, we denote by $[n]$ the set of positive integers $\{1, 2, \ldots, n\}$.
Here and hereafter, $\symbfit{I}_{m}$ denotes the $m$-dimensional identity matrix.
For a matrix $\symbfit X$, $\symbfit X_{i,:}$ and $\symbfit X_{:,j}$ denote its $i$-th row vector and $j$-th column vector, respectively, and $\symbfit X^\dagger$ denotes its Moore--Penrose pseudoinverse.
We consider $K$ parties, where each party $k \in [K]$ holds a $d$-dimensional feature matrix $\symbfit{X}^{(k)}\in\mathbb{R}^{n(k)\times d}$ and a target variable vector $\symbfit{y}^{(k)}\in\mathbb{R}^{n(k)}$ for $n(k)$ instances.

In DC analysis, an anchor dataset $\symbfit{A}\in\mathbb{R}^{n_{\mathrm{a}}\times d}$ is introduced as pseudo-data that provides a common reference for integrating the party-specific representations.
It has the same feature dimension as the original datasets but an arbitrary number of rows, and can be constructed, for example, by random sampling or oversampling of publicly shareable data.
The anchor dataset is shared among parties but is not disclosed to the analyst.

To protect data confidentiality, each party $k \in [K]$ constructs its own obfuscation function $f_{k}:\mathbb{R}^{1\times d}\to\mathbb{R}^{1\times \tilde d(k)}$.
The function is applied to matrices row-wise as follows:
\begin{equation}
\label{eq:row-wise-obfuscation}
f_k(\symbfit{X})
\coloneqq
\begin{pmatrix}
f_k(\symbfit{X}_{1,:}) \\
f_k(\symbfit{X}_{2,:}) \\
\vdots \\
f_k(\symbfit{X}_{n,:})
\end{pmatrix}
\in \mathbb{R}^{n \times \tilde d(k)} ,
\end{equation}
where $\symbfit{X}\in\mathbb{R}^{n\times d}$.

Applying $f_k$ to both the original dataset and the anchor dataset yields the following intermediate representations:
\begin{equation}
\label{eq:intermediate-representations}
\begin{aligned}
\tilde{\symbfit{X}}^{(k)} &\coloneqq f_{k}(\symbfit{X}^{(k)})
\in\mathbb{R}^{n(k)\times \tilde d(k)}, \\
\tilde{\symbfit{A}}^{(k)} &\coloneqq f_{k}(\symbfit{A})
\in\mathbb{R}^{n_{\mathrm{a}}\times \tilde d(k)} .
\end{aligned}
\end{equation}
Each party $k \in [K]$ keeps its original dataset, anchor dataset, and obfuscation function confidential, and provides only the intermediate representations together with the target variable vector $\symbfit y^{(k)}$ to the analyst.
This restriction is important because simultaneous access to the anchor dataset and its intermediate representations could allow the analyst to estimate the obfuscation function.

The analyst constructs, for each party $k \in [K]$, an integration function $g_{k}:\mathbb{R}^{1\times \tilde d(k)}\to\mathbb{R}^{1\times \hat d}$ that maps its representation into a common feature space.
The corresponding collaboration representations are defined as follows:
\begin{equation}
\label{eq:collaboration-representations}
\begin{aligned}
\hat{\symbfit{X}}^{(k)} &\coloneqq g_{k}(\tilde{\symbfit{X}}^{(k)})
\in\mathbb{R}^{n(k)\times \hat d}, \\
\hat{\symbfit{A}}^{(k)} &\coloneqq g_{k}(\tilde{\symbfit{A}}^{(k)})
\in\mathbb{R}^{n_{\mathrm{a}}\times \hat d} .
\end{aligned}
\end{equation}
The integration function $g_k$ is not chosen arbitrarily; it must align the intermediate representations, which differ across parties, into a common feature space.
Ideally, the compositions of the obfuscation and integration functions are expected to approximately coincide across all parties:
\begin{equation}
\label{eq:ideal-concordance}
g_k \circ f_{k}
\approx
g_{k'} \circ f_{k'}
\qquad
(k,k'\in[K]).
\end{equation}
When Eq.~\eqref{eq:ideal-concordance} holds, essentially the same feature transformation is applied to the original datasets of all parties, and the collaboration representations can be treated as coherent observations in a common feature space.

However, the analyst cannot directly observe the obfuscation function $f_k$ itself; the analyst observes only its output on the anchor dataset, $\tilde{\symbfit{A}}^{(k)} = f_{k}(\symbfit{A})$, and therefore Eq.~\eqref{eq:ideal-concordance} cannot be directly enforced.
Instead, we adopt concordance on the anchor dataset as a surrogate objective and design the integration functions to satisfy:
\begin{equation}
\label{eq:anchor-concordance}
g_{k}(\tilde{\symbfit{A}}^{(k)})
\approx
g_{k'}(\tilde{\symbfit{A}}^{(k')})
\qquad
(k,k'\in[K]).
\end{equation}

In practice, rather than handling Eq.~\eqref{eq:anchor-concordance} for each pair of parties, we introduce a target representation $\symbfit{Z}\in\mathbb{R}^{n_{\mathrm{a}}\times \hat d}$ of anchor data in the common feature space.
Each integration function is then trained so that the collaboration representations of the anchor dataset approach this common target:
\begin{equation}
\label{eq:target-representation-alignment}
g_k(\tilde{\symbfit{A}}^{(k)}) \approx \symbfit{Z}
\qquad(k\in[K]).
\end{equation}

Once the integration function $g_k$ for each party $k\in[K]$ is obtained, the analyst applies it to the intermediate representations $\tilde{\symbfit{X}}^{(k)}$ to obtain the collaboration representations $\hat{\symbfit{X}}^{(k)}$.
Since these collaboration representations lie in approximately the same common feature space, the analyst vertically concatenates them to construct the following single integrated dataset:
\begin{equation}
\label{eq:single-integrated-dataset}
\hat{\symbfit{X}}
=
\begin{pmatrix}
\hat{\symbfit{X}}^{(1)}\\
\hat{\symbfit{X}}^{(2)}\\
\vdots\\
\hat{\symbfit{X}}^{(K)}
\end{pmatrix},
\qquad
\hat{\symbfit{y}}
=
\begin{pmatrix}
\symbfit{y}^{(1)}\\
\symbfit{y}^{(2)}\\
\vdots\\
\symbfit{y}^{(K)}
\end{pmatrix} .
\end{equation}
The analyst trains a machine learning model $h:\mathbb{R}^{1\times \hat d}\to\mathbb{R}$ using the single integrated dataset in Eq.~\eqref{eq:single-integrated-dataset} and then returns the integration function $g_k$ and the trained model $h$ to each party $k \in [K]$, as indicated in Fig.~\ref{fig:workflow}.
By composing these with its own obfuscation function $f_k$, each party obtains a model $h \circ g_k \circ f_k:\mathbb{R}^{1\times d}\to\mathbb{R}$ applicable directly to its original dataset.

Through this procedure, each party $k\in[K]$ makes predictions on new data independently, without any additional communication with the analyst.

\subsection{Existing Linear Integration Formulations}
\label{subsec:existing-linear-integration}

Existing integration methods for DC analysis are based on linear integration functions.
We next summarize these formulations.
Motivated by Eq.~\eqref{eq:target-representation-alignment}, suppose that the integration function is a linear map.
For each party $k\in[K]$, define
\begin{equation}
\label{eq:lki-linear-map}
g_k(\symbfit{x}) = \symbfit{x}\symbfit{G}^{(k)},
\qquad
\symbfit{G}^{(k)}\in\mathbb{R}^{\tilde d(k)\times \hat d},
\end{equation}
where $\symbfit{x}\in\mathbb{R}^{1\times\tilde d(k)}$ and $\symbfit{G}^{(k)}$ is a coefficient matrix.
Then, the collaboration representations of the anchor dataset are given by
\begin{equation}
\label{eq:lki-anchor-representation}
g_k(\tilde{\symbfit{A}}^{(k)})
=
\tilde{\symbfit{A}}^{(k)}\symbfit{G}^{(k)} .
\end{equation}
A natural least-squares formulation is to estimate both the coefficient matrices and a common target representation $\symbfit{Z}\in\mathbb{R}^{n_{\mathrm{a}}\times\hat d}$ as follows:
\begin{equation}
\label{eq:linear-naive-formulation}
\min_{\symbfit{Z},\,\{\symbfit{G}^{(k)}\}_{k=1}^{K}}
\sum_{k=1}^{K}
\left\|
\tilde{\symbfit{A}}^{(k)}\symbfit{G}^{(k)}
-
\symbfit{Z}
\right\|_{\mathrm{F}}^2 .
\end{equation}
Without additional constraints, however, Eq.~\eqref{eq:linear-naive-formulation} admits the trivial optimal solution $\symbfit{Z}=\symbfit{0}$ and $\symbfit{G}^{(k)}=\symbfit{0}$.
Existing linear integration methods avoid this degeneracy by prescribing the target representation or by imposing constraints on the coefficient matrices.

In MPP~\cite{DCframework2}, let $\symbfit W_A=[\tilde{\symbfit A}^{(1)},\ldots,\tilde{\symbfit A}^{(K)}]$.
The target representation is chosen from the leading left singular subspace of $\symbfit W_A$; specifically, if $\symbfit U_{\hat d}$ is an orthonormal basis of this subspace, then $\symbfit Z=\symbfit U_{\hat d}\symbfit R$ with an arbitrary invertible matrix $\symbfit R\in\mathbb R^{\hat d\times\hat d}$ spans the same subspace.
For this prescribed $\symbfit Z$, each unconstrained least-squares subproblem has the minimum-norm solution
\begin{equation}
\label{eq:fixed-Z-linear-solution}
\symbfit G^{(k)}
=
(\tilde{\symbfit A}^{(k)})^\dagger \symbfit Z
\qquad (k\in[K]).
\end{equation}
Thus, MPP retains a right-side invertible degree of freedom in the target representation.
Because $\symbfit R$ is arbitrary, different choices of the basis and scaling within the same subspace can substantially affect downstream accuracy~\cite{DC-odc}.

ODC~\cite{DC-odc} reduces this invertible degree of freedom by imposing column-orthogonality constraints on the coefficient matrices, typically $\symbfit G^{(k)\top}\symbfit G^{(k)}=\symbfit I_{\hat d}$ for $k\in[K]$.
However, ODC still treats $\symbfit Z$ as a prescribed target representation.
For example, when the dimensions match, the intermediate representation of a reference party, such as $\tilde{\symbfit A}^{(1)}$, can be used as $\symbfit Z$.
Thus, the result depends on how this target is chosen.
Moreover, ODC relies on the strong assumption that the dimensionality reduction performed by each party is compatible with an orthogonal transformation; when this assumption is violated, analytical accuracy may substantially decrease.

In contrast, the GEP-based method~\cite{KawakamiDC} does not prescribe a target representation.
Instead, it directly minimizes discrepancies among the anchor collaboration representations.
In the notation of this paper, its column-wise formulation can be summarized as
\begin{equation}
\label{eq:existing-gep-formulation}
\begin{aligned}
\min \quad
&
\sum_{k=1}^{K}
\sum_{k'=1}^{K}
\left\|
\tilde{\symbfit A}^{(k)}\symbfit G^{(k)}
-
\tilde{\symbfit A}^{(k')}\symbfit G^{(k')}
\right\|_{\mathrm F}^{2}
\\[2mm]
\text{s.~t.}\quad
&
\sum_{k=1}^{K}
\left\|
\tilde{\symbfit A}^{(k)}
\symbfit G^{(k)}_{:,j}
\right\|_2^2
=
1
\quad(j\in[\hat d]).
\end{aligned}
\end{equation}
where $\symbfit G^{(k)}_{:,j}$ denotes the $j$-th column of $\symbfit G^{(k)}$.
This formulation is reduced to a generalized eigenvalue problem.
However, the constraints in Eq.~\eqref{eq:existing-gep-formulation} normalize each integration component separately and do not explicitly impose orthogonality or linear independence among different components, although the generalized eigenvalue problem may yield orthogonal components implicitly.
Moreover, the formulation relies on rank conditions on the intermediate anchor representations, such as full column rank of $\tilde{\symbfit A}^{(k)}$, and an explicit global optimality guarantee for the solution obtained from the generalized eigenvalue problem has not been established.

These observations motivate a linear integration formulation that does not prescribe $\symbfit Z$ in advance.
Such a formulation should avoid unnecessary invertible degrees of freedom and strong orthogonality or rank assumptions, while still admitting an analytically derived globally optimal solution.

\section{Proposed Method}
\label{sec:proposed-method}

This section presents the proposed integration methods for DC analysis.
We first formulate linear target-normalized integration (LTI) as a linear integration problem and give its globally optimal solution.
We then extend LTI to kernel-based target-normalized integration (KTI) by kernelizing the integration function.
Finally, we introduce graph-based regularization and a centering constraint to incorporate geometric and target-variable information into the target representation.

\subsection{Linear Target-Normalized Integration}

Motivated by the limitations of existing linear integration formulations
discussed in Section~\ref{subsec:existing-linear-integration}, we return to
the least-squares formulation in Eq.~\eqref{eq:linear-naive-formulation} and
impose the orthonormality constraint
\begin{equation}
\label{eq:lti-orthonormality}
\symbfit Z^{\top}\symbfit Z=\symbfit I_{\hat d}.
\end{equation}
As a result, we formulate the following linear integration problem:
\begin{equation}
\label{eq:linear-formulation}
\begin{aligned}
\min \quad
&
\sum_{k=1}^{K}
\left\|
\tilde{\symbfit A}^{(k)}\symbfit G^{(k)}
-
\symbfit Z
\right\|_{\mathrm F}^2
\\[2mm]
\text{s.~t.} \quad
&
\symbfit G^{(k)}
\in
\mathbb R^{\tilde d(k)\times\hat d}
\quad(k\in[K]),
\\
&
\symbfit Z^{\top}\symbfit Z=\symbfit I_{\hat d},
\\
&
\symbfit Z\in\mathbb R^{n_{\mathrm a}\times\hat d}.
\end{aligned}
\end{equation}

The objective in Eq.~\eqref{eq:linear-formulation} measures the discrepancy
between the collaboration representations of the anchor dataset and the target
representation.
The constraint in Eq.~\eqref{eq:lti-orthonormality} prevents the zero solution
and avoids selecting the same direction multiple times.

The problem in Eq.~\eqref{eq:linear-formulation} admits a simple spectral
solution. For a fixed target matrix $\symbfit Z$, one least-squares minimizer
with respect to $\symbfit G^{(k)}$ is the minimum-norm solution
\begin{equation}
\label{eq:lti-G-fixed-Z}
\symbfit G^{(k)\star}
=
(\tilde{\symbfit A}^{(k)})^\dagger
\symbfit Z.
\end{equation}
Let
\begin{equation}
\label{eq:lti-projector}
\symbfit P^{(k)}
\coloneqq
\tilde{\symbfit A}^{(k)}
(\tilde{\symbfit A}^{(k)})^\dagger
\end{equation}
be the orthogonal projector onto
$\mathrm{Col}(\tilde{\symbfit A}^{(k)})$.
Although the least-squares minimizer may not be unique, all minimizers yield
the same fitted anchor representation $\symbfit P^{(k)}\symbfit Z$ and hence
the same residual.
Thus, substituting the minimum-norm representative in
Eq.~\eqref{eq:lti-G-fixed-Z} gives
\begin{equation}
\label{eq:lti-residual-projection}
\tilde{\symbfit A}^{(k)}
\symbfit G^{(k)\star}
-
\symbfit Z
=
-
(\symbfit I_{n_{\mathrm a}}-\symbfit P^{(k)})
\symbfit Z.
\end{equation}
Since $\symbfit P^{(k)}$ is an orthogonal projector, it satisfies
$\symbfit P^{(k)\top}=\symbfit P^{(k)}$ and
$(\symbfit P^{(k)})^2=\symbfit P^{(k)}$.
Therefore,
\begin{equation}
\label{eq:lti-projection-norm-trace}
\left\|
(\symbfit I_{n_{\mathrm a}}-\symbfit P^{(k)})
\symbfit Z
\right\|_{\mathrm F}^{2}
=
\mathrm{tr}
\left(
\symbfit Z^{\top}
(\symbfit I_{n_{\mathrm a}}-\symbfit P^{(k)})
\symbfit Z
\right).
\end{equation}
Summing Eq.~\eqref{eq:lti-projection-norm-trace} over $k=1,\ldots,K$ and
using $\symbfit Z^{\top}\symbfit Z=\symbfit I_{\hat d}$, we obtain
\begin{equation}
\label{eq:lti-reduced-objective}
\sum_{k=1}^{K}
\left\|
(\symbfit I_{n_{\mathrm a}}-\symbfit P^{(k)})
\symbfit Z
\right\|_{\mathrm F}^{2}
=
K\hat d
-
\mathrm{tr}
\left(
\symbfit Z^{\top}
\symbfit M
\symbfit Z
\right),
\end{equation}
where
\begin{equation}
\label{eq:lti-M-definition}
\symbfit M
\coloneqq
\sum_{k=1}^{K}
\symbfit P^{(k)}.
\end{equation}
Since $K\hat d$ is independent of $\symbfit Z$, minimizing
Eq.~\eqref{eq:lti-reduced-objective} is equivalent to maximizing the second
term. Hence, Eq.~\eqref{eq:linear-formulation} is reduced to
\begin{equation}
\label{eq:lti-trace-max}
\begin{aligned}
\max \quad
&
\mathrm{tr}
\left(
\symbfit Z^{\top}
\symbfit M
\symbfit Z
\right)
\\[2mm]
\text{s.~t.} \quad
&
\symbfit Z^{\top}\symbfit Z
=
\symbfit I_{\hat d},
\\
&
\symbfit Z\in\mathbb R^{n_{\mathrm a}\times\hat d}.
\end{aligned}
\end{equation}

Since each $\symbfit P^{(k)}$ is an orthogonal projector,
$\symbfit M$ is symmetric positive semidefinite.
To see the spectral form of the solution, first consider the single-column
case:
\begin{equation}
\label{eq:lti-single-column}
\begin{aligned}
\max \quad
&
\symbfit z^{\top}
\symbfit M
\symbfit z
\\[1mm]
\text{s.~t.} \quad
&
\|\symbfit z\|_2^2=1.
\end{aligned}
\end{equation}
The Lagrangian is
\begin{equation}
L(\symbfit z,\eta)
=
\symbfit z^{\top}
\symbfit M
\symbfit z
-
\eta
(
\symbfit z^{\top}
\symbfit z
-
1
).
\end{equation}
The stationary condition is
\begin{equation}
\symbfit M\symbfit z
=
\eta
\symbfit z.
\end{equation}
Moreover, for a unit-norm eigenvector,
$\symbfit z^{\top}\symbfit M\symbfit z=\eta$.
Therefore, the single-column optimum is attained by an eigenvector associated
with the largest eigenvalue of $\symbfit M$.

For $\hat d>1$, the constraint
$\symbfit Z^{\top}\symbfit Z=\symbfit I_{\hat d}$ requires the columns of
$\symbfit Z$ to be mutually orthonormal.
For a symmetric matrix $\symbfit M$, the trace maximization in
Eq.~\eqref{eq:lti-trace-max} is solved by choosing $\hat d$ orthonormal
eigenvectors associated with its $\hat d$ largest eigenvalues.
Thus, an optimal $\symbfit Z$ is obtained by taking these eigenvectors as its
columns.

To compute these eigenvectors efficiently, let
$\symbfit Q^{(k)}\in\mathbb R^{n_{\mathrm a}\times r(k)}$ be a matrix whose
columns form an orthonormal basis of
$\mathrm{Col}(\tilde{\symbfit A}^{(k)})$, where
$r(k)=\operatorname{rank}(\tilde{\symbfit A}^{(k)})$.
Then
\begin{equation}
\label{eq:lti-P-Q}
\symbfit P^{(k)}
=
\symbfit Q^{(k)}
\symbfit Q^{(k)\top}.
\end{equation}
Define
\begin{equation}
\label{eq:lti-WQ-definition}
\symbfit W_Q
\coloneqq
[
\symbfit Q^{(1)},\ldots,\symbfit Q^{(K)}
].
\end{equation}
Then
\begin{equation}
\label{eq:lti-M-WQ}
\symbfit M
=
\sum_{k=1}^{K}
\symbfit Q^{(k)}
\symbfit Q^{(k)\top}
=
\symbfit W_Q
\symbfit W_Q^{\top}.
\end{equation}
Therefore, the dominant eigenvectors of $\symbfit M$ can be obtained as the
dominant left singular vectors of $\symbfit W_Q$.

Let $\symbfit U_{Q\hat d}$ denote a matrix whose columns are such dominant
left singular vectors.
For any $\hat d\times\hat d$ orthogonal matrix $\symbfit O$,
$\symbfit U_{Q\hat d}\symbfit O$ has the same objective value and still
satisfies the orthonormality constraint.
Thus, the target representation is determined only up to an orthogonal
rotation.

\begin{theorem}[Global Optimal Solution of the Linear Integration Problem]
\label{thm:linear-integration-solution}
Assume that $\hat d\le n_{\mathrm a}$.
For each $k\in[K]$, let
$\symbfit Q^{(k)}\in\mathbb R^{n_{\mathrm a}\times r(k)}$ be a matrix whose
columns form an orthonormal basis of
$\mathrm{Col}(\tilde{\symbfit A}^{(k)})$, where
$r(k)=\operatorname{rank}(\tilde{\symbfit A}^{(k)})$.
Define
\begin{equation}
\label{eq:WQ-definition}
\symbfit W_Q
\coloneqq
[
\symbfit Q^{(1)},\ldots,\symbfit Q^{(K)}
].
\end{equation}
Let
$\symbfit U_{Q\hat d}\in\mathbb R^{n_{\mathrm a}\times\hat d}$
be one choice of the $\hat d$ left singular vectors of $\symbfit W_Q$
corresponding to its $\hat d$ largest singular values.
Let
\[
\mathbf O(\hat d)
\coloneqq
\{
\symbfit O\in\mathbb R^{\hat d\times\hat d}
\mid
\symbfit O^\top\symbfit O=\symbfit I_{\hat d}
\}.
\]
Then, for any $\symbfit O\in\mathbf O(\hat d)$, a globally optimal solution to
Eq.~\eqref{eq:linear-formulation} is given by
\begin{equation}
\label{eq:lti-global-solution}
\symbfit Z^\star
=
\symbfit U_{Q\hat d}
\symbfit O,
\qquad
\symbfit G^{(k)\star}
=
(\tilde{\symbfit A}^{(k)})^\dagger
\symbfit Z^\star
\quad
(k\in[K]).
\end{equation}
For each fixed $\symbfit Z^\star$, the coefficient matrix
$\symbfit G^{(k)\star}$ is the minimum-norm representative among the
least-squares minimizers.
The arbitrary orthogonal matrix $\symbfit O$ reflects the rotational
indeterminacy of the target representation.
When the $\hat d$-th largest singular value of $\symbfit W_Q$ is separated from
the next one, this form characterizes all globally optimal
$\symbfit Z^\star$ up to an orthogonal rotation.
\end{theorem}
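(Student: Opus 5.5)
The plan is to split the joint problem in Eq.~\eqref{eq:linear-formulation} into an inner minimization over $\{\symbfit G^{(k)}\}$ for fixed $\symbfit Z$ and an outer maximization over $\symbfit Z$. For fixed $\symbfit Z$ the objective separates over $k$. Each term $\|\tilde{\symbfit A}^{(k)}\symbfit G^{(k)}-\symbfit Z\|_{\mathrm F}^2$ is a column-wise unconstrained least-squares problem, and its minimum value is $\|(\symbfit I_{n_{\mathrm a}}-\symbfit P^{(k)})\symbfit Z\|_{\mathrm F}^2$. The minimizers are exactly the $\symbfit G^{(k)}$ with $\tilde{\symbfit A}^{(k)}\symbfit G^{(k)}=\symbfit P^{(k)}\symbfit Z$, and $(\tilde{\symbfit A}^{(k)})^\dagger\symbfit Z$ is the minimum-norm one by the standard Moore--Penrose property. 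Consequently the minimum of Eq.~\eqref{eq:linear-formulation} equals $\min_{\symbfit Z^\top\symbfit Z=\symbfit I}\sum_k\|(\symbfit I-\symbfit P^{(k)})\symbfit Z\|_{\mathrm F}^2$. A pair $(\symbfit Z,\{\symbfit G^{(k)}\})$ is globally optimal if and only if $\symbfit Z$ solves this reduced problem and each $\symbfit G^{(k)}$ is an inner minimizer. This justifies the reduction to Eq.~\eqref{eq:lti-reduced-objective} and then to the trace maximization Eq.~\eqref{eq:lti-trace-max}.

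Next I will solve Eq.~\eqref{eq:lti-trace-max} rigorously rather than through the Lagrangian heuristic. Write $\symbfit M=\symbfit W_Q\symbfit W_Q^\top=\symbfit V\Lambda\symbfit V^\top$ with eigenvalues $\lambda_1\ge\dots\ge\lambda_{n_{\mathrm a}}\ge0$. These eigenvalues are the squared singular values of $\symbfit W_Q$, padded with zeros, and the eigenvectors are the left singular vectors; this is Eq.~\eqref{eq:lti-M-WQ}. For any feasible $\symbfit Z$, let $\symbfit Y=\symbfit V^\top\symbfit Z$, so that $\mathrm{tr}(\symbfit Z^\top\symbfit M\symbfit Z)=\sum_i\lambda_i w_i$ with $w_i=\|\symbfit Y_{i,:}\|_2^2$. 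The weights satisfy $0\le w_i\le1$, because $\symbfit Y$ has orthonormal columns and so $\symbfit Y\symbfit Y^\top$ is a projector, and $\sum_i w_i=\hat d$. A linear program over these weights (equivalently, Ky Fan's maximum principle) then gives the upper bound $\sum_{i\le\hat d}\lambda_i$. This bound is attained by $\symbfit U_{Q\hat d}$, and hence by $\symbfit U_{Q\hat d}\symbfit O$, because the trace is invariant under $\symbfit Z\mapsto\symbfit Z\symbfit O$ and $\symbfit O^\top\symbfit O=\symbfit I$ keeps $\symbfit Z$ feasible. The assumption $\hat d\le n_{\mathrm a}$ ensures that feasible $\symbfit Z$ exist. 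Combining this with the first step gives global optimality of $(\symbfit Z^\star,\symbfit G^{(k)\star})$ in Eq.~\eqref{eq:lti-global-solution}.

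The characterization claim is the step needing the most care. Assume $\lambda_{\hat d}>\lambda_{\hat d+1}$; if $\hat d=n_{\mathrm a}$ the claim is trivial. Equality in the linear-program bound forces $w_i=1$ for every $i$ with $\lambda_i>\lambda_{\hat d}$, and forces $w_i=0$ for every $i$ with $\lambda_i<\lambda_{\hat d}$. Because of the gap, every index with $\lambda_i=\lambda_{\hat d}$ lies among the first $\hat d$, so $w_i=0$ for all $i>\hat d$. The constraint $\sum_i w_i=\hat d$ then gives $w_i=1$ for $i\le\hat d$. Hence $\mathrm{Col}(\symbfit Z)$ is the span of the top $\hat d$ eigenvectors, which is exactly $\mathrm{Col}(\symbfit U_{Q\hat d})$. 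Both $\symbfit Z$ and $\symbfit U_{Q\hat d}$ are orthonormal bases of the same subspace, so $\symbfit O:=\symbfit U_{Q\hat d}^\top\symbfit Z$ is orthogonal and satisfies $\symbfit Z=\symbfit U_{Q\hat d}\symbfit O$. I expect the main obstacle to be stating this equality analysis cleanly, in particular handling ties among the $\lambda_i$ at positions up to $\hat d$. Ties there are harmless, because the top-$\hat d$ eigenspace stays well defined whenever the gap at position $\hat d$ holds.
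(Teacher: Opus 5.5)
Your proposal is correct and follows essentially the same route as the paper: you reduce the joint problem, via the minimum-norm least-squares solution $(\tilde{\symbfit A}^{(k)})^\dagger\symbfit Z$, to maximizing $\mathrm{tr}(\symbfit Z^\top\symbfit M\symbfit Z)$ over orthonormal $\symbfit Z$ with $\symbfit M=\symbfit W_Q\symbfit W_Q^\top$, and then apply the Ky Fan maximum principle. The difference is that you prove what the paper only cites or asserts, namely the Ky Fan bound (via the diagonal weights $w_i\in[0,1]$ with $\sum_i w_i=\hat d$) and the equality analysis under the gap $\lambda_{\hat d}>\lambda_{\hat d+1}$ that gives the characterization up to rotation, and both additions are sound.
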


Theorem~\ref{thm:linear-integration-solution} follows from the above reduction,
the identity $\symbfit M=\symbfit W_Q\symbfit W_Q^\top$, and the Ky Fan maximum
principle.
Thus, the proposed method constructs the target representation from the
dominant left singular subspace of $\symbfit W_Q$, or equivalently from the
dominant eigenspace of the sum-of-projectors matrix $\symbfit M$.
We refer to this linear integration method as Linear Target-Normalized
Integration (LTI).
Algorithm~\ref{alg:lti} summarizes the resulting procedure for constructing LTI functions.

\begin{algorithm}[t]
\caption{Construction of LTI functions}
\label{alg:lti}
\begin{algorithmic}[1]
\Require Intermediate representations of the anchor dataset $\{\tilde{\symbfit A}^{(k)}\}_{k=1}^K$
\Ensure Target representation $\symbfit Z^\star$ and integration functions $\{g_k^\star\}_{k=1}^K$

\For{$k=1,\ldots,K$}
    \State Compute an orthonormal basis $\symbfit Q^{(k)}\in\mathbb R^{n_{\mathrm{a}}\times r(k)}$ of $\mathrm{Col}(\tilde{\symbfit A}^{(k)})$.
\EndFor
\State Construct $\symbfit W_Q\gets[\symbfit Q^{(1)},\ldots,\symbfit Q^{(K)}]$.
\State Compute $\symbfit Z^\star\gets\symbfit U_{Q\hat{d}}$, the matrix of the $\hat d$ left singular vectors of $\symbfit W_Q$ corresponding to its $\hat d$ largest singular values.
\For{$k=1,\ldots,K$}
    \State Compute $\symbfit G^{(k)\star}\gets(\tilde{\symbfit A}^{(k)})^\dagger\symbfit Z^\star$.
    \State Define $g_k^\star(\symbfit x)\gets\symbfit x\symbfit G^{(k)\star}$.
\EndFor
\end{algorithmic}
\end{algorithm}

\subsection{Kernel-based Target-Normalized Integration}

The linear formulation in Eq.~\eqref{eq:linear-formulation} clarifies the optimization structure of the target representation.
From a data confidentiality perspective, however, it is desirable to integrate intermediate representations obtained via nonlinear dimensionality reduction.
We therefore extend LTI to a nonlinear integration method by kernelizing the integration function using reproducing kernel Hilbert spaces (RKHSs).

For each party $k\in[K]$, let $\mathcal{H}_k$ denote the RKHS associated with the reproducing kernel
\begin{equation}
\label{eq:kernel-definition}
\kappa_k:
\mathbb{R}^{1\times\tilde d(k)}
\times
\mathbb{R}^{1\times\tilde d(k)}
\to
\mathbb{R} .
\end{equation}
That is, $\mathcal{H}_k$ is a Hilbert space of functions on $\mathbb{R}^{1\times\tilde d(k)}$ in which $\kappa_k$ acts as the reproducing kernel: for every $f\in\mathcal{H}_k$ and $\symbfit{x}\in\mathbb{R}^{1\times\tilde d(k)}$, $f(\symbfit{x})=\langle f,\kappa_k(\cdot,\symbfit{x})\rangle_{\mathcal{H}_k}$.
The linear formulation in Eq.~\eqref{eq:linear-formulation} represents each integration function by the linear term $\tilde{\symbfit A}^{(k)}\symbfit G^{(k)}$.
To handle intermediate representations obtained through nonlinear dimensionality reduction, we replace this linear term with a nonlinear function $g_k(\tilde{\symbfit A}^{(k)})$ defined in an RKHS and add an RKHS norm penalty to control the complexity of $g_k$.
Because the collaboration representation is $\hat d$-dimensional, we let $g_k$ take values in the product space $\mathcal{H}_k^{\hat d}$, whose norm is given by the sum of the squared $\mathcal{H}_k$-norms of its $\hat d$ coordinate functions.
The nonlinear integration problem is then formulated as follows:
\begin{equation}
\label{eq:kernel-formulation}
\begin{aligned}
\min \quad
&\sum_{k=1}^{K}
\left(
\left\|
g_k(\tilde{\symbfit{A}}^{(k)})
-
\symbfit{Z}
\right\|_{\mathrm{F}}^2
+
\lambda
\|g_k\|_{\mathcal{H}_k^{\hat d}}^2
\right)
\\[2mm]
\text{s.~t.} \quad
&g_k\in\mathcal{H}_k^{\hat d}\quad(k\in[K]), \\
&\symbfit{Z}^{\top}\symbfit{Z}=\symbfit{I}_{\hat d}, \\
&\symbfit{Z}\in\mathbb{R}^{n_{\mathrm{a}}\times\hat d},
\end{aligned}
\end{equation}
where $\lambda>0$ is a regularization parameter controlling the RKHS norm penalty.
The problem in Eq.~\eqref{eq:kernel-formulation} is solved by an analogous two-step reduction.
To state this reduction, define the kernel matrix for each $k\in[K]$ as
\begin{equation}
\label{eq:kernel-matrix-definition}
\symbfit \Sigma_k
=
(\kappa_k(\tilde{\symbfit A}^{(k)}_{i,:},\tilde{\symbfit A}^{(k)}_{i',:}))_{(i,i') \in [n_{\mathrm{a}}] \times [n_{\mathrm{a}}]}
\in\mathbb R^{n_{\mathrm a}\times n_{\mathrm a}},
\end{equation}
For any $\symbfit x\in\mathbb R^{1\times\tilde d(k)}$, define
\begin{equation}
\label{eq:kernel-vector-definition}
\symbfit\kappa_k(\symbfit x)
=
\bigl(
\kappa_k(\symbfit x,\tilde{\symbfit A}^{(k)}_{1,:}),
\ldots,
\kappa_k(\symbfit x,\tilde{\symbfit A}^{(k)}_{n_{\mathrm{a}},:})
\bigr)
\in\mathbb R^{1\times n_{\mathrm{a}}}.
\end{equation}
Finally, let
\begin{equation}
\label{eq:Sk-Mlambda-definition}
\symbfit S_k \coloneqq (\symbfit \Sigma_k+\lambda\symbfit I_{n_{\mathrm{a}}})^{-1},
\qquad
\symbfit M_\lambda \coloneqq \lambda\sum_{k=1}^K \symbfit S_k .
\end{equation}

For fixed $\symbfit Z$, each integration function is obtained by solving
\begin{equation}
\label{eq:kernel-fixed-Z-problem}
\min_{g_k\in\mathcal{H}_k^{\hat{d}}}
\left\|
g_k(\tilde{\symbfit A}^{(k)})
-
\symbfit Z
\right\|_{\mathrm F}^2
+
\lambda
\|g_k\|_{\mathcal{H}_k^{\hat{d}}}^2 .
\end{equation}
Because $\lambda>0$, this problem has a unique minimizer in
$\mathcal{H}_k^{\hat d}$.
In the representer form, we use the coefficient matrix
\[
\symbfit\Gamma_k^\star=\symbfit S_k\symbfit Z,
\]
which yields
\begin{equation}
\label{eq:gk-vector-solution}
g_k^\star(\symbfit x)
=
\symbfit\kappa_k(\symbfit x)\symbfit S_k\symbfit Z .
\end{equation}
Second, substituting these optimal integration functions into Eq.~\eqref{eq:kernel-formulation} reduces the problem over $\symbfit Z$ to the constrained trace minimization
\begin{equation}
\label{eq:reduced_KNI}
\begin{aligned}
\min \quad
&\mathrm{tr}(\symbfit Z^\top\symbfit M_\lambda\symbfit Z)
\\[2mm]
\textnormal{s.~t.} \quad
&\symbfit Z^\top\symbfit Z=\symbfit I_{\hat{d}}, \\
&\symbfit{Z}\in\mathbb{R}^{n_{\mathrm{a}}\times\hat{d}}.
\end{aligned}
\end{equation}

The following theorem summarizes the two-step reduction of the kernel-based target-normalized integration problem.

\begin{theorem}[Reduction of the Kernel-based Target-Normalized Integration Problem]
\label{thm:kernel-reduction}
Assume that $\hat{d}\le n_{\mathrm{a}}$.
The problem in Eq.~\eqref{eq:kernel-formulation} is reduced to the constrained trace minimization problem in Eq.~\eqref{eq:reduced_KNI}.
Moreover, for fixed $\symbfit Z$, the optimal integration function is given by Eq.~\eqref{eq:gk-vector-solution}.
\end{theorem}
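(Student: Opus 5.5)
The plan has two steps. First solve the inner problem for each $g_k$ with $\symbfit Z$ fixed. Then substitute the optimal value back into Eq.~\eqref{eq:kernel-formulation} and simplify it to a trace in $\symbfit Z$.

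\textbf{Step 1 (fixed $\symbfit Z$).} Since the norm on $\mathcal H_k^{\hat d}$ is the sum of coordinate norms, and the Frobenius loss is the sum of column losses, Eq.~\eqref{eq:kernel-fixed-Z-problem} splits into $\hat d$ independent scalar kernel ridge problems. The $j$-th one is
\[
\min_{f\in\mathcal H_k}\sum_{i}\bigl(f(\tilde{\symbfit A}^{(k)}_{i,:})-Z_{ij}\bigr)^2+\lambda\|f\|_{\mathcal H_k}^2 .
\]
\begin{itemize}
\item Decompose $f=f_\parallel+f_\perp$, where $f_\parallel$ lies in the span of $\{\kappa_k(\cdot,\tilde{\symbfit A}^{(k)}_{i,:})\}$. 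The reproducing property gives $f_\perp(\tilde{\symbfit A}^{(k)}_{i,:})=0$, so $f_\perp$ does not change the loss and only increases the penalty. This is the representer theorem: every minimizer has the form $f=\sum_i \gamma_i\kappa_k(\cdot,\tilde{\symbfit A}^{(k)}_{i,:})$.
\item The problem then becomes $\min_{\symbfit\gamma}\|\symbfit\Sigma_k\symbfit\gamma-\symbfit z\|^2+\lambda\symbfit\gamma^\top\symbfit\Sigma_k\symbfit\gamma$, with $\symbfit z=\symbfit Z_{:,j}$.
\item The objective is strictly convex in $f$ because $\lambda>0$, so the minimizer is unique.
\item The first-order condition is $\symbfit\Sigma_k\bigl((\symbfit\Sigma_k+\lambda\symbfit I)\symbfit\gamma-\symbfit z\bigr)=0$, and $\symbfit\gamma=\symbfit S_k\symbfit z$ satisfies it.
\item If $\symbfit\Sigma_k$ is singular, other $\symbfit\gamma$ differ from this one by an element of $\ker\symbfit\Sigma_k$. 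Such a difference gives the zero function, since its RKHS norm is $\symbfit v^\top\symbfit\Sigma_k\symbfit v=0$. So the function is still unique.
\end{itemize}
Stacking the columns gives $\symbfit\Gamma_k^\star=\symbfit S_k\symbfit Z$ and Eq.~\eqref{eq:gk-vector-solution}.

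\textbf{Step 2 (optimal value).} Write $\symbfit\Sigma=\symbfit\Sigma_k$ and $\symbfit S=\symbfit S_k$. The fitted anchor values are $\symbfit\Sigma\symbfit S\symbfit Z$, and the residual is
\[
(\symbfit\Sigma\symbfit S-\symbfit I)\symbfit Z=-\lambda\symbfit S\symbfit Z,
\]
using $\symbfit\Sigma\symbfit S=\symbfit I-\lambda\symbfit S$. The penalty is $\lambda\,\mathrm{tr}(\symbfit Z^\top\symbfit S\symbfit\Sigma\symbfit S\symbfit Z)$. Since $\symbfit\Sigma$ and $\symbfit S$ commute and are symmetric, the two terms combine as
\[
\lambda^2\mathrm{tr}(\symbfit Z^\top\symbfit S^2\symbfit Z)+\lambda\,\mathrm{tr}(\symbfit Z^\top\symbfit S\symbfit\Sigma\symbfit S\symbfit Z)
=\lambda\,\mathrm{tr}\bigl(\symbfit Z^\top\symbfit S(\symbfit\Sigma+\lambda\symbfit I)\symbfit S\symbfit Z\bigr)
=\lambda\,\mathrm{tr}(\symbfit Z^\top\symbfit S\symbfit Z).
\]
Summing over $k$ gives $\mathrm{tr}(\symbfit Z^\top\symbfit M_\lambda\symbfit Z)$. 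The orthonormality constraint is untouched by this substitution.

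\textbf{Joint problem.} The minimum of Eq.~\eqref{eq:kernel-formulation} over $(\symbfit Z,\{g_k\})$ equals the minimum over feasible $\symbfit Z$ of the partially minimized objective. This is the standard partial-minimization argument, and the inner minimum is attained for every $\symbfit Z$. The feasible set is the Stiefel manifold, which is compact and nonempty because $\hat d\le n_{\mathrm a}$. So Eq.~\eqref{eq:reduced_KNI} attains its minimum, and the minimizers correspond exactly.

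\textbf{Main obstacle.} The delicate step is the representer and uniqueness argument when $\symbfit\Sigma_k$ is singular, because there $\symbfit\Gamma$ is not unique while the function is. I would handle it with the orthogonal-decomposition argument above, which never requires inverting $\symbfit\Sigma_k$. Only $\symbfit\Sigma_k+\lambda\symbfit I$ must be invertible, and it is positive definite since $\symbfit\Sigma_k$ is positive semidefinite and $\lambda>0$.
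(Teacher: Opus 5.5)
Your proposal is correct and follows essentially the same route as the paper's proof. Both arguments separate over $k$, reduce via the representer theorem to the finite-dimensional problem in $\symbfit\Gamma_k$, check that $\symbfit\Gamma_k^\star=\symbfit S_k\symbfit Z$ satisfies $\symbfit\Sigma_k\{(\symbfit\Sigma_k+\lambda\symbfit I_{n_{\mathrm a}})\symbfit\Gamma_k-\symbfit Z\}=\symbfit 0$, and collapse the optimal value to $\lambda\,\mathrm{tr}(\symbfit Z^\top\symbfit S_k\symbfit Z)$ using $\symbfit\Sigma_k\symbfit S_k=\symbfit I_{n_{\mathrm a}}-\lambda\symbfit S_k$. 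What you add is extra rigor the paper leaves implicit, not a different approach: you prove the representer step by orthogonal decomposition rather than citing it, show the function is unique even when $\symbfit\Sigma_k$ is singular, and give the partial-minimization/compactness argument, which is the only place $\hat d\le n_{\mathrm a}$ is actually used.
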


The proof is provided in~\ref{app:proof-kernel-reduction}.
Since each $\symbfit \Sigma_k$ is symmetric positive semidefinite and $\lambda>0$, $\symbfit S_k$ in Eq.~\eqref{eq:Sk-Mlambda-definition} is symmetric positive definite.
Therefore, $\symbfit M_\lambda$ is also symmetric positive definite.
By the Ky Fan minimum principle~\citep{ky-fan}, a globally optimal solution to Eq.~\eqref{eq:reduced_KNI} is obtained from the eigenvalue decomposition of $\symbfit M_\lambda$.
Let $\symbfit U_{\hat{d}}\in\mathbb R^{n_{\mathrm{a}}\times\hat{d}}$ be the matrix whose columns are the eigenvectors of $\symbfit M_\lambda$ corresponding to the $\hat{d}$ smallest eigenvalues.
Then, for any $\symbfit O\in\mathbf O(\hat{d})$,
\begin{equation}
\label{eq:Zstar}
\symbfit Z^\star=\symbfit U_{\hat{d}}\symbfit O
\end{equation}
is a globally optimal solution to Eq.~\eqref{eq:reduced_KNI}.
The arbitrary orthogonal matrix $\symbfit O$ reflects the rotational indeterminacy of the target representation.
When the $\hat{d}$-th smallest eigenvalue is separated from the $(\hat{d}+1)$-th smallest eigenvalue, the optimal eigenspace is unique, and this rotational indeterminacy is the only remaining ambiguity.
Using $\symbfit Z^\star$ in Eq.~\eqref{eq:gk-vector-solution}, we obtain
\begin{equation}
\label{eq:nki-optimal-function}
g_k^\star(\symbfit x)=\symbfit\kappa_k(\symbfit x)\symbfit S_k\symbfit Z^\star .
\end{equation}

Algorithm~\ref{alg:nki} summarizes the resulting procedure for constructing KTI functions.

\begin{algorithm}[t]
\caption{Construction of KTI functions}
\label{alg:nki}
\begin{algorithmic}[1]
\Require Intermediate representations of the anchor dataset $\{\tilde{\symbfit A}^{(k)}\}_{k=1}^K$, kernels $\{\kappa_k\}_{k=1}^K$, regularization parameter $\lambda>0$
\Ensure Target representation $\symbfit Z^\star$ and integration functions $\{g_k^\star\}_{k=1}^K$

\For{$k=1,\ldots,K$}
    \State Construct the kernel matrix $\symbfit \Sigma_k\in\mathbb R^{n_{\mathrm{a}}\times n_{\mathrm{a}}}$ according to Eq.~\eqref{eq:kernel-matrix-definition}.
    \State Compute $\symbfit S_k\gets(\symbfit \Sigma_k+\lambda\symbfit I_{n_{\mathrm{a}}})^{-1}$.
\EndFor
\State Construct $\symbfit M_\lambda\gets\lambda\sum_{k=1}^K \symbfit S_k$.
\State Compute $\symbfit Z^\star$ by solving the corresponding reduced problem in Eq.~\eqref{eq:reduced_KNI}.
\For{$k=1,\ldots,K$}
    \State Define $g_k^\star(\symbfit x)\gets\symbfit\kappa_k(\symbfit x)\symbfit S_k\symbfit Z^\star$.
\EndFor
\end{algorithmic}
\end{algorithm}


\subsection{Computational Complexity}
\label{sec:complexity}

We analyze computational complexity under the assumptions that each $\tilde{\symbfit A}^{(k)}$ has full column rank and that the intermediate dimension is common across parties, $\tilde d(k)=\tilde d=\hat d$.
Here, integration-function construction refers to constructing all party-wise integration functions $\{g_k\}_{k=1}^K$, whereas collaboration-representation construction refers to computing $g_k(\symbfit x)$ for one new sample $\symbfit x\in\mathbb R^{1\times\tilde d}$ of party $k$.
For existing methods, including MPP, GEP, and ODC, we use the computational complexities reported in prior work~\cite{DC-odc}.

For LTI, the dominant cost of integration-function construction is the singular value decomposition of $\symbfit W_Q\in\mathbb R^{n_{\mathrm{a}}\times K\tilde d}$, which is
\[
\mathcal{O}\!\left(\min\{n_{\mathrm{a}}(K\tilde d)^2,\;n_{\mathrm{a}}^2K\tilde d\}\right).
\]
This order coincides with MPP and GEP.
For collaboration-representation construction, LTI applies the linear map $\symbfit x\mapsto \symbfit x\symbfit G^{(k)}$, resulting in $\mathcal{O}(\tilde d^2)$ under $\tilde d=\hat d$.

For KTI, constructing all kernel matrices costs $\mathcal{O}(Kn_{\mathrm{a}}^2\tilde d)$ for standard kernel evaluation.
Computing the matrices $\symbfit S_k$ by solving $(\symbfit\Sigma_k+\lambda\symbfit I_{n_{\mathrm{a}}})\symbfit S_k=\symbfit I_{n_{\mathrm{a}}}$ for all $k\in[K]$ costs $\mathcal{O}(Kn_{\mathrm{a}}^3)$.
Solving the reduced eigenvalue problem for $\symbfit Z^\star$ costs $\mathcal{O}(n_{\mathrm{a}}^3)$.
Therefore, the dominant complexity of integration-function construction for KTI is
\[
\mathcal{O}(Kn_{\mathrm{a}}^3+Kn_{\mathrm{a}}^2\tilde d).
\]
Once $\symbfit S_k$ and $\symbfit Z^\star$ are computed, constructing the collaboration representation of one new sample requires evaluating $\symbfit\kappa_k(\symbfit x)\symbfit S_k\symbfit Z^\star$, whose dominant cost is $\mathcal{O}(n_{\mathrm{a}}\tilde d)$.

Table~\ref{tab:complexity} summarizes these results.

\begin{table}[t]
\centering
\caption{Computational complexity of integration methods.}
\label{tab:complexity}
\begin{tabularx}{\linewidth}{lXX}
\toprule
Method & Integration-function construction & Collaboration-representation construction \\
\midrule
MPP~\cite{DCframework2}
  & $\mathcal{O}\!\left(\min\{K^2 n_{\mathrm{a}}\tilde d^2,\,
    K n_{\mathrm{a}}^2\tilde d\}\right)$
  & $\mathcal{O}(\tilde d^2)$ \\
GEP~\cite{KawakamiDC}
  & $\mathcal{O}\!\left(\min\{K^2 n_{\mathrm{a}}\tilde d^2,\,
    K n_{\mathrm{a}}^2\tilde d\}\right)$
  & $\mathcal{O}(\tilde d^2)$ \\
ODC~\cite{DC-odc}
  & $\mathcal{O}(K n_{\mathrm{a}}\tilde d^2)$
  & $\mathcal{O}(\tilde d^2)$ \\
\midrule
LTI
  & $\mathcal{O}\!\left(\min\{K^2 n_{\mathrm{a}}\tilde d^2,\,
    K n_{\mathrm{a}}^2\tilde d\}\right)$
  & $\mathcal{O}(\tilde d^2)$ \\
KTI
  & $\mathcal{O}(K n_{\mathrm{a}}^3 + K n_{\mathrm{a}}^2\tilde d)$
  & $\mathcal{O}(n_{\mathrm{a}}\tilde d)$ \\
\bottomrule
\end{tabularx}
\end{table}

\subsection{Graph-Based Regularization}
\label{sec:graph-embedding}

The target representation $\symbfit Z^\star$ obtained by KTI aligns the intermediate representations of the anchor dataset across parties, but it does not explicitly exploit relational structure among anchor data instances.
To incorporate such structure, we introduce graph-based regularization into the estimation of the target representation.
Following the graph embedding framework~\cite{GraphEmbeddingExtensions}, we use two graphs: an intrinsic graph and a penalty graph. The intrinsic graph specifies pairs of anchor data instances that should be placed close to each other in the target representation, whereas the penalty graph specifies pairs that should be separated.

For each pair $(i,i')\in[n_{\mathrm a}]\times[n_{\mathrm a}]$ of anchor instances, let $w^{(B)}_{ii'}$ and $w^{(C)}_{ii'}$ be the weights of the intrinsic graph and the penalty graph, respectively.
Specific constructions of these weights are given in Section~\ref{sec:graph-construction}.
The corresponding weight matrices are defined as follows:
\begin{equation}
\label{eq:graph-weight-matrices}
\symbfit W^{(B)}:=[w^{(B)}_{ii'}]_{(i,i')\in[n_{\mathrm a}]\times[n_{\mathrm a}]}\in\mathbb R^{n_{\mathrm a}\times n_{\mathrm a}},
\quad
\symbfit W^{(C)}:=[w^{(C)}_{ii'}]_{(i,i')\in[n_{\mathrm a}]\times[n_{\mathrm a}]}\in\mathbb R^{n_{\mathrm a}\times n_{\mathrm a}}.
\end{equation}
Let $\symbfit D^{(B)}$ and $\symbfit D^{(C)}$ be the corresponding degree matrices, that is, diagonal matrices whose diagonal entries are defined by
\[
(\symbfit D^{(B)})_{ii}
=
\sum_{i'=1}^{n_{\mathrm{a}}} w^{(B)}_{ii'},
\qquad
(\symbfit D^{(C)})_{ii}
=
\sum_{i'=1}^{n_{\mathrm{a}}} w^{(C)}_{ii'}.
\]
Using these matrices, we define the intrinsic-graph and penalty-graph Laplacians as follows:
\begin{equation}
\label{eq:graph-laplacian-definition}
\symbfit{B}:=\symbfit{D}^{(B)}-\symbfit{W}^{(B)}\in\mathbb{R}^{n_{\mathrm{a}}\times n_{\mathrm{a}}},
\qquad
\symbfit{C}:=\symbfit{D}^{(C)}-\symbfit{W}^{(C)} \in\mathbb{R}^{n_{\mathrm{a}}\times n_{\mathrm{a}}}.
\end{equation}
The following standard identity for graph Laplacians, as used in Laplacian eigenmaps~\cite{LaplacianEigenmaps}, relates the Laplacian quadratic forms to pairwise distances:
\begin{equation}
\label{eq:laplacian-distance-identity}
\begin{aligned}
\mathrm{tr}(\symbfit{Z}^\top \symbfit{B} \symbfit{Z})
&=\frac{1}{2}\sum_{i,i'} w^{(B)}_{ii'}\|\symbfit{Z}_{i,:}-\symbfit{Z}_{i',:}\|_2^2, \\
\mathrm{tr}(\symbfit{Z}^\top \symbfit{C} \symbfit{Z})
&=\frac{1}{2}\sum_{i,i'} w^{(C)}_{ii'}\|\symbfit{Z}_{i,:}-\symbfit{Z}_{i',:}\|_2^2 .
\end{aligned}
\end{equation}
Starting from the reduced KTI problem in Eq.~\eqref{eq:reduced_KNI}, we incorporate graph embedding as follows:
\begin{equation}
\label{eq:kernel-graph-reduced-problem}
\begin{aligned}
\min \quad
&\mathrm{tr}\!\left(
\symbfit Z^\top(\symbfit M_\lambda+\mu\symbfit B)\symbfit Z
\right)
\\[2mm]
\text{s.~t.} \quad
&\symbfit Z^\top\symbfit C\symbfit Z=\symbfit I_{\hat{d}}, \\
&\symbfit{Z}\in\mathbb{R}^{n_{\mathrm{a}}\times\hat{d}}.
\end{aligned}
\end{equation}
Here, $\mu\ge0$ controls the strength of graph regularization.
Compared with Eq.~\eqref{eq:reduced_KNI}, Eq.~\eqref{eq:kernel-graph-reduced-problem} adds the intrinsic-graph Laplacian to the objective and uses the penalty-graph Laplacian in the constraint.
By Eq.~\eqref{eq:laplacian-distance-identity}, the added objective term penalizes weighted pairwise distances between rows of $\symbfit Z$ according to $w^{(B)}_{ii'}$; hence, pairs with larger intrinsic-graph weights are encouraged to be closer.
In contrast, the penalty-graph constraint fixes the total weighted pairwise distance associated with $w^{(C)}_{ii'}$, since $\mathrm{tr}(\symbfit Z^\top\symbfit C\symbfit Z)=\hat{d}$.

By the Lagrange multiplier method, Eq.~\eqref{eq:kernel-graph-reduced-problem} leads to the generalized eigenvalue problem
\begin{equation}
\label{eq:kernel-graph-gep}
(\symbfit{M}_\lambda+\mu\symbfit{B})\symbfit{u}
=
\gamma\,\symbfit{C}\symbfit{u} .
\end{equation}
If $\symbfit C$ is singular, we replace it with $\symbfit C_\varepsilon=\symbfit C+\varepsilon\symbfit I_{n_{\mathrm{a}}}$ for $\varepsilon>0$ and solve the corresponding regularized problem.
As in the derivation of Theorem~\ref{thm:kernel-reduction}, a globally optimal target representation $\symbfit Z^\star$ is obtained from the $\hat{d}$ generalized eigenvectors of Eq.~\eqref{eq:kernel-graph-gep} associated with the smallest generalized eigenvalues (with $\symbfit C$ replaced by $\symbfit C_\varepsilon$ in the regularized case), up to the same rotational indeterminacy.
Because the graph terms modify only the reduced problem over $\symbfit Z$, the fixed-$\symbfit Z$ optimization over $g_k$ is unchanged.
Thus, using $\symbfit Z^\star$ in Eq.~\eqref{eq:gk-vector-solution} yields the graph-regularized integration function.

\subsection{Graph Construction}
\label{sec:graph-construction}

This subsection describes how to construct the intrinsic-graph weights $w_{ii'}^{(B)}$ and penalty-graph weights $w_{ii'}^{(C)}$ used in Eq.~\eqref{eq:graph-weight-matrices}.
Since the analyst has access only to the intermediate representations of the anchor dataset, graph weights are constructed party-wise from $\tilde{\symbfit A}^{(k)}$ and then aggregated across parties.

For each party $k\in[K]$ and each pair $i,i'\in[n_{\mathrm{a}}]$, let $\hat w_{ii'}^{(Bk)}$ and $\hat w_{ii'}^{(Ck)}$ denote per-party raw weights.
To make the resulting Laplacians symmetric, we first symmetrize these weights:
\begin{equation}
\label{eq:graph-weight-symmetrization}
w_{ii'}^{(Bk)}\coloneqq \frac{1}{2}\!\left(\hat{w}_{ii'}^{(Bk)}+\hat{w}_{i'i}^{(Bk)}\right),
\qquad
w_{ii'}^{(Ck)}\coloneqq \frac{1}{2}\!\left(\hat{w}_{ii'}^{(Ck)}+\hat{w}_{i'i}^{(Ck)}\right).
\end{equation}
The party-wise weights are then averaged as
\begin{equation}
\label{eq:graph-weight-aggregation}
w_{ii'}^{(B)}\coloneqq \frac{1}{K}\sum_{k=1}^{K} w_{ii'}^{(Bk)},
\qquad
w_{ii'}^{(C)}\coloneqq \frac{1}{K}\sum_{k=1}^{K} w_{ii'}^{(Ck)} .
\end{equation}
These aggregated weights define $\symbfit B$ and $\symbfit C$ through Eq.~\eqref{eq:graph-laplacian-definition}.

Let $\mathcal{N}_{k_{\mathrm{nn}}}^{(k)}(i)$ denote the $k_{\mathrm{nn}}$-nearest-neighbor set of anchor instance $i$ in $\tilde{\symbfit A}^{(k)}$.
When target variables are assigned to anchor instances, we denote them by $y_{\mathrm{a}i}$.
The graph construction uses only the intermediate representations and target variables of anchor instances, and does not require access to original private datasets.

In the main text, we define two intrinsic-graph choices used in the experiments: the geometric Laplacian (GL) and the target-similarity Laplacian (TSL).
When no penalty graph is specified, we set $\symbfit C=\symbfit I_{n_{\mathrm{a}}}$.
Other graph choices, including target-dissimilarity and regression variants, are described in~\ref{app:graph-construction}.

\paragraph{Geometric Laplacian (GL)}
GL uses nearest-neighbor relationships in the intermediate representations:
\begin{equation}
\label{eq:gl-weight}
\hat w_{ii'}^{(Bk)}=
\begin{cases}
1 & \text{if } i'\in\mathcal N_{k_{\mathrm{nn}}}^{(k)}(i),\\
0 & \text{otherwise.}
\end{cases}
\end{equation}

\paragraph{Target-Similarity Laplacian (TSL)}
For classification tasks, TSL keeps nearest-neighbor anchor instances with the same target variable close:
\begin{equation}
\label{eq:tsl-class-weight}
\hat w_{ii'}^{(Bk)}=
\begin{cases}
1 & \text{if } i'\in\mathcal N_{k_{\mathrm{nn}}}^{(k)}(i),\ y_{\mathrm{a}i}=y_{\mathrm{a}i'},\\
0 & \text{otherwise.}
\end{cases}
\end{equation}

\subsection{Centering Constraint}
\label{sec:centering}

The constraint $\symbfit{Z}^\top\symbfit{C}\symbfit{Z}=\symbfit{I}_{\hat d}$ in Eq.~\eqref{eq:kernel-graph-reduced-problem} fixes the scale and orthogonality of the target representation but does not directly constrain its column mean $\symbfit{1}_{n_{\mathrm a}}^\top\symbfit Z$.
Consequently, a redundant constant-direction component can remain in the solution.
This component corresponds to a degree of freedom in the origin of the embedding and may reduce the interpretability and comparability of the target representation.

To remove this component, we impose the following centering constraint:
\begin{equation}
\label{eq:centering}
\symbfit 1_{n_{\mathrm{a}}}^\top\symbfit Z=\symbfit 0_{1\times\hat{d}},
\end{equation}
where $\symbfit 0_{1\times\hat{d}}=[0,\ldots,0]\in\mathbb{R}^{1\times\hat{d}}$.

For the centered formulation, we assume that $\hat d\le n_{\mathrm a}-1$ because the feasible subspace orthogonal to $\symbfit 1_{n_{\mathrm a}}$ has dimension $n_{\mathrm a}-1$.
The problem in Eq.~\eqref{eq:kernel-graph-reduced-problem} is then extended as follows:
\begin{equation}
\label{eq:kernel-mean-zero-reduced-problem}
\begin{aligned}
\min \quad
&\mathrm{tr}\!\left(
\symbfit Z^\top(\symbfit M_\lambda+\mu\symbfit B)\symbfit Z
\right)
\\[2mm]
\text{s.~t.} \quad
&\symbfit Z^\top\symbfit C\symbfit Z=\symbfit I_{\hat d},\\
&\symbfit 1_{n_{\mathrm a}}^\top\symbfit Z=\symbfit 0_{1\times\hat d},\\
&\symbfit{Z}\in\mathbb{R}^{n_{\mathrm{a}}\times\hat d}.
\end{aligned}
\end{equation}

The second constraint in Eq.~\eqref{eq:kernel-mean-zero-reduced-problem}, namely Eq.~\eqref{eq:centering}, is equivalent to requiring that each column of $\symbfit Z$ belongs to
\begin{equation}
\label{eq:centered-subspace}
\mathcal{U}=\left\{\symbfit{x}\in\mathbb{R}^{n_{\mathrm{a}}}\mid\symbfit{1}_{n_{\mathrm{a}}}^\top\symbfit{x}=0\right\} .
\end{equation}
Let $\symbfit{T}\in\mathbb{R}^{n_{\mathrm{a}}\times(n_{\mathrm{a}}-1)}$ be a matrix whose columns form an orthonormal basis for $\mathcal{U}$, i.e., $\symbfit{1}_{n_{\mathrm{a}}}^\top\symbfit{T}=\symbfit{0}$ and $\symbfit{T}^\top\symbfit{T}=\symbfit{I}_{n_{\mathrm{a}}-1}$.
Then any $\symbfit Z$ feasible for Eq.~\eqref{eq:kernel-mean-zero-reduced-problem} can be represented as
\begin{equation}
\label{eq:Z-TY}
\symbfit Z=\symbfit T\symbfit Y,
\qquad
\symbfit Y\in\mathbb R^{(n_{\mathrm a}-1)\times\hat d} .
\end{equation}
Substituting Eq.~\eqref{eq:Z-TY} into Eq.~\eqref{eq:kernel-mean-zero-reduced-problem}, we obtain
\begin{equation}
\label{eq:reduced-Y}
\begin{aligned}
\min \quad
&\mathrm{tr}\!\left(
\symbfit Y^\top\tilde{\symbfit M}\symbfit Y
\right)
\\[2mm]
\text{s.~t.} \quad
&\symbfit Y^\top\tilde{\symbfit C}\symbfit Y=\symbfit I_{\hat{d}},\\
&\symbfit{Y}\in\mathbb{R}^{(n_{\mathrm{a}}-1)\times\hat{d}},
\end{aligned}
\end{equation}
where the reduced matrices are defined as
$\tilde{\symbfit{M}}=\symbfit{T}^\top(\symbfit{M}_\lambda+\mu\symbfit{B})\symbfit{T},\;
\tilde{\symbfit{C}}=\symbfit{T}^\top\symbfit{C}\symbfit{T}.$

The reduced problem in Eq.~\eqref{eq:reduced-Y} is solved as a generalized eigenvalue problem, with the same $\varepsilon$-regularization applied when necessary.
Taking the generalized eigenvectors corresponding to the $\hat{d}$ smallest generalized eigenvalues gives an optimal $\symbfit Y^\star$.
The centered target representation is then recovered as $\symbfit Z^\star=\symbfit T\symbfit Y^\star$, and the centered integration function $g_k^\star$ is obtained from Eq.~\eqref{eq:gk-vector-solution}.
We refer to the centered variant without graph regularization as KTI+Center and the centered graph-regularized variant as KTI+Graph+Center.

The graph-regularized and centered variants have the same leading asymptotic complexity as KTI.
They add only preprocessing terms and replace the reduced eigenvalue problem with a generalized eigenvalue problem of the same size.
The per-sample collaboration-representation construction cost is also unchanged, because new instances are still transformed using Eq.~\eqref{eq:nki-optimal-function}.

\paragraph{Illustrative visualization of collaboration representations}
To illustrate the effects of graph-based regularization and the centering constraint on collaboration representations, Fig.~\ref{fig:graph-center-illustration} shows a toy example with three classes, where color of each point indicates its target label.
Fig.~\ref{fig:graph-center-illustration}(a) shows an example of intermediate representations of instances in the anchor dataset for one party.
Figs.~\ref{fig:graph-center-illustration}(b)--(d) show collaboration representations obtained by KTI, KTI+TSL, and KTI+TSL+Center, respectively.
KTI nonlinearly integrates intermediate representations into a common feature space, but it does not directly use neighborhood relationships or target-variable information; therefore, geometric or class structures may not be sufficiently preserved in the resulting collaboration representations.
In contrast, Fig.~\ref{fig:graph-center-illustration}(c) shows that adding TSL as graph regularization tends to place neighboring instances with the same target label close to each other in the collaboration representation.
Furthermore, Fig.~\ref{fig:graph-center-illustration}(d) shows that the centering constraint removes a redundant constant-direction component.

\begin{figure*}[t]
  \centering
  \begin{subfigure}{0.24\linewidth}
    \centering
    \includegraphics[width=\linewidth]{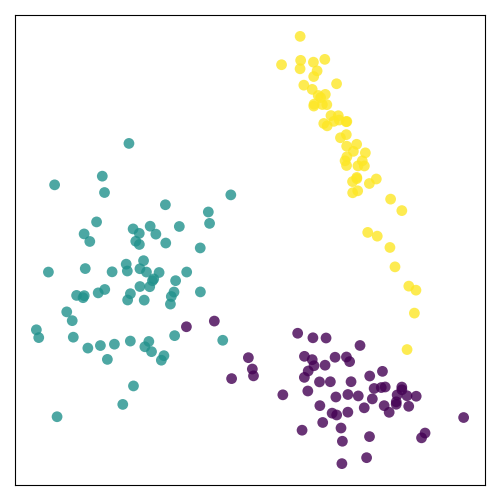}
    \caption{Intermediate}
  \end{subfigure}
  \hfill
  \begin{subfigure}{0.24\linewidth}
    \centering
    \includegraphics[width=\linewidth]{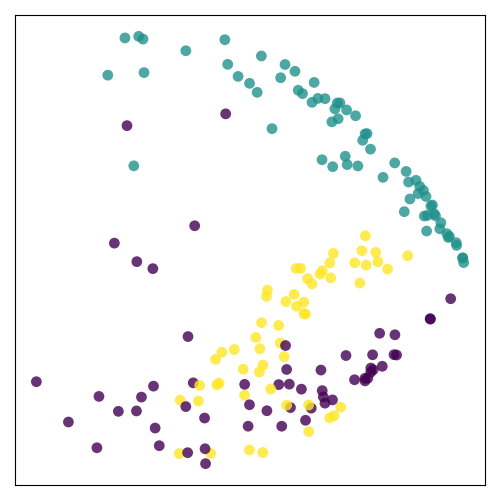}
    \caption{KTI}
  \end{subfigure}
  \hfill
  \begin{subfigure}{0.24\linewidth}
    \centering
    \includegraphics[width=\linewidth]{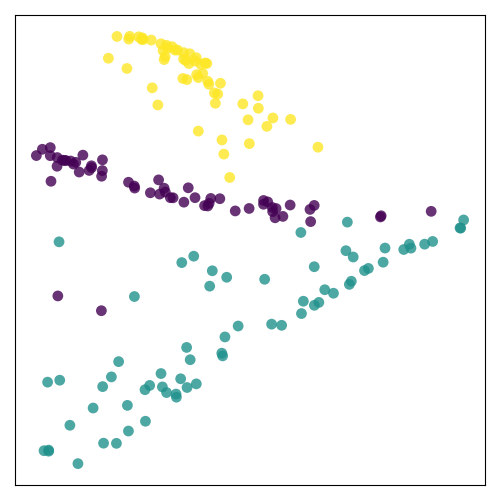}
    \caption{KTI+TSL}
  \end{subfigure}
  \hfill
  \begin{subfigure}{0.24\linewidth}
    \centering
    \includegraphics[width=\linewidth]{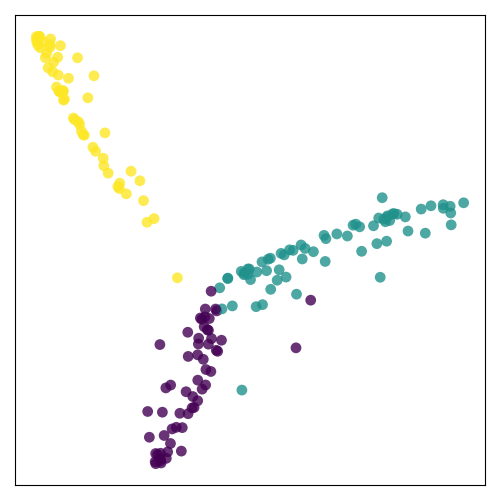}
    \caption{KTI+TSL+Center}
  \end{subfigure}
  \caption{Illustrative visualization of KTI-based collaboration representations.}
  \label{fig:graph-center-illustration}
\end{figure*}

\section{Experiments}
\label{sec:experiments}

%

This section evaluates the proposed method through four research questions.
\begin{itemize}
    \item \textbf{RQ1}: Does KTI improve classification accuracy compared with existing DC integration methods?
    \item \textbf{RQ2}: How do the proposed extensions, namely graph regularization and centering, contribute to classification accuracy?
    \item \textbf{RQ3}: How do anchor size and anchor quality affect classification accuracy and computation time?
    \item \textbf{RQ4}: How do dimensionality reduction methods and intermediate dimensions affect the trade-off between classification accuracy and reconstruction risk?
\end{itemize}
The following experiments focus on classification tasks.
As a supplementary evaluation,~\ref{app:regression-evaluation} reports regression experiments on two additional datasets.

\subsection{Datasets and Common Experimental Setup}
\label{subsec:experimental_setup}

\paragraph{Datasets and partitioning}
Unless otherwise stated, all experiments used the default settings summarized in Table~\ref{tab:default-settings}.
Settings that differed from the defaults were described at the beginning of each experiment.
We used MNIST~\cite{MNIST} and Fashion-MNIST~\cite{FASHION}, both of which are 10-class classification datasets consisting of $28\times28$ grayscale images with input dimension $d=784$.
Input features were normalized to $[0,1]$ by dividing each pixel value by 255.
Let $N$ denote the total number of training instances.
In each trial, the training instances were randomly partitioned into $K$ parties with equal size, so that each party had $n(k)=N/K$ training instances for all $k\in[K]$.
The test dataset was shared across all parties and had size $N$.

\paragraph{Dimensionality reduction and anchor dataset}
Unless otherwise stated, each party used UMAP~\cite{UMAP} for dimensionality reduction, and the intermediate dimension was common across parties, i.e., $\tilde d(k)=\tilde d$ for all $k\in[K]$.
To introduce heterogeneity among parties, we varied the UMAP hyperparameters across parties.
The distance metric was assigned by party index: parties with $k \bmod 3 = 0$, $1$, and $2$ used \texttt{correlation}, \texttt{cosine}, and \texttt{euclidean}, respectively.
The \texttt{n\_neighbors} parameter was sampled uniformly from $\{2,\ldots,7\}$, and \texttt{min\_dist} was sampled uniformly from $[0.0, 0.8)$.
For all integration methods, the integrated dimension was set to $\hat d=\tilde d$.

Following the prior report that using SMOTE for anchor dataset generation can improve DC analysis performance~\cite{DC-SMOTE}, we generated anchor datasets using SMOTE.
Here, $n_{\mathrm a}^{\mathrm{smote}}$ denotes the number of real instances used as SMOTE source instances, and $n_{\mathrm a}$ denotes the total anchor dataset size.
The source instances were disjoint from the training and test data and were included in the anchor dataset.
The anchor dataset was class-balanced, with $n_{\mathrm a}/10$ instances per class.
SMOTE was applied only between source instances sharing the same target variable, and each generated anchor instance inherited that target variable.
The neighborhood size for SMOTE was set to $k_{\mathrm{nn}}$.

\paragraph{Evaluation protocol and model settings}
The random seed for each trial was set to the trial index $r$ and was applied to data partitioning, UMAP embedding, SMOTE-based anchor generation, and Random Forest classifier training.
We conducted independent trials for each experimental condition; the number of trials $N_{\mathrm{seed}}$ is specified separately in each experiment.
Classification accuracy was evaluated as the mean test accuracy across all parties, where each party trained a classifier on its own collaboration representations.
The following methods and baselines were used in the experiments, with the specific subset stated in each experiment:
\begin{itemize}
    \item \textbf{Central}: All party data are centrally aggregated to train a single model; this method ignores privacy and serves as an idealized upper bound.
    \item \textbf{Local}: Each party trains a separate model on its own data only; this serves as a non-collaborative baseline.
    \item \textbf{DC-MPP}: Integration based on the minimum perturbation problem~\cite{DCframework2}.
    \item \textbf{DC-GEP}: Integration based on the generalized eigenvalue problem~\cite{KawakamiDC}.
    \item \textbf{DC-ODC}: Integration based on the orthogonal Procrustes problem~\cite{DC-odc}.
    \item \textbf{DC-LTI}: The proposed linear target-normalized integration based on Eq.~\eqref{eq:linear-formulation}.
    \item \textbf{DC-KTI}: The proposed kernel-based target-normalized integration based on Eq.~\eqref{eq:kernel-formulation}.
\end{itemize}
We used \texttt{RandomForestClassifier} with default settings from scikit-learn (version 1.6.1)~\cite{sklearn}, and results were reported as the mean and 95\% confidence interval over independent trials.
As the integration kernel $\kappa_k$ in KTI, we used the RBF kernel
\[
\kappa_k(\symbfit x,\symbfit x')
=
\exp\!\left(
-\gamma_k \|\symbfit x-\symbfit x'\|_2^2
\right).
\]
The RBF kernel parameter was shared across all parties, i.e., $\gamma_k=\gamma$ for all $k\in[K]$.
The regularization parameter $\lambda$, the graph regularization parameter $\mu$, and the RBF kernel parameter $\gamma$ were fixed throughout all trials without validation-based tuning.

\paragraph{Implementation environment}
All experiments were conducted on a MacBook Air with Apple M4 (32\,GB memory, macOS 15.7.2, arm64).
All methods were implemented in Python 3.9.6 with NumPy 1.26.4, pandas 2.3.2, SciPy 1.13.1, scikit-learn 1.6.1, and umap-learn 0.5.9.post2.
To eliminate variability from parallelism and ensure a fair runtime comparison across methods, BLAS was fixed to single-threaded execution by setting \texttt{OMP\_NUM\_THREADS}, \texttt{OPENBLAS\_NUM\_THREADS}, and \texttt{MKL\_NUM\_THREADS} to 1.

\begin{table}[t]
\centering
\caption{Default experimental settings.}
\label{tab:default-settings}
\setlength{\tabcolsep}{5pt}
\renewcommand{\arraystretch}{1.05}
\begin{tabular}{ll}
\toprule
Item & Default setting \\
\midrule
Datasets & MNIST, Fashion-MNIST \\
Task & 10-class classification \\
Feature dimension & $d=784$ \\
Feature normalization & Pixel values scaled to $[0,1]$ \\
Number of parties & $K=10$ \\
Total training instances & $N=1000$ \\
Training instances per party & $n(k)=N/K=100$ \\
Test instances & $N$ common instances \\
Dimensionality reduction & UMAP \\
Intermediate dimension & $\tilde d=10$ \\
Integrated dimension & $\hat d=\tilde d$ \\
Anchor size & $n_{\mathrm a}=1000$ \\
SMOTE source instances & $n_{\mathrm a}^{\mathrm{smote}}=100$ \\
Anchor class balance & $n_{\mathrm a}/10$ instances per class \\
Classifier & RandomForestClassifier default settings \\
Kernel & RBF kernel with $\gamma=1$ \\
Regularization & $\lambda=1$, $\mu=1$ \\
SMOTE/graph neighborhood size & $k_{\mathrm{nn}}=10$ \\
\bottomrule
\end{tabular}
\end{table}

\subsection{Comparison with Existing Integration Methods}
\label{subsec:rq1_existing_methods}

We first evaluated whether the proposed DC-KTI outperformed existing linear integration methods.

\paragraph{Experimental setup}
The number of parties $K$ was varied over $\mathcal{K}=\{2,4,8,16,32,64\}$, and the number of trials per condition was $N_{\mathrm{seed}}=100$.
For each value of $K$, the number of training instances per party was fixed at $n(k)=100$; hence, the total number of training instances was $N=100K$.
This design fixed each party's own data budget across $K$, so that we could evaluate whether classification accuracy improved as the number of parties, and thus the amount of data usable for the collaborative analysis, increased.
We compared Central, Local, DC-MPP, DC-GEP, DC-ODC, DC-LTI, and DC-KTI.

\paragraph{Results}
Fig.~\ref{fig:num-existing} shows the classification accuracy as a function of the number of parties $K$.
Central was also evaluated as an idealized upper bound but was excluded from Fig.~\ref{fig:num-existing} to preserve readability.
The mean accuracy of Central ranged from 0.680 to 0.941 on MNIST and from 0.656 to 0.844 on Fashion-MNIST as $K$ increased from 2 to 64.

\begin{figure}[t]
  \centering
  \begin{subfigure}{0.49\linewidth}
    \centering
    \includegraphics[width=\linewidth]{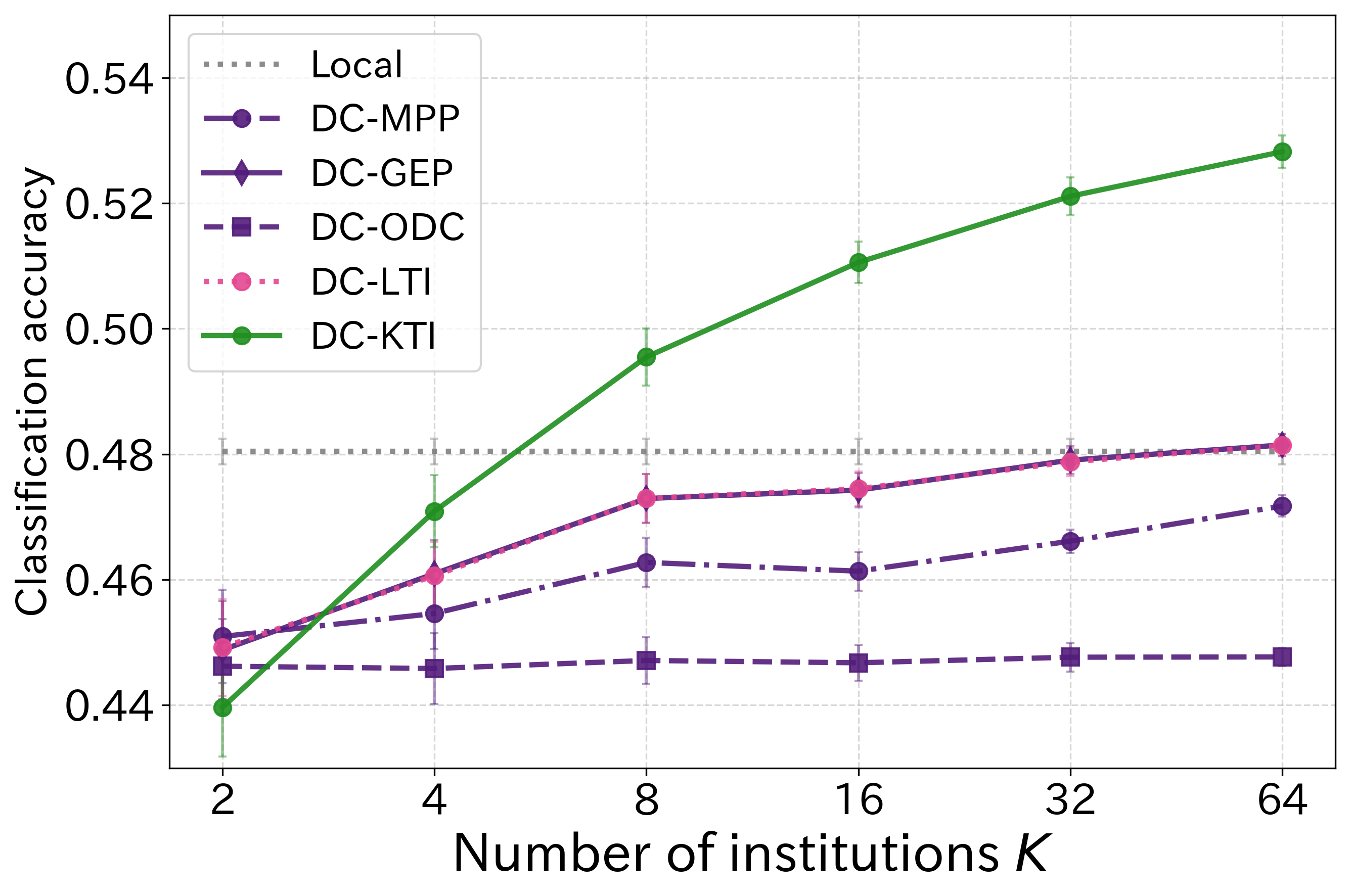}
    \caption{MNIST}
  \end{subfigure}
  \hfill
  \begin{subfigure}{0.49\linewidth}
    \centering
    \includegraphics[width=\linewidth]{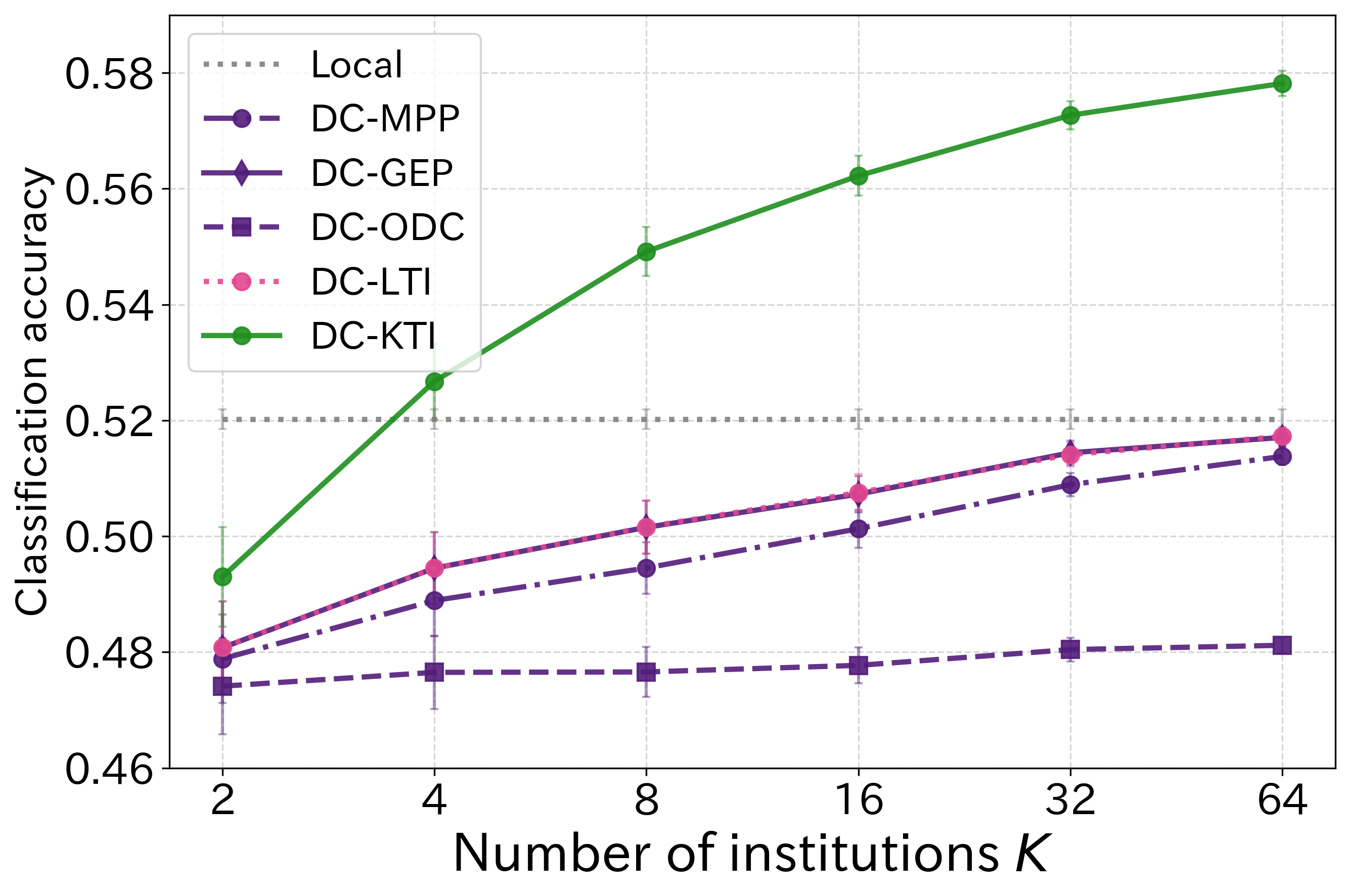}
    \caption{Fashion-MNIST}
  \end{subfigure}
  \caption{Classification accuracy for integration methods and baselines in RQ1.}
  \label{fig:num-existing}
\end{figure}

As shown in Fig.~\ref{fig:num-existing}, DC-KTI generally achieved higher classification accuracy than the existing linear integration methods on both MNIST and Fashion-MNIST in most conditions, with the advantage becoming more pronounced at larger $K$.
In contrast, linear integration methods (DC-MPP, DC-GEP, DC-ODC, and DC-LTI) underperformed DC-KTI in most conditions.

Among the linear integration methods, DC-LTI and DC-GEP performed comparably and tended to rank highest in accuracy, followed by DC-MPP and DC-ODC.
The fact that linear integration methods frequently underperformed Local is likely because these methods cannot sufficiently align the nonlinearly transformed intermediate representations produced by UMAP, and the cost of misalignment and information loss from dimensionality reduction outweighs the benefit of integration.
These results suggested that, when nonlinear dimensionality reduction was used, employing an integration function that accounts for the nonlinearity of intermediate representations was more effective than directly applying linear integration methods.

\subsection{Effect of Graph Regularization and Centering}
\label{subsec:rq2_extensions}

This experiment evaluated the contributions of graph regularization and the centering constraint.

\paragraph{Experimental setup}
As in RQ1, the number of parties $K$ was varied over $\mathcal{K}=\{2,4,8,16,32,64\}$, and the number of trials per condition was $N_{\mathrm{seed}}=100$.
For each value of $K$, the number of training instances per party was fixed at $n(k)=100$; hence, the total number of training instances was $N=100K$.
From Fig.~\ref{fig:num_2} onward, we omit the prefix ``DC-'' from method names for readability.
We compared KTI with the following variants:
\begin{itemize}
    \item KTI+Center: KTI with the centering constraint in Eq.~\eqref{eq:centering};
    \item KTI+GL: KTI with graph regularization using GL weights in Eq.~\eqref{eq:gl-weight};
    \item KTI+TSL: KTI with graph regularization using TSL weights in Eq.~\eqref{eq:tsl-class-weight};
    \item KTI+GL+Center and KTI+TSL+Center: the corresponding centered variants.
\end{itemize}
For graph construction, the neighborhood size was also set to $k_{\mathrm{nn}}$.
For the graph-regularized variants, Eq.~\eqref{eq:kernel-graph-reduced-problem} was solved after replacing $\symbfit M_\lambda$ and $\symbfit B$ with $\symbfit M_\lambda/\mathrm{tr}(\symbfit M_\lambda)$ and $\symbfit B/\mathrm{tr}(\symbfit B)$, respectively.
This normalization makes $\mu$ control the relative weight of graph regularization independently of the scales of $\symbfit M_\lambda$ and $\symbfit B$.
The penalty-graph Laplacian was set to $\symbfit C=\symbfit I_{n_{\mathrm{a}}}$.

\paragraph{Results}
Fig.~\ref{fig:num_2} shows the classification accuracy as a function of $K$ for KTI and the proposed extensions (GL, TSL, and Center).
Local and the linear integration methods (MPP, GEP, ODC, and LTI) were omitted here, since their trends across $K$ were already shown in Fig.~\ref{fig:num-existing}; Central was omitted for the same reason as in RQ1.

\begin{figure}[t]
  \centering
  \begin{subfigure}{0.49\linewidth}
    \centering
    \includegraphics[width=\linewidth]{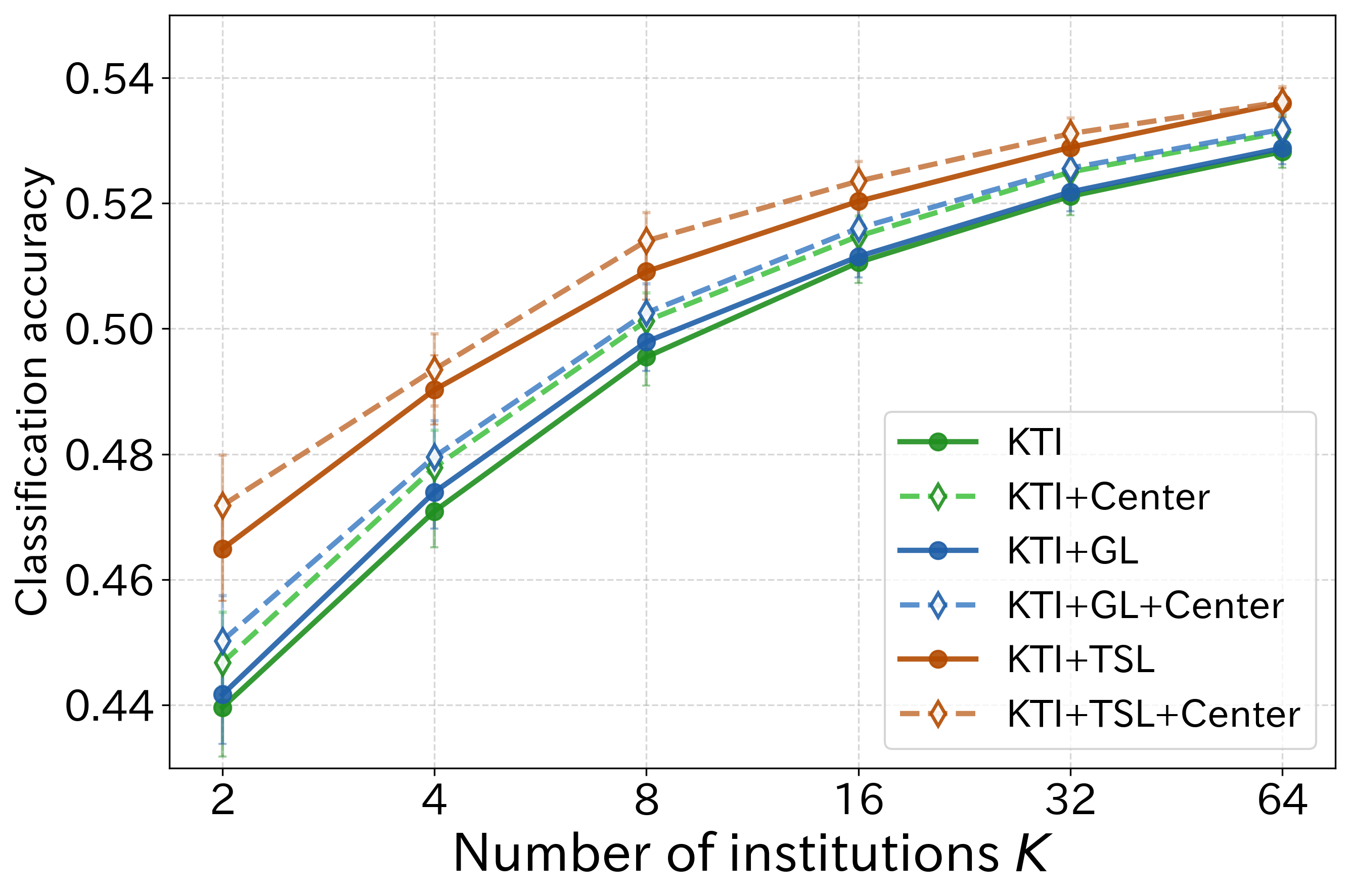}
    \caption{MNIST}
  \end{subfigure}
  \hfill
  \begin{subfigure}{0.49\linewidth}
    \centering
    \includegraphics[width=\linewidth]{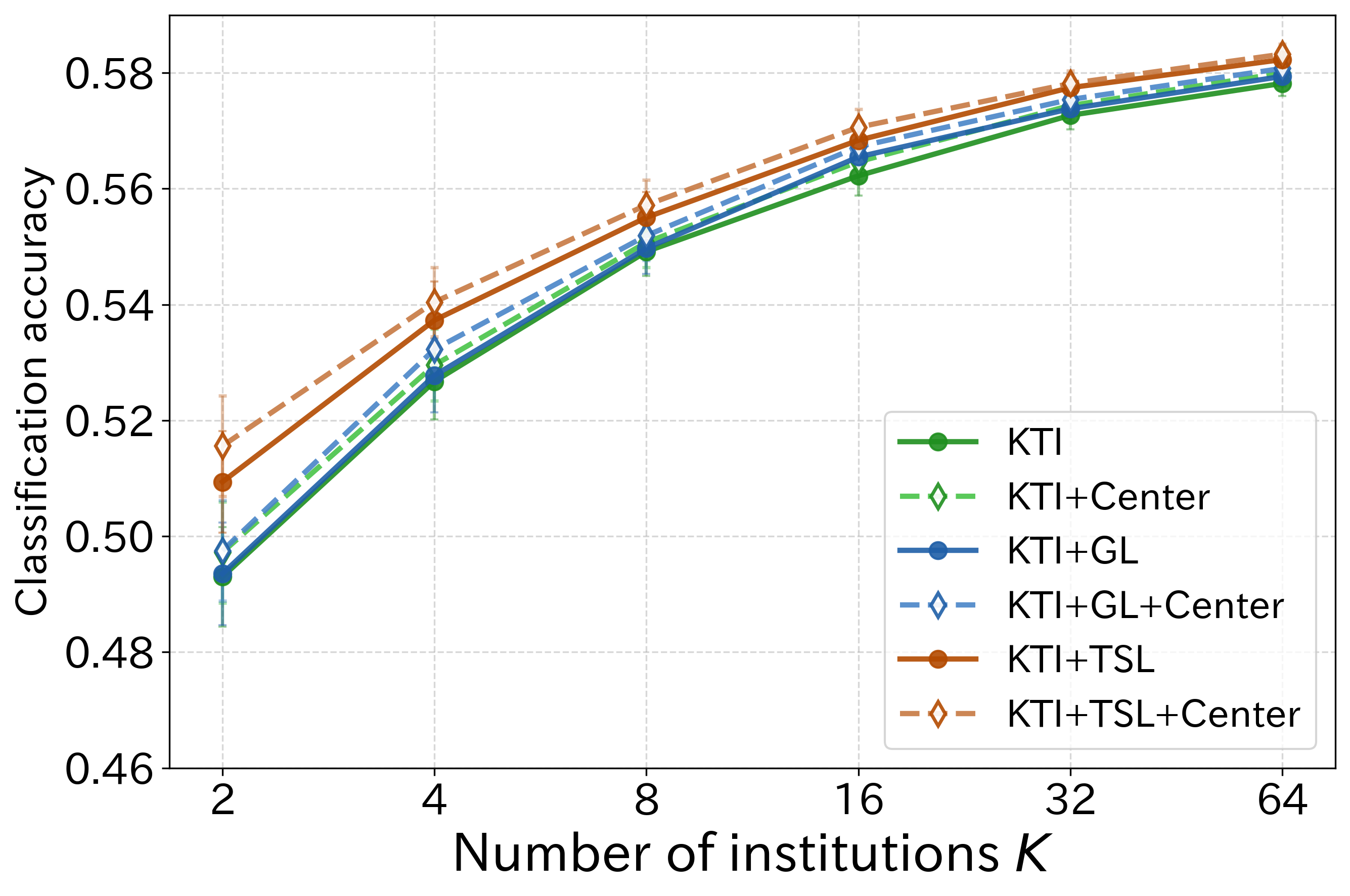}
    \caption{Fashion-MNIST}
  \end{subfigure}
  \caption{Classification accuracy for the proposed extensions in RQ2.}
  \label{fig:num_2}
\end{figure}

On both MNIST and Fashion-MNIST, KTI+TSL and KTI+TSL+Center consistently achieved higher accuracy than KTI without extensions.
Overall, KTI+TSL+Center tended to rank highest, followed by KTI+TSL.
These results suggested that, whereas GL uses only geometry-based neighborhood relationships without considering target-variable information, TSL additionally draws together anchor pairs sharing the same target variable, enabling the collaboration representation to more directly reflect class structure.
Furthermore, combining Center removes constant-direction components from the collaboration representations, which may be one reason why the class structure captured by TSL is reflected more stably.
These results suggested that, in settings where classification performance is prioritized, combining TSL with the centering constraint is an effective strategy.

\subsection{Effect of Anchor Dataset Design}
\label{subsec:rq3_anchor_design}

This experiment examined the effect of anchor dataset design on classification accuracy and computational cost.

\paragraph{Experimental setup}
The number of trials per condition was $N_{\mathrm{seed}}=30$.
We examined two aspects of anchor dataset design: anchor quantity and anchor quality.
The anchor quantity experiment varied $n_{\mathrm a}\in\{100,200,400,800,1600\}$ while fixing $n_{\mathrm a}^{\mathrm{smote}}=100$.
The anchor quality experiment varied $n_{\mathrm a}^{\mathrm{smote}}\in\{10,30,100,300,1000\}$ while fixing $n_{\mathrm a}=1000$.
Here, a larger $n_{\mathrm a}^{\mathrm{smote}}$ means that the anchor dataset was generated from a more diverse set of real source instances.
Computation time was evaluated only in the anchor quantity experiment, because $n_{\mathrm a}$ directly determines the dominant computational cost.
We evaluated two types of computation time: integration-function construction time, which is the time required to construct $\{g_k\}_{k=1}^K$ for all parties, and collaboration-representation construction time, which is the total time required to transform $N$ test instances into collaboration representations.
The latter is a total transformation time over the test dataset and therefore differs from the per-sample complexity in Table~\ref{tab:complexity} by a constant factor under the fixed RQ3 setting.
Computation times were reported as the mean over independent trials for each condition.

\begin{figure}[t]
  \centering
  \begin{subfigure}{0.49\linewidth}
    \centering
    \includegraphics[width=\linewidth]{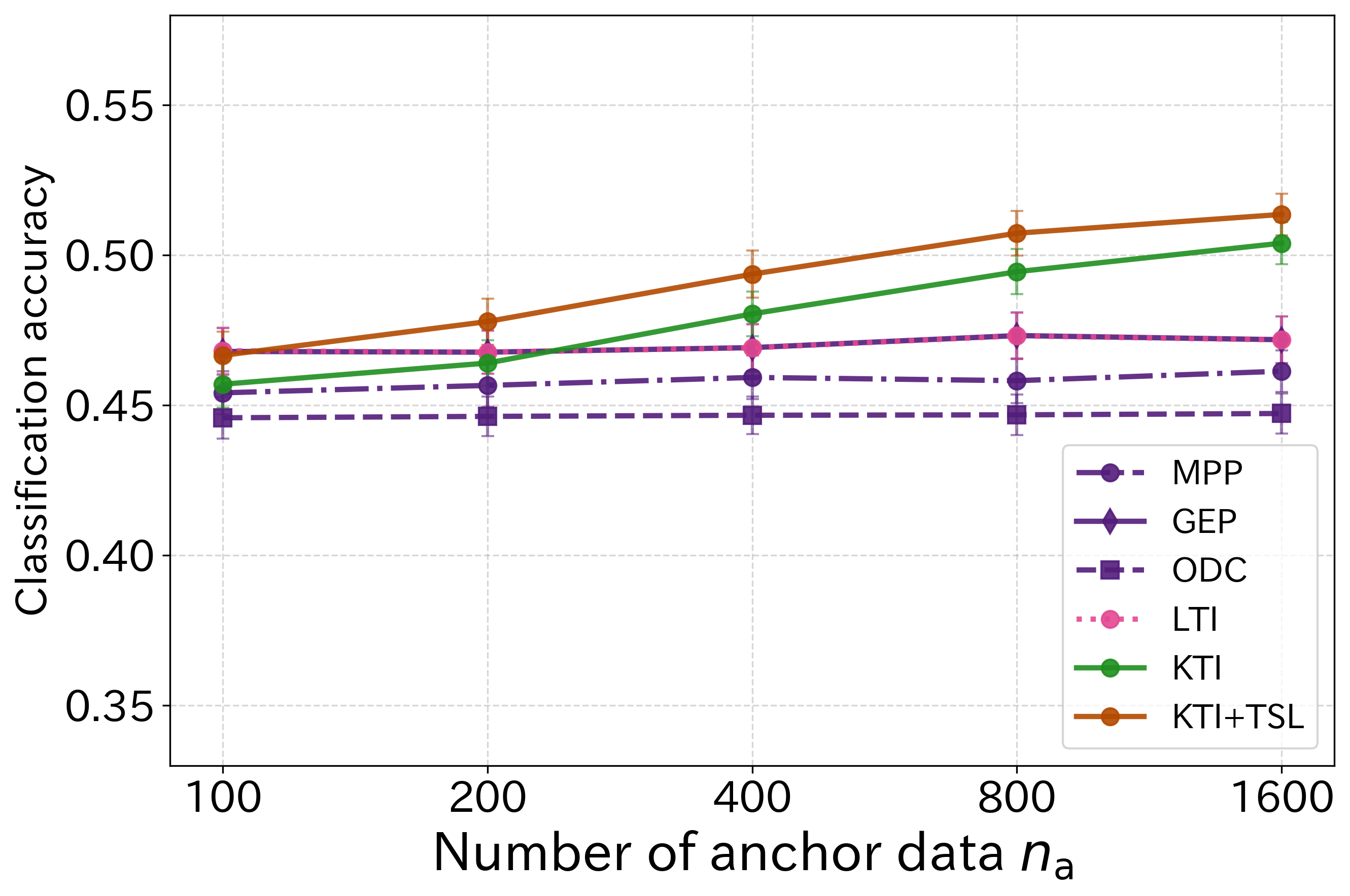}
    \caption{MNIST}
  \end{subfigure}
  \hfill
  \begin{subfigure}{0.49\linewidth}
    \centering
    \includegraphics[width=\linewidth]{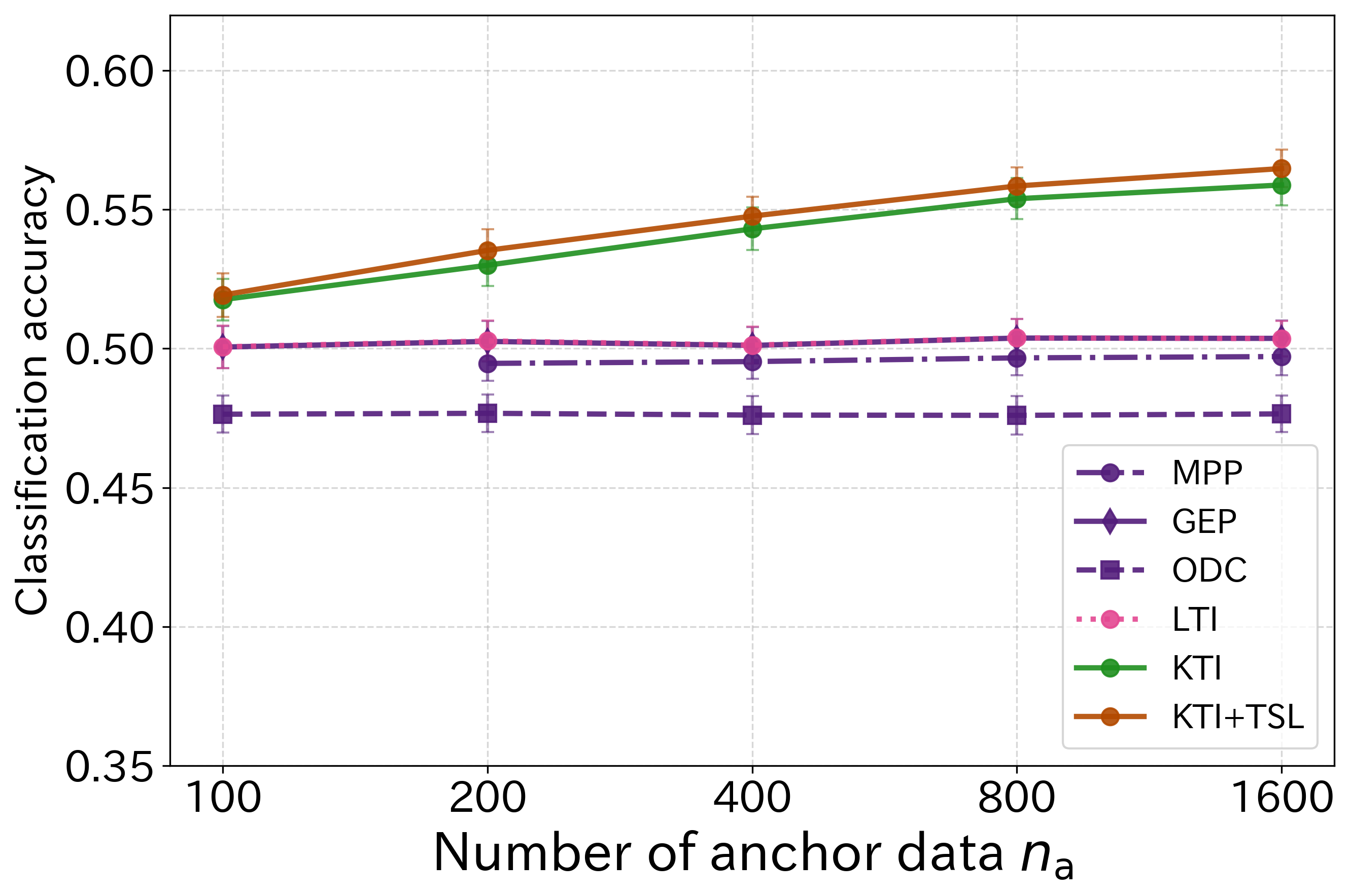}
    \caption{Fashion-MNIST}
  \end{subfigure}
  \caption{Classification accuracy in the anchor-size experiment.}
  \label{fig:rq3_size}
\end{figure}

\begin{figure}[t]
  \centering
  \begin{subfigure}{0.49\linewidth}
    \centering
    \includegraphics[width=\linewidth]{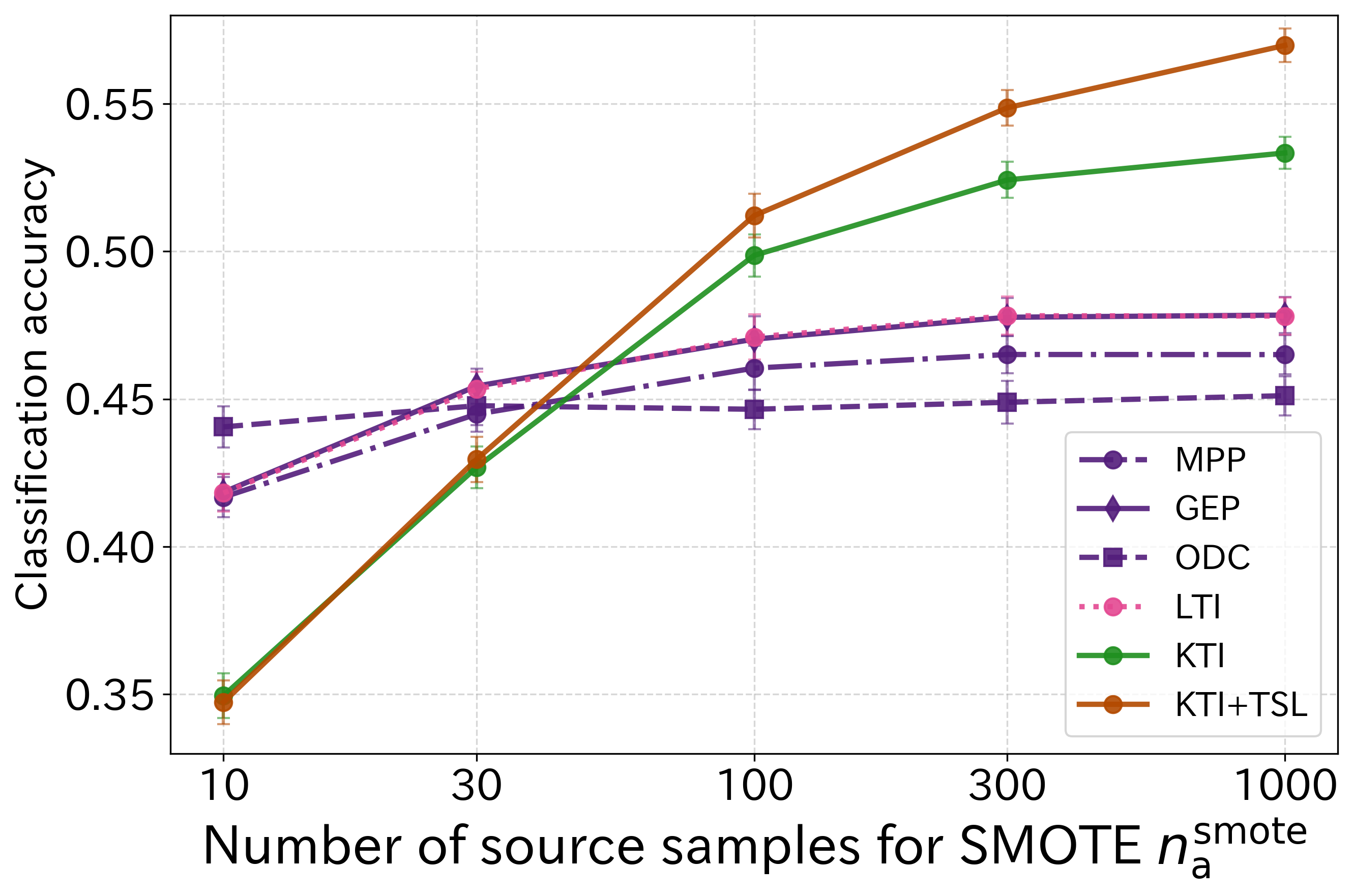}
    \caption{MNIST}
  \end{subfigure}
  \hfill
  \begin{subfigure}{0.49\linewidth}
    \centering
    \includegraphics[width=\linewidth]{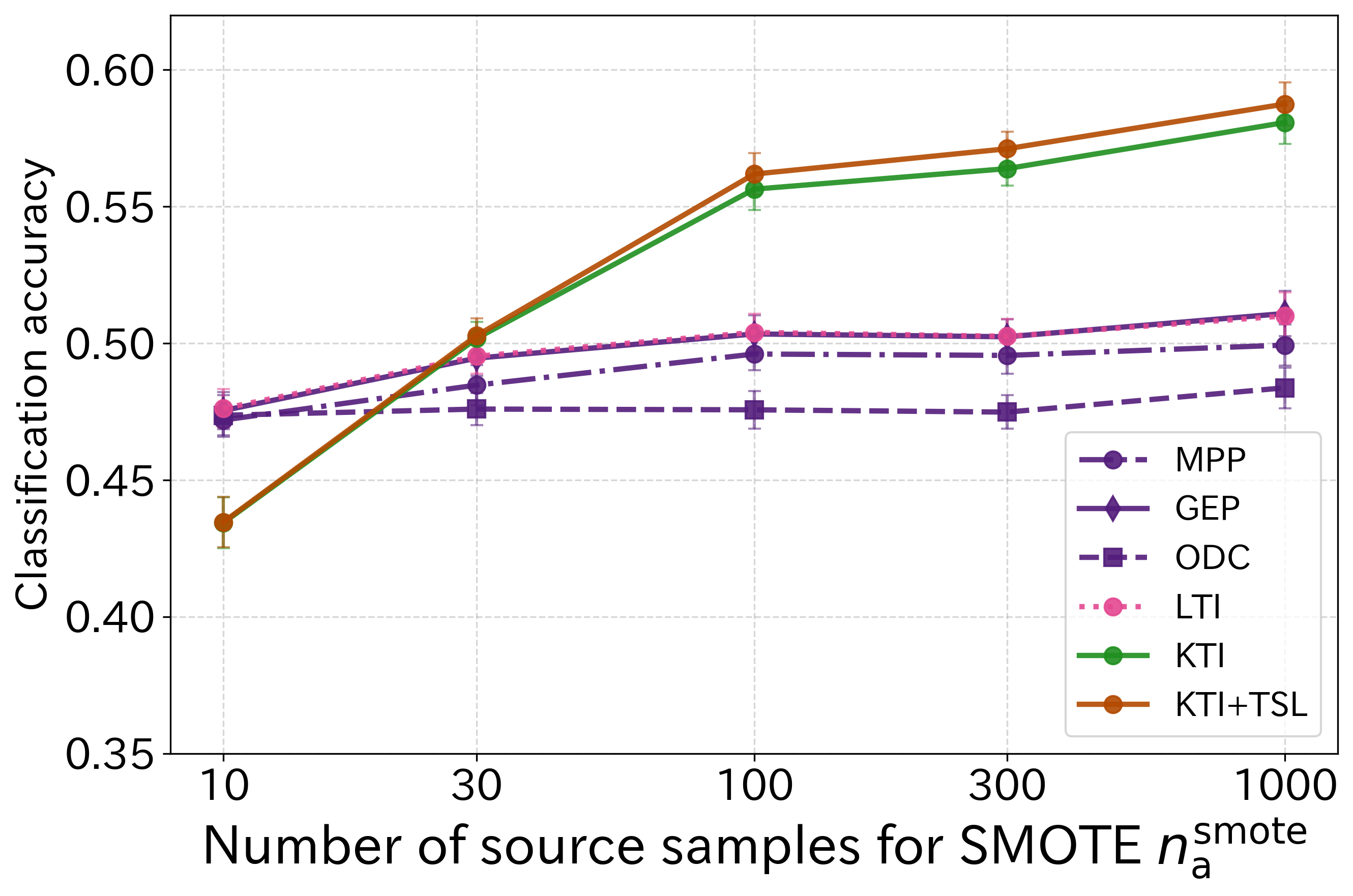}
    \caption{Fashion-MNIST}
  \end{subfigure}
  \caption{Classification accuracy in the anchor-quality experiment.}
  \label{fig:rq3_quality}
\end{figure}

\begin{figure}[t]
  \centering
  \begin{subfigure}{0.49\linewidth}
    \centering
    \includegraphics[width=\linewidth]{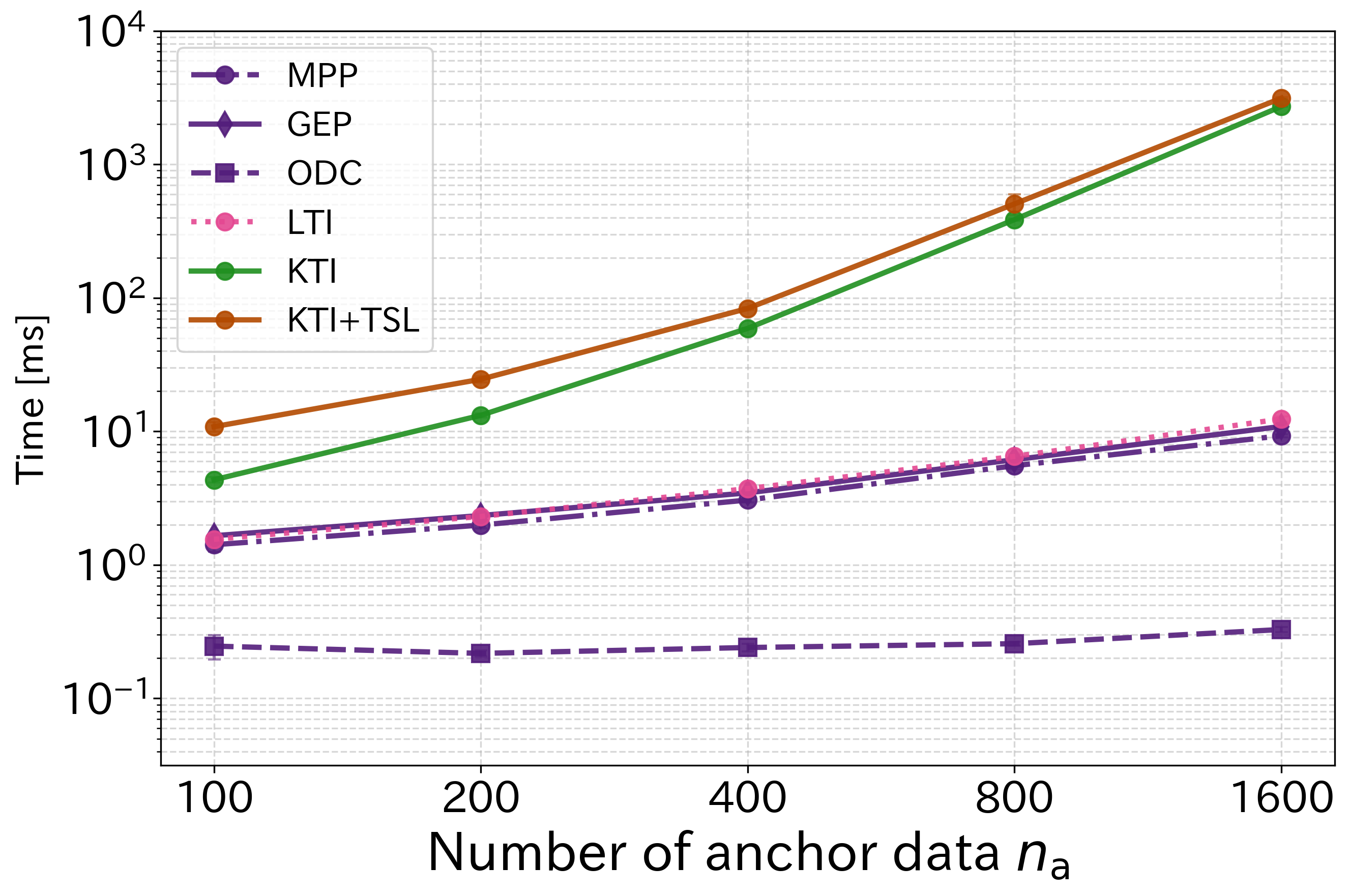}
    \caption{MNIST}
  \end{subfigure}
  \hfill
  \begin{subfigure}{0.49\linewidth}
    \centering
    \includegraphics[width=\linewidth]{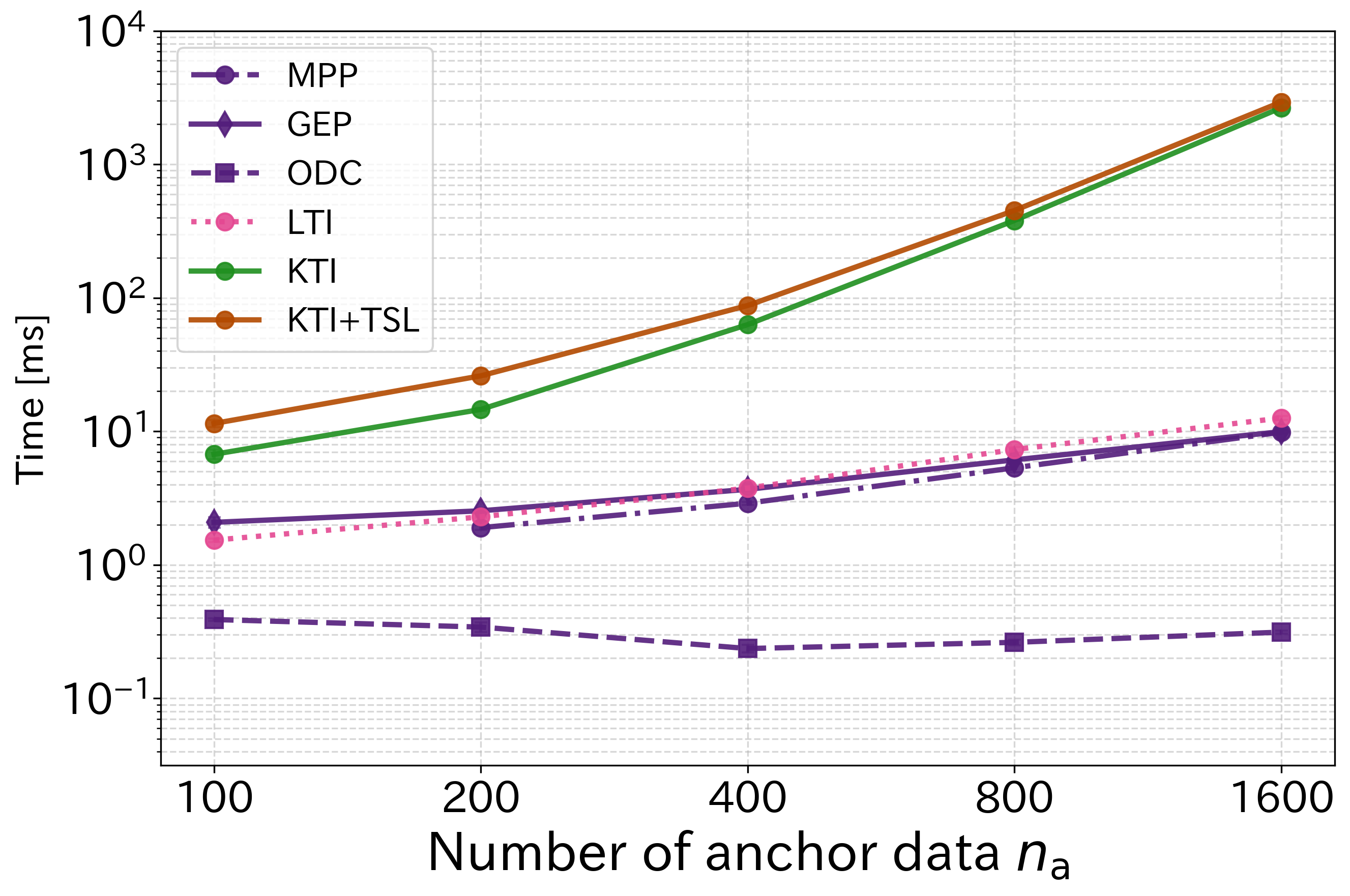}
    \caption{Fashion-MNIST}
  \end{subfigure}
  \caption{Integration-function construction time as a function of anchor size.}
  \label{fig:rq3_func_time}
\end{figure}

\begin{figure}[t]
  \centering
  \begin{subfigure}{0.49\linewidth}
    \centering
    \includegraphics[width=\linewidth]{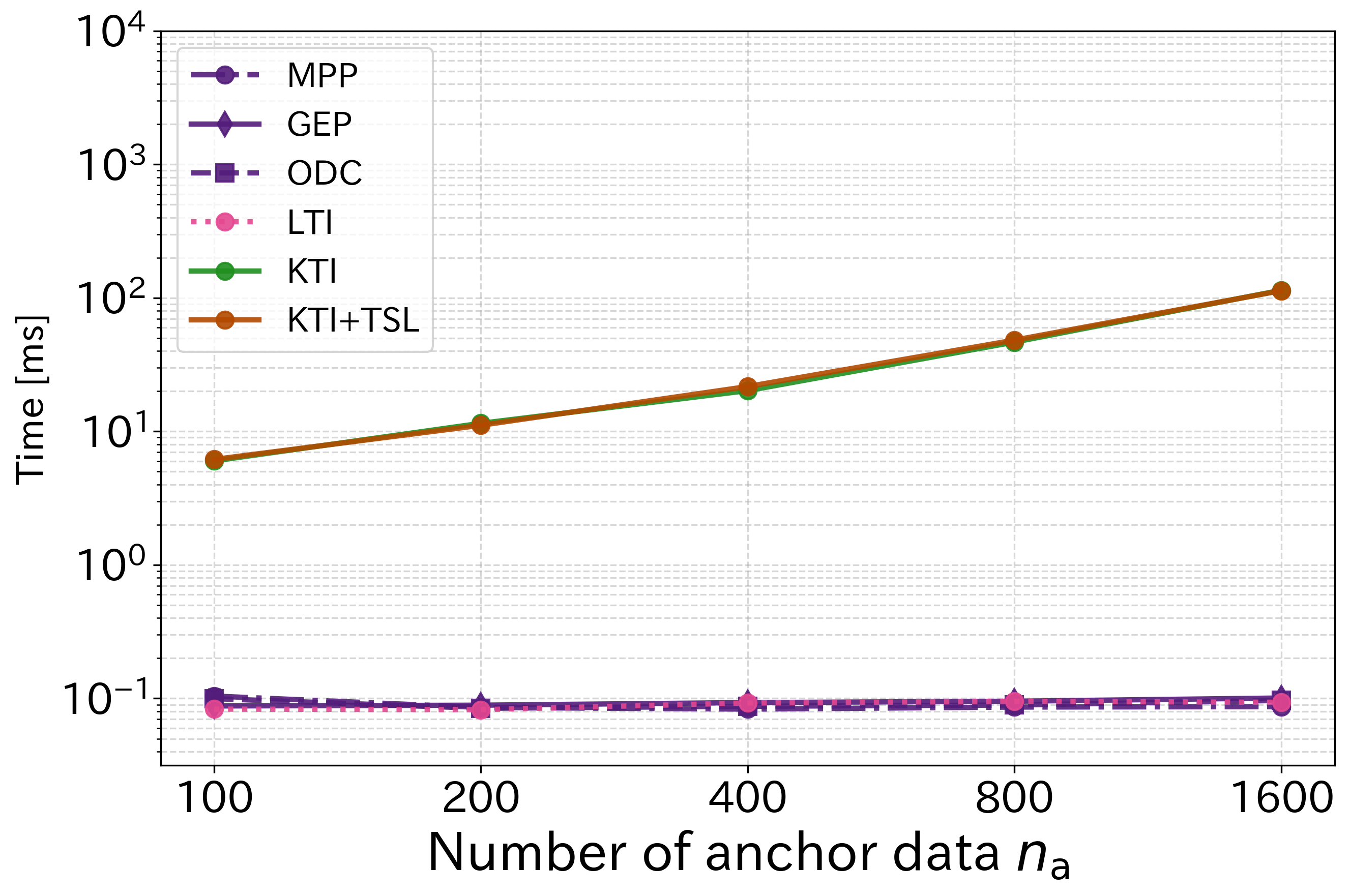}
    \caption{MNIST}
  \end{subfigure}
  \hfill
  \begin{subfigure}{0.49\linewidth}
    \centering
    \includegraphics[width=\linewidth]{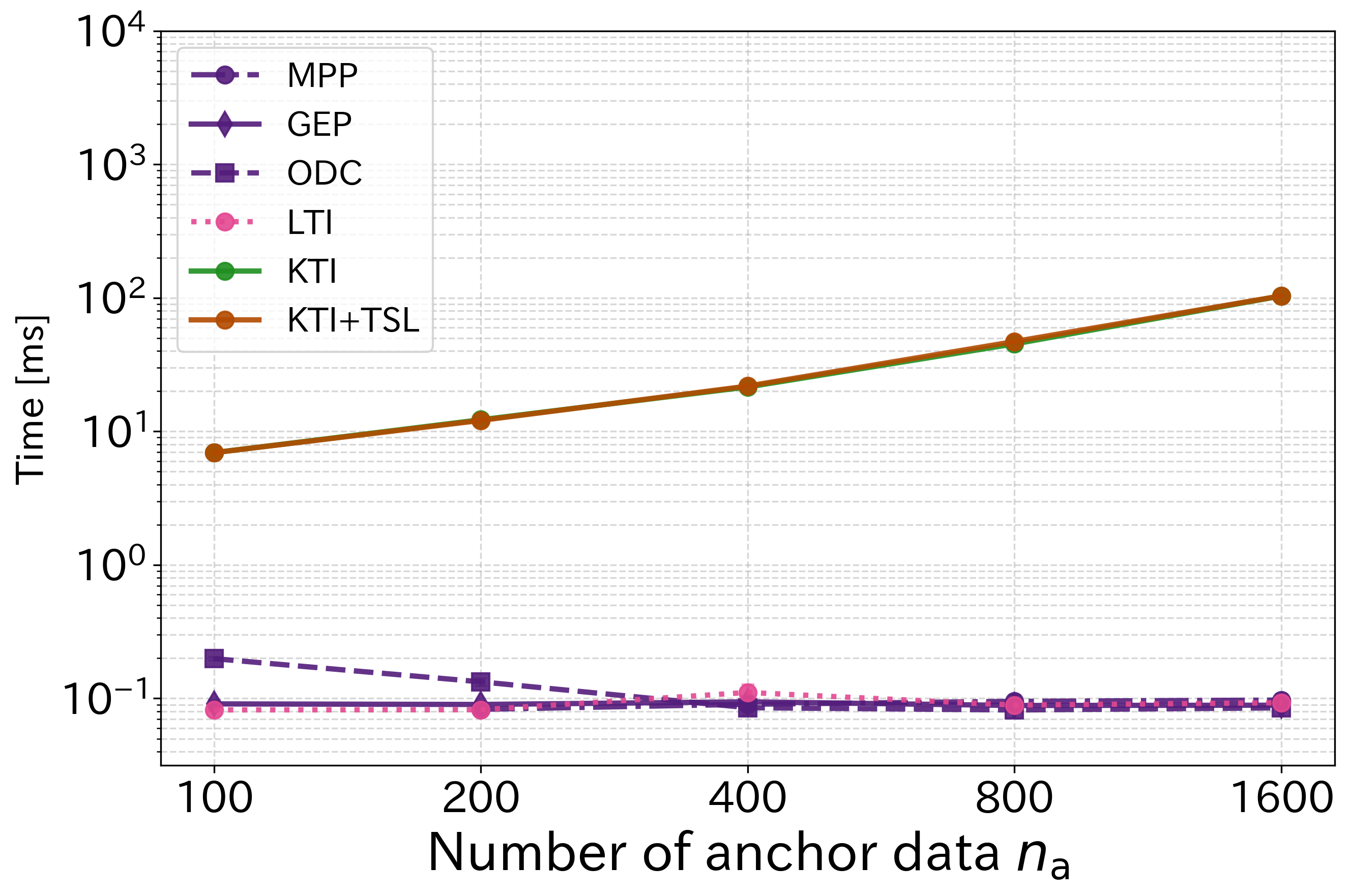}
    \caption{Fashion-MNIST}
  \end{subfigure}
  \caption{Collaboration-representation construction time as a function of anchor size.}
  \label{fig:rq3_repr_time}
\end{figure}

\paragraph{Classification accuracy}
Classification accuracy for the anchor-size and anchor-quality experiments is shown in Figs.~\ref{fig:rq3_size} and~\ref{fig:rq3_quality}, respectively.
First, regarding the effect of anchor quantity, classification accuracy improved overall as $n_{\mathrm a}$ increased.
The improvement was particularly large for KTI-based methods (KTI, KTI+TSL), indicating that adding more anchor data was especially beneficial for these methods under these settings.

Next, regarding the effect of anchor quality, the accuracy improvement from increasing $n_{\mathrm a}^{\mathrm{smote}}$ was more pronounced than that from increasing $n_{\mathrm a}$.
This tendency was observed on both MNIST and Fashion-MNIST and was especially strong for KTI-based methods, suggesting that these methods were more sensitive to the diversity and representativeness of the SMOTE source data than to the sheer quantity of anchor data under these settings.

\paragraph{Computation time}
Figs.~\ref{fig:rq3_func_time} and~\ref{fig:rq3_repr_time} show the integration-function construction time and collaboration-representation construction time as functions of $n_{\mathrm a}$, respectively, displayed on log-log scales.
Note that because $n_{\mathrm a}$ is fixed in the anchor quality experiment, computation time evaluation applies to the anchor quantity experiment only.

According to Table~\ref{tab:complexity}, the theoretical complexity for linear integration methods (GEP, MPP, ODC, LTI) is approximately $O(n_{\mathrm a})$ for function construction and $O(1)$ for representation construction.
KTI-based methods (KTI, KTI+TSL) have theoretical complexity $O(n_{\mathrm a}^3)$ and $O(n_{\mathrm a})$, respectively.

These results indicated that increasing $n_{\mathrm a}$ improved classification accuracy but substantially increased computation time, particularly for KTI-based methods.
Moreover, even with the total anchor size fixed, increasing $n_{\mathrm a}^{\mathrm{smote}}$ yielded more pronounced accuracy improvements.
Therefore, in practice, it is important not only to increase the number of anchor instances but also to ensure sufficient diversity and representativeness of the source data used for anchor generation.

\subsection{Utility--Privacy Trade-off}
\label{sec:rq4}

This experiment evaluated the utility--privacy trade-off, using classification accuracy as the utility measure and reconstruction accuracy, defined below, as the privacy-risk measure.

\paragraph{Experimental setup}
We varied the dimensionality reduction method over PCA, Kernel PCA, and UMAP, and set the intermediate dimension to $\tilde{d}(k)\in\{4,16,64\}$.
The datasets were MNIST and Fashion-MNIST, and the number of trials per condition was $N_{\mathrm{seed}}=30$.
In this experiment, we used the default training split $K=10$, $N=1000$, and $n(k)=100$.
To evaluate reconstruction risk under a conservative anchor-leakage scenario, we used real-data anchors instead of SMOTE-generated anchors.
The anchor dataset consisted of 100 real-data instances per label, for a total of $n_{\mathrm a}=1000$ instances; no SMOTE augmentation was applied.
The RBF kernel bandwidth for Kernel PCA was determined by the median heuristic applied to standardized data.

\paragraph{Reconstruction attack}
We considered an analyst-side reconstruction attack caused by leakage of part of the anchor dataset held by the parties.
In the standard DC analysis setting, the analyst receives intermediate representations but not the corresponding original feature vectors.
In this experiment, we assumed an additional leakage scenario in which the original feature vectors of a subset of anchor instances held by the parties were leaked to the analyst.
Such leakage can occur, for example, when one party colludes with the analyst; in the worst case, the analyst may obtain the entire anchor dataset.
In our evaluation, we considered a partial-leakage setting in which only anchor instances from selected labels were leaked.

For a target party $k$, let $\mathcal{I}\subset[n_{\mathrm{a}}]$ denote the indices of the leaked anchor instances.
The attacker therefore had access to the leaked pairs $(\symbfit{A}_{\mathcal I,:},\tilde{\symbfit{A}}^{(k)}_{\mathcal I,:})$ and used them to learn a reconstruction function from intermediate representations to the original feature space.
For MNIST, the leaked anchor pairs were 300 instances corresponding to digits $0$, $1$, and $2$ (100 per label), and reconstruction risk was evaluated for the remaining digits $3$--$9$.
For Fashion-MNIST, the leaked anchor pairs were 300 instances corresponding to labels $0$ (T-shirt/top), $1$ (Trouser), and $2$ (Pullover), and reconstruction risk was evaluated for the remaining labels $3$--$9$.

We evaluated three reconstruction attacks: LR, MLP, and PINV.
All attacks used only the leaked anchor pairs $(\symbfit{A}_{\mathcal I,:},\tilde{\symbfit{A}}^{(k)}_{\mathcal I,:})$ for training.
Let $\bar{\symbfit{A}}_{\mathcal I}$ and $\bar{\tilde{\symbfit{A}}}_{\mathcal I}^{(k)}$ denote the mean row vectors of $\symbfit{A}_{\mathcal I,:}$ and $\tilde{\symbfit{A}}^{(k)}_{\mathcal I,:}$, respectively, whose entries are the column-wise means of the corresponding matrices.
The centered matrices are defined as
\[
\symbfit{A}_{\mathcal I}^{\mathrm c}
=
\symbfit{A}_{\mathcal I,:}-\bar{\symbfit{A}}_{\mathcal I},
\qquad
\tilde{\symbfit{A}}_{\mathcal I}^{(k),\mathrm c}
=
\tilde{\symbfit{A}}^{(k)}_{\mathcal I,:}
-
\bar{\tilde{\symbfit{A}}}_{\mathcal I}^{(k)}
\]
LR was a linear regression attack that learned a centered linear map from $\tilde{\symbfit{A}}_{\mathcal I}^{(k),\mathrm c}$ to $\symbfit{A}_{\mathcal I}^{\mathrm c}$.
MLP learned the same reconstruction direction using a nonlinear multilayer perceptron.
For MLP, we used one hidden layer with 128 units and ReLU activations, optimized with Adam for at most 600 iterations using early stopping with a 20\% validation split.
For both LR and MLP, the reconstruction direction was
\[
\tilde{\symbfit{A}}_{\mathcal I}^{(k),\mathrm c}
\mapsto
\symbfit{A}_{\mathcal I}^{\mathrm c},
\]
and $\bar{\symbfit{A}}_{\mathcal I}$ was added back after prediction.
In contrast, PINV first estimated a forward linear map
\[
\symbfit{A}_{\mathcal I}^{\mathrm c}\symbfit{F}_{\mathrm{PINV}}^{(k)}
\simeq
\tilde{\symbfit{A}}_{\mathcal I}^{(k),\mathrm c},
\]
and reconstructed original data using $(\symbfit{F}_{\mathrm{PINV}}^{(k)})^\dagger$ with mean correction.
The learned reconstruction functions were then applied to the intermediate representations $\tilde{\symbfit X}^{(k)}$ of evaluation instances whose labels were not included in the leaked anchor pairs, yielding reconstructed data in the original feature space.
\textit{Reconstruction accuracy} was defined as the label-match rate when reconstructed images were passed to a \texttt{RandomForestClassifier} trained independently on 5000 original instances with default settings.
A higher reconstruction accuracy indicates higher reconstruction risk and greater privacy leakage.
For each condition, reconstruction accuracy was computed over 50 reconstructed instances per evaluation label (350 instances in total).
We reported the maximum reconstruction accuracy among LR, MLP, and PINV, together with the corresponding attack method.

\paragraph{Classification and reconstruction results}
Table~\ref{tab:rq4-tradeoff-main-wide} reports the classification accuracy, reconstruction accuracy, and strongest attack method for KTI+TSL under different dimensionality reduction methods and intermediate dimensions.
Classification accuracy represents downstream utility, whereas reconstruction accuracy represents reconstruction risk.

\begin{table*}[t]
\centering
\tiny
\setlength{\tabcolsep}{2.4pt}
\renewcommand{\arraystretch}{1.10}
\caption{Classification and reconstruction results of KTI+TSL across dimensionality reduction methods and intermediate dimensions.}
\label{tab:rq4-tradeoff-main-wide}
\begin{tabular*}{\textwidth}{@{\extracolsep{\fill}}llccccccccc@{}}
\toprule
& &
\multicolumn{3}{c}{$\tilde d(k)=4$} &
\multicolumn{3}{c}{$\tilde d(k)=16$} &
\multicolumn{3}{c}{$\tilde d(k)=64$} \\
\cmidrule(lr){3-5}\cmidrule(lr){6-8}\cmidrule(lr){9-11}
Dataset & DimRed
& Acc. ($\uparrow$) & Recon. ($\downarrow$) & Atk.
& Acc. ($\uparrow$) & Recon. ($\downarrow$) & Atk.
& Acc. ($\uparrow$) & Recon. ($\downarrow$) & Atk. \\
\midrule
\multirow[t]{3}{*}{MNIST}
& PCA
& \valci{.394}{.008} & \valci{.267}{.010} & PINV
& \valci{.824}{.004} & \valci{.632}{.007} & PINV
& \valci{.859}{.004} & \valci{.834}{.005} & PINV \\
& K-PCA
& \valci{.329}{.005} & \valci{.188}{.007} & LR
& \valci{.685}{.006} & \valci{.540}{.014} & LR
& \valci{.753}{.006} & \valci{.684}{.010} & LR \\
& UMAP
& \valci{.457}{.008} & \valci{.143}{.000} & PINV
& \valci{.594}{.005} & \valci{.159}{.017} & LR
& \valci{.598}{.005} & \valci{.153}{.021} & LR \\
\midrule
\multirow[t]{3}{*}{Fashion-MNIST}
& PCA
& \valci{.573}{.004} & \valci{.211}{.007} & PINV
& \valci{.709}{.005} & \valci{.557}{.011} & PINV
& \valci{.720}{.006} & \valci{.687}{.015} & PINV \\
& K-PCA
& \valci{.489}{.007} & \valci{.142}{.007} & LR
& \valci{.684}{.006} & \valci{.326}{.011} & LR
& \valci{.717}{.007} & \valci{.454}{.013} & LR \\
& UMAP
& \valci{.527}{.007} & \valci{.151}{.029} & MLP
& \valci{.607}{.006} & \valci{.220}{.033} & LR
& \valci{.617}{.007} & \valci{.189}{.021} & LR \\
\bottomrule
\end{tabular*}

\vspace{2pt}
\footnotesize
Acc. denotes classification accuracy, Recon. denotes reconstruction accuracy, and Atk. denotes the attack method with the highest reconstruction accuracy.
Arrows indicate the preferred direction.
K-PCA denotes Kernel PCA.
\end{table*}

Each numerical cell reports the mean and the 95\% confidence interval.

As shown in Table~\ref{tab:rq4-tradeoff-main-wide}, classification accuracy tended to increase as $\tilde{d}(k)$ increased.
However, reconstruction risk also increased substantially for PCA and Kernel PCA.
In contrast, UMAP kept reconstruction accuracy relatively low even when $\tilde{d}(k)$ increased, indicating stronger confidentiality against reconstruction attacks.
Its classification accuracy was often lower than that of PCA, indicating that the lower reconstruction risk came with reduced utility in several settings.
These results suggest a utility--privacy trade-off: increasing the expressive power of the intermediate representation can improve classification performance, but it can also increase vulnerability to reconstruction attacks.
Regarding the reconstruction attack methods, PINV tended to be most effective under PCA, whereas LR was most effective in many conditions under Kernel PCA and UMAP.
This tendency suggested that, under PCA, the obfuscation function $f_k$ of party $k$ can be approximated by a linear mapping matrix $\symbfit{F}^{(k)}$ as $\tilde{\symbfit{X}}^{(k)}\simeq \symbfit{X}^{(k)} \symbfit{F}^{(k)}$, making reconstruction via the pseudoinverse $(\symbfit{F}^{(k)})^\dagger$ effective.
In contrast, since the mappings in Kernel PCA and UMAP are nonlinear, pseudoinverse reconstruction based on a single linear mapping was less effective.

\paragraph{Integration-method comparison and qualitative reconstruction}
To avoid conflating the utility--privacy analysis with integration-method comparison, Table~\ref{tab:rq4-tradeoff-main-wide} fixes the integration method to KTI+TSL.
Table~\ref{tab:rq4-integration-accuracy-wide} separately reports the classification accuracy of LTI, KTI, and KTI+TSL under the same dimensionality reduction settings.

\begin{table*}[t]
\centering
\tiny
\setlength{\tabcolsep}{2.2pt}
\renewcommand{\arraystretch}{1.08}
\caption{Classification accuracy of integration methods under different dimensionality reduction methods.}
\label{tab:rq4-integration-accuracy-wide}
\begin{tabular*}{\textwidth}{@{\extracolsep{\fill}}llccccccccc@{}}
\toprule
& &
\multicolumn{3}{c}{$\tilde d(k)=4$} &
\multicolumn{3}{c}{$\tilde d(k)=16$} &
\multicolumn{3}{c}{$\tilde d(k)=64$} \\
\cmidrule(lr){3-5}\cmidrule(lr){6-8}\cmidrule(lr){9-11}
Dataset & DimRed
& LTI & KTI & KTI+TSL
& LTI & KTI & KTI+TSL
& LTI & KTI & KTI+TSL \\
\midrule
\multirow[t]{3}{*}{MNIST}
& PCA
& \valci{\textbf{.408}}{.007} & \valci{.336}{.011} & \valci{.394}{.008}
& \valci{.755}{.007} & \valci{.517}{.012} & \valci{\textbf{.824}}{.004}
& \valci{.751}{.013} & \valci{.673}{.008} & \valci{\textbf{.859}}{.004} \\
& K-PCA
& \valci{\textbf{.346}}{.006} & \valci{.320}{.006} & \valci{.329}{.005}
& \valci{.636}{.007} & \valci{.673}{.006} & \valci{\textbf{.685}}{.006}
& \valci{.635}{.014} & \valci{.717}{.009} & \valci{\textbf{.753}}{.006} \\
& UMAP
& \valci{.349}{.006} & \valci{.425}{.008} & \valci{\textbf{.457}}{.008}
& \valci{.512}{.007} & \valci{.567}{.005} & \valci{\textbf{.594}}{.005}
& \valci{.554}{.007} & \valci{.586}{.005} & \valci{\textbf{.598}}{.005} \\
\midrule
\multirow[t]{3}{*}{Fashion-MNIST}
& PCA
& \valci{.538}{.008} & \valci{.502}{.005} & \valci{\textbf{.573}}{.004}
& \valci{\textbf{.721}}{.006} & \valci{.631}{.007} & \valci{.709}{.005}
& \valci{\textbf{.725}}{.010} & \valci{.672}{.007} & \valci{.720}{.006} \\
& K-PCA
& \valci{.476}{.008} & \valci{.484}{.007} & \valci{\textbf{.489}}{.007}
& \valci{.664}{.007} & \valci{.681}{.007} & \valci{\textbf{.684}}{.006}
& \valci{.656}{.012} & \valci{.709}{.008} & \valci{\textbf{.717}}{.007} \\
& UMAP
& \valci{.416}{.005} & \valci{.509}{.008} & \valci{\textbf{.527}}{.007}
& \valci{.544}{.008} & \valci{.598}{.007} & \valci{\textbf{.607}}{.006}
& \valci{.575}{.007} & \valci{.609}{.007} & \valci{\textbf{.617}}{.007} \\
\bottomrule
\end{tabular*}

\vspace{2pt}
\footnotesize
Bold values indicate the highest classification accuracy among the three integration methods for each dataset, dimensionality reduction method, and intermediate dimension.
\end{table*}

Table~\ref{tab:rq4-integration-accuracy-wide} further shows that LTI was competitive under PCA, whereas KTI and KTI+TSL were generally more effective under Kernel PCA and UMAP.
This result supported the main claim that nonlinear integration is useful when intermediate representations are obtained through nonlinear dimensionality reduction.
Adding TSL regularization improved accuracy in most conditions, indicating that target-variable-aware graph regularization could also be beneficial under these settings.

The qualitative reconstruction results at $\tilde{d}(k)=64$ are shown in Fig.~\ref{fig:recon-64}; the corresponding quantitative values are reported in the $\tilde{d}(k)=64$ columns of Table~\ref{tab:rq4-tradeoff-main-wide}.
The first row for each dataset shows original images, and the subsequent rows show the best reconstruction results for each dimensionality reduction method; the attack method achieving the highest reconstruction accuracy is shown in parentheses.
For Fashion-MNIST, reconstructed images tended to preserve class-level patterns rather than instance-specific details of the original input.
This may be because UMAP tends to construct an intermediate representation space that separates clusters, making it easier for the inverse mapping to pull an input point toward typical instances in the nearest learned cluster.

\begin{figure}[t]
  \centering
  \includegraphics[width=0.86\linewidth]{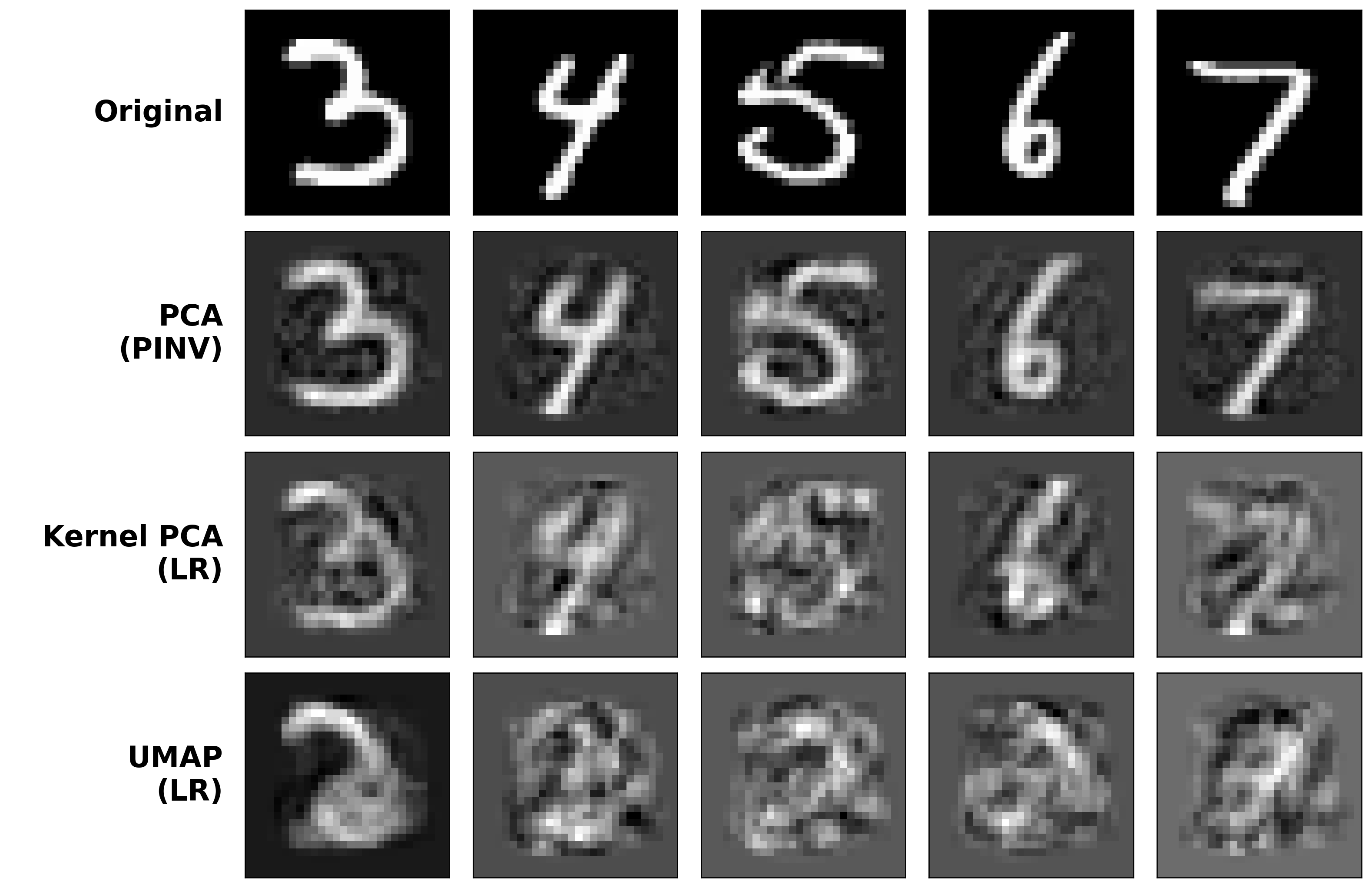}\\[0.4em]
  \includegraphics[width=0.86\linewidth]{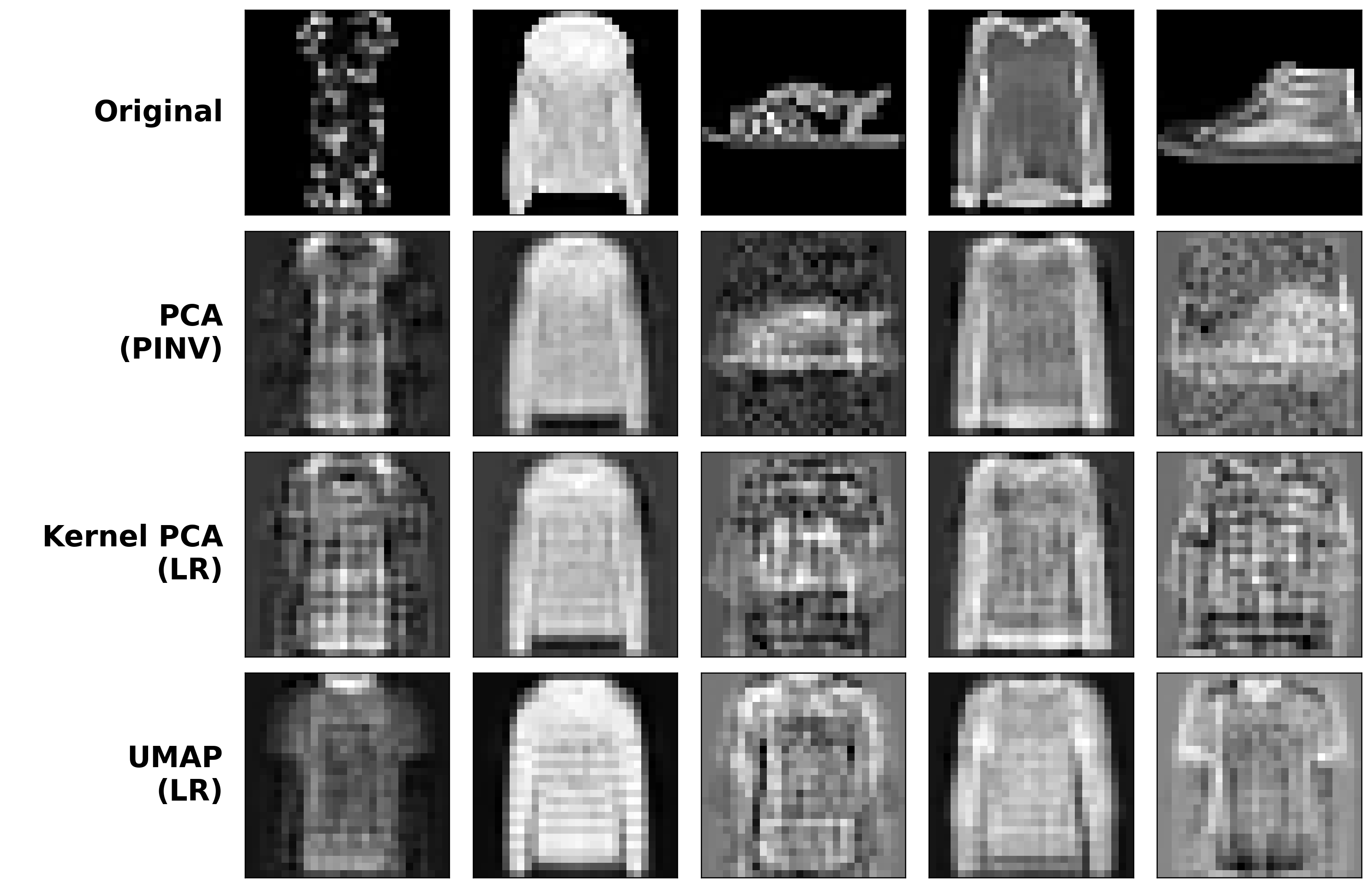}
  \caption{Qualitative reconstruction results for MNIST and Fashion-MNIST at $\tilde{d}(k)=64$.}
  \label{fig:recon-64}
\end{figure}

Overall, these results showed that dimensionality reduction methods and intermediate dimensions should be selected by jointly considering classification accuracy and reconstruction risk.

\FloatBarrier

\section{Conclusion}
\label{sec:conclusion}
In this study, we addressed the integration problem in DC analysis by first formulating a linear integration problem, characterizing its globally optimal solution, and then extending it to a nonlinear setting through kernelization.
This nonlinear extension enables the integration of intermediate representations obtained through nonlinear dimensionality reduction.
In the proposed method, the estimation of each integration function for a fixed target representation is reduced to a finite-dimensional kernel ridge regression-type problem, and the estimation of the target representation is reduced to an eigenvalue problem.
This formulation makes nonlinear integration tractable through eigendecomposition.
We also introduced graph regularization and a centering constraint to incorporate geometric structure and target-variable information of the anchor dataset into the target representation.

Numerical experiments showed that, under nonlinear dimensionality reduction, the proposed KTI-based methods tended to achieve higher classification accuracy than existing linear integration methods.
The results also showed that combining TSL using target-variable information with the centering constraint improved classification accuracy, and that not only the quantity of anchor data but also the distributional characteristics of the source data used to generate anchors substantially affected classification accuracy.
Furthermore, the choice of dimensionality reduction method and intermediate dimension affected both classification accuracy and reconstruction risk.
In particular, the results suggest that UMAP is useful when confidentiality against reconstruction attacks is prioritized, although it does not necessarily maximize classification accuracy.

These results indicate that, in practice, integration and dimensionality reduction methods should be selected by jointly considering classification accuracy, anchor data design, computational cost, and reconstruction risk.
Future work includes extending the proposed method to large-scale datasets using kernel approximation and constructing a DC framework that can adjust dimensionality reduction methods and intermediate dimensions according to party-specific confidentiality and accuracy requirements.
Additionally, evaluating the privacy guarantees of the proposed method against stronger reconstruction attack models remains an important open problem.

\appendix

\section{Proof of Theorem 2}
\label{app:proof-kernel-reduction}
\begin{proof}
Firstly, we fix $\symbfit Z\in\mathbb R^{n_{\mathrm a}\times\hat d}$.
Then, the inner minimization in Eq.~\eqref{eq:kernel-formulation} can be written as the following multi-output kernel ridge regression-type problem for the fixed $\symbfit Z$:
\begin{equation}
\label{eq:kernel-proof-inner-problem}
\begin{aligned}
\min_{\{g_k\in\mathcal H_k^{\hat d}\}_{k=1}^K}
\quad
&
\sum_{k=1}^K
\left(
\|g_k(\tilde{\symbfit A}^{(k)})-\symbfit Z\|_{\mathrm F}^2
+\lambda\|g_k\|_{\mathcal H_k^{\hat d}}^2
\right) .
\end{aligned}
\end{equation}
Eq.~\eqref{eq:kernel-proof-inner-problem} is separable with respect to $k\in[K]$.
Therefore, each subproblem can be solved independently.
Since $\mathcal H_k^{\hat d}$ is a product RKHS, the representer theorem applies coordinate-wise to each component of $g_k$.
By the representer theorem~\citep{Generalized_Representer_Theorem}, an optimal solution for each $k\in[K]$ can be represented using a coefficient matrix $\symbfit\Gamma_k\in\mathbb R^{n_{\mathrm a}\times\hat d}$ as follows:
\begin{equation}
\label{eq:kernel-proof-representer-form}
g_k^\star(\symbfit x)=\symbfit\kappa_k(\symbfit x)\symbfit\Gamma_k .
\end{equation}
Using this representation, the function values and the RKHS norm are written as follows:
\begin{equation}
\label{eq:kernel-proof-representer-identities}
\begin{aligned}
g_k^\star(\tilde{\symbfit A}^{(k)})=\symbfit \Sigma_k\symbfit\Gamma_k,
\qquad
\|g_k^\star\|_{\mathcal H_k^{\hat d}}^2
=\mathrm{tr}(\symbfit\Gamma_k^\top\symbfit \Sigma_k\symbfit\Gamma_k).
\end{aligned}
\end{equation}
Consequently, by Eqs.~\eqref{eq:kernel-proof-representer-form} and~\eqref{eq:kernel-proof-representer-identities}, Eq.~\eqref{eq:kernel-proof-inner-problem} is decomposed into the following matrix optimization problem for each $k\in[K]$:
\begin{equation}
\label{eq:kernel-proof-gamma-problem}
\min_{\symbfit\Gamma_k\in\mathbb R^{n_{\mathrm a}\times\hat d}}
\|\symbfit \Sigma_k\symbfit\Gamma_k-\symbfit Z\|_{\mathrm F}^2
+\lambda\,\mathrm{tr}(\symbfit\Gamma_k^\top\symbfit \Sigma_k\symbfit\Gamma_k).
\end{equation}
Next, to solve Eq.~\eqref{eq:kernel-proof-gamma-problem}, we compute the first-order condition with respect to $\symbfit\Gamma_k$.
The optimality condition can be written as follows:
\begin{equation}
\label{eq:kernel-proof-gamma-opt-cond}
\begin{aligned}
\nabla_{\symbfit\Gamma_k}
\left\{
\|\symbfit \Sigma_k\symbfit\Gamma_k-\symbfit Z\|_{\mathrm F}^2
+\lambda\,\mathrm{tr}(\symbfit\Gamma_k^\top\symbfit \Sigma_k\symbfit\Gamma_k)
\right\}
=\symbfit 0
&\Longleftrightarrow
\symbfit \Sigma_k
\left\{
(\symbfit \Sigma_k+\lambda\symbfit I_{n_{\mathrm a}})\symbfit\Gamma_k
-
\symbfit Z
\right\}
=
\symbfit 0 .
\end{aligned}
\end{equation}
Since $\symbfit \Sigma_k$ is positive semidefinite and $\lambda>0$,
$\symbfit \Sigma_k+\lambda\symbfit I_{n_{\mathrm a}}$ is positive definite and hence invertible.
Therefore, we define
\begin{equation}
\label{eq:kernel-proof-Sk}
\symbfit S_k\coloneqq(\symbfit \Sigma_k+\lambda\symbfit I_{n_{\mathrm a}})^{-1}
\end{equation}
and consider the following candidate solution:
\begin{equation}
\label{eq:kernel-proof-gamma-star}
\symbfit\Gamma_k^\star=\symbfit S_k\symbfit Z .
\end{equation}
This candidate solution satisfies Eq.~\eqref{eq:kernel-proof-gamma-opt-cond}, as shown by Eqs.~\eqref{eq:kernel-proof-Sk} and~\eqref{eq:kernel-proof-gamma-star}:
\[
\begin{aligned}
&\symbfit \Sigma_k
\left\{
(\symbfit \Sigma_k+\lambda\symbfit I_{n_{\mathrm a}})
\symbfit\Gamma_k^\star
-
\symbfit Z
\right\}
\\
&=
\symbfit \Sigma_k
\left\{
(\symbfit \Sigma_k+\lambda\symbfit I_{n_{\mathrm a}})
\symbfit S_k\symbfit Z
-
\symbfit Z
\right\}
=
\symbfit \Sigma_k(\symbfit Z-\symbfit Z)
\quad(\text{by Eq.~\eqref{eq:kernel-proof-Sk}})
\\
&=
\symbfit 0 .
\end{aligned}
\]
Therefore, Eq.~\eqref{eq:kernel-proof-gamma-star} satisfies Eq.~\eqref{eq:kernel-proof-gamma-opt-cond}.
Moreover, because $\symbfit \Sigma_k$ is positive semidefinite and $\lambda>0$, the objective function in Eq.~\eqref{eq:kernel-proof-gamma-problem} is a differentiable convex function with respect to $\symbfit\Gamma_k$.
Therefore, $\symbfit\Gamma_k^\star$ satisfying the first-order condition in Eq.~\eqref{eq:kernel-proof-gamma-opt-cond} is a globally optimal coefficient matrix.
Consequently, by Eqs.~\eqref{eq:kernel-proof-representer-form} and~\eqref{eq:kernel-proof-gamma-star}, for any $\symbfit x\in\mathbb R^{1\times\tilde d(k)}$, we obtain
\begin{equation}
\label{eq:kernel-proof-gk-star}
g_k^\star(\symbfit x)
=\symbfit\kappa_k(\symbfit x)\symbfit S_k\symbfit Z.
\end{equation}
Next, we compute the optimal objective value for the fixed $\symbfit Z$ using this solution.
Since $\symbfit \Sigma_k$ is symmetric positive semidefinite, $\symbfit S_k$ is also symmetric.
From Eq.~\eqref{eq:kernel-proof-Sk}, the following identity holds:
\begin{equation}
\label{eq:kernel-proof-KS-identity}
\symbfit \Sigma_k\symbfit S_k-\symbfit I_{n_{\mathrm a}}
=
-\lambda\symbfit S_k .
\end{equation}
Therefore, by Eqs.~\eqref{eq:kernel-proof-gamma-star} and~\eqref{eq:kernel-proof-KS-identity}, we have
\begin{equation}
\label{eq:kernel-proof-residual}
\begin{aligned}
\symbfit \Sigma_k\symbfit\Gamma_k^\star-\symbfit Z
&=
(\symbfit \Sigma_k\symbfit S_k-\symbfit I_{n_{\mathrm a}})\symbfit Z
\quad(\text{by Eq.~\eqref{eq:kernel-proof-gamma-star}})
\\
&=
-\lambda\symbfit S_k\symbfit Z .
\end{aligned}
\end{equation}
Using Eq.~\eqref{eq:kernel-proof-residual}, the optimal objective value of Eq.~\eqref{eq:kernel-proof-gamma-problem} can be computed as follows:
\begin{equation}
\label{eq:kernel-proof-opt-value}
\begin{aligned}
&\|\symbfit \Sigma_k\symbfit\Gamma_k^\star-\symbfit Z\|_{\mathrm F}^2
+\lambda\,\mathrm{tr}({\symbfit\Gamma_k^\star}^\top
\symbfit \Sigma_k\symbfit\Gamma_k^\star)
\\
&=\lambda^2\,\mathrm{tr}(\symbfit Z^\top\symbfit S_k^2\symbfit Z)
+\lambda\,\mathrm{tr}(\symbfit Z^\top\symbfit S_k\symbfit \Sigma_k\symbfit S_k\symbfit Z)
\quad(\text{by Eqs.~\eqref{eq:kernel-proof-gamma-star} and~\eqref{eq:kernel-proof-residual}})
\\
&=\lambda\,\mathrm{tr}\!\left(\symbfit Z^\top\symbfit S_k
(\lambda\symbfit I_{n_{\mathrm a}}+\symbfit \Sigma_k)\symbfit S_k\symbfit Z\right)
\\
&=\lambda\,\mathrm{tr}(\symbfit Z^\top\symbfit S_k\symbfit Z)
\quad(\text{by Eq.~\eqref{eq:kernel-proof-Sk}}).
\end{aligned}
\end{equation}
Summing Eq.~\eqref{eq:kernel-proof-opt-value} over $k\in[K]$ yields
\begin{equation}
\label{eq:kernel-proof-total-opt-value}
\begin{aligned}
&\sum_{k=1}^K\left(
\|\symbfit \Sigma_k\symbfit\Gamma_k^\star-\symbfit Z\|_{\mathrm F}^2
+\lambda\,\mathrm{tr}({\symbfit\Gamma_k^\star}^\top
\symbfit \Sigma_k\symbfit\Gamma_k^\star)
\right)
\\
&=
\mathrm{tr}\!\left(\symbfit Z^\top
\lambda\sum_{k=1}^K\symbfit S_k
\symbfit Z\right)
=
\mathrm{tr}(\symbfit Z^\top\symbfit M_\lambda\symbfit Z).
\end{aligned}
\end{equation}
As a result, Eq.~\eqref{eq:kernel-formulation} reduces to the following problem with respect to $\symbfit Z$:
\[
\begin{aligned}
\min \quad
&\mathrm{tr}(\symbfit Z^\top\symbfit M_\lambda\symbfit Z)
\\[2mm]
\text{s.~t.} \quad
&\symbfit Z^\top\symbfit Z=\symbfit I_{\hat d},\\
&\symbfit{Z}\in\mathbb{R}^{n_{\mathrm{a}}\times\hat d}.
\end{aligned}
\]
This coincides with Eq.~\eqref{eq:reduced_KNI}.
The corresponding function $g_k^\star$ is given by Eq.~\eqref{eq:kernel-proof-gk-star}, which is identical to Eq.~\eqref{eq:gk-vector-solution}.
\end{proof}

\section{Additional Graph Constructions}
\label{app:graph-construction}

This appendix describes additional target-based graph constructions that are not used as the main graph choices in the experiments.
The symmetrization and party-wise aggregation procedures are the same as those in Eqs.~\eqref{eq:graph-weight-symmetrization} and~\eqref{eq:graph-weight-aggregation}.

\paragraph{Regression version of TSL}
For regression tasks, the target-similarity Laplacian (TSL) can be constructed using a smooth similarity based on target values:
\begin{equation}
\label{eq:tsl-reg-weight}
\hat w_{ii'}^{(Bk)}=
\begin{cases}
\exp\!\left(-\dfrac{(y_{\mathrm{a}i}-y_{\mathrm{a}i'})^2}{\sigma_y^2}\right)
& \text{if } i'\in\mathcal N_{k_{\mathrm{nn}}}^{(k)}(i),\\
0 & \text{otherwise,}
\end{cases}
\end{equation}
where $\sigma_y>0$ is a scale parameter for differences in target values.

\paragraph{Target-Dissimilarity Laplacian}
The target-dissimilarity Laplacian (TDL) is used as a penalty graph.
For classification tasks, TDL connects nearest-neighbor anchor instances with different target variables:
\begin{equation}
\label{eq:tdl-class-weight}
\hat w_{ii'}^{(Ck)}=
\begin{cases}
1 & \text{if } i'\in\mathcal N_{k_{\mathrm{nn}}}^{(k)}(i),\ y_{\mathrm{a}i}\neq y_{\mathrm{a}i'},\\
0 & \text{otherwise.}
\end{cases}
\end{equation}
For regression tasks, TDL can be constructed using a smooth dissimilarity based on target values:
\begin{equation}
\label{eq:tdl-reg-weight}
\hat w_{ii'}^{(Ck)}=
\begin{cases}
1-\exp\!\left(-\dfrac{(y_{\mathrm{a}i}-y_{\mathrm{a}i'})^2}{\sigma_y^2}\right)
& \text{if } i'\in\mathcal N_{k_{\mathrm{nn}}}^{(k)}(i),\\
0 & \text{otherwise.}
\end{cases}
\end{equation}

When TSL and TDL are used together, TSL defines the intrinsic-graph Laplacian $\symbfit B$, whereas TDL defines the penalty-graph Laplacian $\symbfit C$.
This combination encourages target-similar anchor instances to be close while maintaining dispersion with respect to target-dissimilar pairs.

\section{Additional Regression Evaluation}
\label{app:regression-evaluation}
To examine whether the proposed methods are also effective for regression tasks, we conducted additional experiments using UJIIndoorLoc~\cite{ujiindoorloc_310} and Relative location of CT slices on axial axis~\cite{slice_localization} (hereafter Slice localization) from the UCI Machine Learning Repository.
For UJIIndoorLoc, we used only latitude as the target variable and treated the task as a single-output regression problem.
For each dataset, we compared dimensionality reduction methods (PCA, Kernel PCA, and UMAP) and integration methods (KTI, KTI+GL, KTI+TSL, LTI, GEP, MPP, and ODC).
Except for the datasets, prediction target, and regression metric, the experimental conditions were the same as those in the main experiments.
As the regression model, we used \texttt{RandomForestRegressor} with default settings, corresponding to the Random Forest classifier used in the classification experiments.
Both explanatory and target variables were standardized to zero mean and unit variance before evaluation.
Accordingly, the scale parameter in the TSL weight in Eq.~\eqref{eq:tsl-reg-weight} was set to $\sigma_y^2=1$.

Each condition was repeated 20 times, and Table~\ref{tab:recon-score-2datasets-transposed-se} reports the mean RMSE and 95\% confidence interval.
In each cell, the mean value is shown in the upper row and the 95\% confidence interval is shown in parentheses in the lower row.
Bold values indicate the lowest RMSE among the integration methods shown in the table; ties are all shown in bold.
For reference, Local and Central achieved RMSEs of 13.02 $(\pm 0.08)$ and 6.41 $(\pm 0.07)$ on Slice localization, and 28.54 $(\pm 0.25)$ and 13.43 $(\pm 0.26)$ on UJIIndoorLoc, respectively.
In Table~\ref{tab:recon-score-2datasets-transposed-se}, K-PCA denotes Kernel PCA.

\begin{table}[t]
\centering
\scriptsize
\setlength{\tabcolsep}{4pt}
\renewcommand{\arraystretch}{1.15}
\caption{Regression results for the additional evaluation.}
\label{tab:recon-score-2datasets-transposed-se}
\begin{tabular*}{\linewidth}{@{\extracolsep{\fill}}llccccccc@{}}
\toprule
Dataset & DimRed
& KTI & KTI+GL & KTI+TSL & LTI & GEP & MPP & ODC \\
\midrule
Slice localization & PCA
& \valci{14.62}{0.40} & \valci{13.88}{0.56} & \valci{11.75}{0.41} & \valci{\textbf{8.91}}{0.25} & \valci{\textbf{8.91}}{0.25} & \valci{8.95}{0.23} & \valci{8.96}{0.21} \\
Slice localization & K-PCA
& \valci{9.76}{0.20} & \valci{9.73}{0.25} & \valci{\textbf{9.47}}{0.24} & \valci{9.92}{0.26} & \valci{9.92}{0.26} & \valci{9.77}{0.30} & \valci{9.83}{0.26} \\
Slice localization & UMAP
& \valci{12.18}{0.28} & \valci{\textbf{12.05}}{0.29} & \valci{12.10}{0.24} & \valci{12.90}{0.23} & \valci{12.91}{0.23} & \valci{13.01}{0.31} & \valci{12.89}{0.24} \\
UJIIndoorLoc & PCA
& \valci{27.93}{0.96} & \valci{25.11}{0.91} & \valci{24.47}{0.83} & \valci{\textbf{18.04}}{0.36} & \valci{\textbf{18.04}}{0.36} & \valci{18.11}{0.34} & \valci{\textbf{18.04}}{0.24} \\
UJIIndoorLoc & K-PCA
& \valci{\textbf{43.73}}{1.05} & \valci{44.63}{1.36} & \valci{43.77}{0.87} & \valci{45.05}{2.00} & \valci{45.05}{2.00} & \valci{50.15}{1.77} & \valci{46.49}{1.30} \\
UJIIndoorLoc & UMAP
& \valci{23.20}{0.53} & \valci{\textbf{22.85}}{0.48} & \valci{22.95}{0.52} & \valci{26.53}{0.63} & \valci{26.61}{0.66} & \valci{27.34}{0.76} & \valci{27.45}{0.76} \\
\bottomrule
\end{tabular*}
\end{table}

\FloatBarrier

As shown in Table~\ref{tab:recon-score-2datasets-transposed-se}, under nonlinear transformations using Kernel PCA and UMAP, KTI-based methods achieved RMSE comparable to or lower than that of linear integration methods (LTI, GEP, MPP, and ODC) in many cases.
In particular, under UMAP, KTI-based methods achieved lower RMSE than linear integration methods on both datasets.
For Slice localization under PCA, however, linear integration methods (LTI and GEP) substantially outperformed KTI, suggesting that KTI was most beneficial when the intermediate representations were nonlinear, whereas linear integration could remain preferable under linearly structured representations.
Overall, these results suggested that nonlinear integration could also be effective for regression tasks with nonlinear intermediate representations, which is consistent with the trend observed in the classification experiments.

\bibliographystyle{elsarticle-num}
\bibliography{cite}

@inproceedings{AGM,
  author    = {Balle, Borja and Wang, Yu-Xiang},
  title     = {Improving the {G}aussian Mechanism for Differential Privacy: Analytical Calibration and Optimal Denoising},
  booktitle = {Proceedings of the 35th International Conference on Machine Learning},
  series    = {Proceedings of Machine Learning Research},
  volume    = {80},
  pages     = {394--403},
  year      = {2018},
  publisher = {PMLR},
}

@inproceedings{Garbled-circuit2,
  author    = {Aner Ben-Efraim and Yehuda Lindell and Eran Omri},
  title     = {Optimizing Semi-Honest Secure Multiparty Computation for the Internet},
  booktitle = {Proceedings of the 2016 ACM SIGSAC Conference on Computer and Communications Security},
  pages     = {578--590},
  year      = {2016},
}

@inproceedings{DP,
  author    = {Cynthia Dwork},
  title     = {Differential Privacy: A Survey of Results},
  booktitle = {International Conference on Theory and Applications of Models of Computation},
  pages     = {1--19},
  publisher = {Springer},
  year      = {2008},
}

@misc{SecureComputationExample2,
  author    = {Adrià Gascon and Phillipp Schoppmann and Borja Balle and Mariana Raykova and Jack Doerner and Samee Zahur and David Evans},
  title     = {Privacy-Preserving Distributed Linear Regression on High-Dimensional Data},
  howpublished = {Cryptology ePrint Archive, Paper 2016/892},
  year      = {2016},
}

@book{MLP,
  author    = {Simon Haykin},
  title     = {Neural Networks: A Comprehensive Foundation},
  publisher = {Prentice Hall PTR},
  year      = {1994},
}

@article{DCframework1,
  author  = {Akira Imakura and Tetsuya Sakurai},
  title   = {Data Collaboration Analysis Framework Using Centralization of Individual Intermediate Representations for Distributed Data Sets},
  journal = {ASCE-ASME Journal of Risk and Uncertainty in Engineering Systems, Part A: Civil Engineering},
  volume  = {6},
  number  = {2},
  pages   = {04020018},
  year    = {2020},
}

@misc{FL2,
  author    = {Jakub Kone{\v{c}}n{\`y} and H Brendan McMahan and Felix X Yu and Peter Richt{\'a}rik and Ananda Theertha Suresh and Dave Bacon},
  title     = {Federated Learning: Strategies for Improving Communication Efficiency},
  howpublished = {arXiv preprint arXiv:1610.05492},
  year      = {2016},
}

@misc{SecureComputationExample3,
  author    = {Bita Darvish Rouhani and M. Sadegh Riazi and Farinaz Koushanfar},
  title     = {DeepSecure: Scalable Provably-Secure Deep Learning},
  howpublished = {Cryptology ePrint Archive, Paper 2017/502},
  year      = {2017},
}

@inproceedings{SecureComputation,
  author    = {Andrew C Yao},
  title     = {Protocols for Secure Computations},
  booktitle = {23rd Annual Symposium on Foundations of Computer Science (SFCS 1982)},
  pages     = {160--164},
  publisher = {IEEE},
  year      = {1982},
}

@article{HomomorphicEncryption2,
  author  = {Abbas Acar and Hidayet Aksu and A Selcuk Uluagac and Mauro Conti},
  title   = {A Survey on Homomorphic Encryption Schemes: Theory and Implementation},
  journal = {ACM Computing Surveys (CSUR)},
  volume  = {51},
  number  = {4},
  pages   = {1--35},
  year    = {2018},
}

@inproceedings{SecureAggregation,
  author    = {Keith Bonawitz and Vladimir Ivanov and Ben Kreuter and Antonio Marcedone and H Brendan McMahan and Sarvar Patel and Daniel Ramage and Aaron Segal and Karn Seth},
  title     = {Practical Secure Aggregation for Privacy-Preserving Machine Learning},
  booktitle = {Proceedings of the 2017 ACM SIGSAC Conference on Computer and Communications Security},
  pages     = {1175--1191},
  year      = {2017},
}

@misc{DCprivacy,
  author       = {Akira Imakura and Anna Bogdanova and Takaya Yamazoe and Kazumasa Omote and Tetsuya Sakurai},
  title        = {Accuracy and Privacy Evaluations of Collaborative Data Analysis},
  howpublished = {arXiv preprint arXiv:2101.11144},
  year         = {2021},
}

@article{NRI-DC,
  author  = {Akira Imakura and Tetsuya Sakurai and Yukihiko Okada and Tomoya Fujii and Teppei Sakamoto and Hiroyuki Abe},
  title   = {Non-Readily Identifiable Data Collaboration Analysis for Multiple Datasets Including Personal Information},
  journal = {Information Fusion},
  volume  = {98},
  pages   = {101826},
  year    = {2023},
}

@inproceedings{DCframework2,
  author    = {Akira Imakura and Xiucai Ye and Tetsuya Sakurai},
  title     = {Collaborative Data Analysis: Non-Model Sharing-Type Machine Learning for Distributed Data},
  booktitle = {Knowledge Management and Acquisition for Intelligent Systems},
  pages     = {14--29},
  publisher = {Springer},
  year      = {2021},
}

@article{FLSurvey2,
  author  = {Peter Kairouz and H Brendan McMahan and Brendan Avent and Aur{\'e}lien Bellet and Mehdi Bennis and Arjun Nitin Bhagoji and Kallista Bonawitz and Zachary Charles and Graham Cormode and Rachel Cummings and et~al.},
  title   = {Advances and Open Problems in Federated Learning},
  journal = {Foundations and Trends{\textregistered} in Machine Learning},
  volume  = {14},
  number  = {1--2},
  pages   = {1--210},
  year    = {2021},
}

@article{KawakamiDC,
    title = {New solutions based on the generalized eigenvalue problem for the data collaboration analysis},
    journal = {Information Sciences},
    volume = {723},
    pages = {122642},
    year = {2025},
    issn = {0020-0255},
    author = {Yuta Kawakami and Yuichi Takano and Akira Imakura}
}

@inproceedings{FLnon-iid,
  author    = {Qinbin Li and Yiqun Diao and Quan Chen and Bingsheng He},
  title     = {Federated Learning on Non-iid Data Silos: An Experimental Study},
  booktitle = {2022 IEEE 38th International Conference on Data Engineering (ICDE)},
  pages     = {965--978},
  publisher = {IEEE},
  year      = {2022},
}

@article{FLsurvey,
  author  = {Qinbin Li and Zeyi Wen and Zhaomin Wu and Sixu Hu and Naibo Wang and Yuan Li and Xu Liu and Bingsheng He},
  title   = {A Survey on Federated Learning Systems: Vision, Hype and Reality for Data Privacy and Protection},
  journal = {IEEE Transactions on Knowledge and Data Engineering},
  volume  = {35},
  number  = {4},
  pages   = {3347--3366},
  year    = {2023},
}

@article{FLChallenges,
  author  = {Tian Li and Anit Kumar Sahu and Ameet Talwalkar and Virginia Smith},
  title   = {Federated Learning: Challenges, Methods, and Future Directions},
  journal = {IEEE Signal Processing Magazine},
  volume  = {37},
  number  = {3},
  pages   = {50--60},
  year    = {2020},
}

@misc{FLnon-iid2,
  author    = {Xiang Li and Kaixuan Huang and Wenhao Yang and Shusen Wang and Zhihua Zhang},
  title     = {On the Convergence of FedAvg on Non-iid Data},
  howpublished = {arXiv preprint arXiv:1907.02189},
  year      = {2019},
}

@article{HomomorphicEncryption1,
  author  = {Paulo Martins and Leonel Sousa and Artur Mariano},
  title   = {A Survey on Fully Homomorphic Encryption: An Engineering Perspective},
  journal = {ACM Computing Surveys (CSUR)},
  volume  = {50},
  number  = {6},
  pages   = {1--33},
  year    = {2017},
}

@inproceedings{FL1,
  author    = {Brendan McMahan and Eider Moore and Daniel Ramage and Seth Hampson and Blaise Aguera y Arcas},
  title     = {Communication-Efficient Learning of Deep Networks from Decentralized Data},
  booktitle = {Artificial Intelligence and Statistics},
  pages     = {1273--1282},
  publisher = {PMLR},
  year      = {2017},
}

@article{sklearn,
  author  = {Fabian Pedregosa and Ga{\"e}l Varoquaux and Alexandre Gramfort and Vincent Michel and Bertrand Thirion and Olivier Grisel and Mathieu Blondel and Peter Prettenhofer and Ron Weiss and Vincent Dubourg and et~al.},
  title   = {Scikit-learn: Machine Learning in Python},
  journal = {Journal of Machine Learning Research},
  volume  = {12},
  pages   = {2825--2830},
  year    = {2011},
}

@inproceedings{Garbled-circuit1,
  author    = {Xiao Wang and Samuel Ranellucci and Jonathan Katz},
  title     = {Global-Scale Secure Multiparty Computation},
  booktitle = {Proceedings of the 2017 ACM SIGSAC Conference on Computer and Communications Security},
  pages     = {39--56},
  year      = {2017},
}

@misc{PPML,
  author    = {Runhua Xu and Nathalie Baracaldo and James Joshi},
  title     = {Privacy-Preserving Machine Learning: Methods, Challenges and Directions},
  howpublished = {arXiv preprint arXiv:2108.04417},
  year      = {2021},
}

@article{DC-DP,
  author  = {Hiromi Yamashiro and Kazumasa Omote and Akira Imakura and Tetsuya Sakurai},
  title   = {Toward the Application of Differential Privacy to Data Collaboration},
  journal = {IEEE Access},
  volume  = {12},
  pages   = {63292--63301},
  year    = {2024},
}

@inproceedings{LeakageFromGradients,
  author    = {Ligeng Zhu and Zhijian Liu and Song Han},
  title     = {Deep Leakage from Gradients},
  booktitle = {Advances in Neural Information Processing Systems},
  volume    = {32},
  pages     = {14774--14784},
  year      = {2019},
}

@article{MNIST,
  title={The mnist database of handwritten digit images for machine learning research [best of the web]},
  author={Deng, Li},
  journal={IEEE signal processing magazine},
  volume={29},
  number={6},
  pages={141--142},
  year={2012},
  publisher={IEEE}
}

@article{FASHION,
  title={{Fashion-MNIST}: a novel image dataset for benchmarking machine learning algorithms},
  author={Xiao, Han and Rasul, Kashif and Vollgraf, Roland},
  journal={arXiv preprint arXiv:1708.07747},
  year={2017}
}

@article{FedDCL,
  title={{FedDCL}: a federated data collaboration learning as a hybrid-type privacy-preserving framework based on federated learning and data collaboration},
  author={Imakura, Akira and Sakurai, Tetsuya},
  journal={arXiv preprint arXiv:2409.18356},
  year={2024}
}

@article{DC-COX,
  title={{DC-COX}: Data collaboration Cox proportional hazards model for privacy-preserving survival analysis on multiple parties},
  author={Imakura, Akira and Tsunoda, Ryoya and Kagawa, Rina and Yamagata, Kunihiro and Sakurai, Tetsuya},
  journal={Journal of Biomedical Informatics},
  volume={137},
  pages={104264},
  year={2023},
  publisher={Elsevier}
}

@article{DC-QE-med,
  title={Data collaboration for causal inference from limited medical testing and medication data},
  author={Nakayama, Tomoru and Kawamata, Yuji and Toyoda, Akihiro and Imakura, Akira and Kagawa, Rina and Sanuki, Masaru and Tsunoda, Ryoya and Yamagata, Kunihiro and Sakurai, Tetsuya and Okada, Yukihiko},
  journal={Scientific Reports},
  volume={15},
  number={1},
  pages={9827},
  year={2025},
  publisher={Nature Publishing Group UK London}
}

@article{DC-QE,
title = {Collaborative causal inference on distributed data},
journal = {Expert Systems with Applications},
volume = {244},
pages = {123024},
year = {2024},
issn = {0957-4174},
author = {Yuji Kawamata and Ryoki Motai and Yukihiko Okada and Akira Imakura and Tetsuya Sakurai}
}

@article{DC-SHAP,
  title={{DC-SHAP} method for consistent explainability in privacy-preserving distributed machine learning},
  author={Bogdanova, Anna and Imakura, Akira and Sakurai, Tetsuya},
  journal={Human-Centric Intelligent Systems},
  volume={3},
  number={3},
  pages={197--210},
  year={2023},
  publisher={Springer}
}

@article{DCapp5,
title = {Interpretable collaborative data analysis on distributed data},
journal = {Expert Systems with Applications},
volume = {177},
pages = {114891},
year = {2021},
issn = {0957-4174},
author = {Akira Imakura and Hiroaki Inaba and Yukihiko Okada and Tetsuya Sakurai}
}

@article{DC-SMOTE,
title = {Another use of SMOTE for interpretable data collaboration analysis},
journal = {Expert Systems with Applications},
volume = {228},
pages = {120385},
year = {2023},
issn = {0957-4174},
author = {Akira Imakura and Masateru Kihira and Yukihiko Okada and Tetsuya Sakurai}
}

@article{DCrecommender,
  author    = {Yanagi, Tomoya and Ikeda, Shunnosuke and Sukegawa, Noriyoshi and Takano, Yuichi},
  title     = {Privacy-preserving recommender system using the data collaboration analysis for distributed datasets},
  journal   = {PLoS ONE},
  year      = {2025},
  volume    = {20},
  number    = {4},
  pages     = {e0319954},
  publisher = {Public Library of Science}
}

@article{ky-fan,
  title = {On a theorem of {Weyl} concerning eigenvalues of linear transformations {I}},  author={Fan, Ky},
  journal={Proceedings of the National Academy of Sciences},
  volume={35},
  number={11},
  pages={652--655},
  year={1949}
}

@article{DC-odc,
title = {Data collaboration analysis with orthonormal basis selection and alignment},
journal = {Computers and Electrical Engineering},
volume = {135},
pages = {111192},
year = {2026},
issn = {0045-7906},
author = {Keiyu Nosaka and Yamato Suetake and Yuichi Takano and Akiko Yoshise}
}

@misc{UMAP,
    title={{UMAP}: Uniform Manifold Approximation and Projection for Dimension Reduction}, 
    author={Leland McInnes and John Healy and James Melville},
    year={2020},
    primaryClass={stat.ML},
}

@Article{Mashiko2025,
author={Mashiko, Sota
and Kawamata, Yuji
and Nakayama, Tomoru
and Sakurai, Tetsuya
and Okada, Yukihiko},
title={Anomaly detection in double-entry bookkeeping data by federated learning system with non-model sharing approach},
journal={Scientific Reports},
year={2025},
month={Nov},
day={26},
volume={15},
number={1},
pages={42208},
issn={2045-2322},
}

@article{LaplacianEigenmaps,
  title = {Laplacian Eigenmaps for Dimensionality Reduction and Data Representation},
  volume = {15},
  ISSN = {1530-888X},
  number = {6},
  journal = {Neural Computation},
  publisher = {MIT Press - Journals},
  author = {Belkin,  Mikhail and Niyogi,  Partha},
  year = {2003},
  month = jun,
  pages = {1373–1396}
}

@misc{DC-Clustering,
  title        = {A New Type of Federated Clustering: A Non-Model-Sharing Approach},
  author       = {Kawamata, Yuji and Kamijo, Kaoru and Kihira, Masateru and Toyoda, Akihiro and Nakayama, Tomoru and Imakura, Akira and Sakurai, Tetsuya and Okada, Yukihiko},
  year         = {2025},
  primaryClass = {cs.LG},
}

@article{need-protection1,
  title={When machine learning meets privacy: A survey and outlook},
  author={Liu, Bo and Ding, Ming and Shaham, Sina and Rahayu, Wenny and Farokhi, Farhad and Lin, Zihuai},
  journal={ACM Computing Surveys (CSUR)},
  volume={54},
  number={2},
  pages={1--36},
  year={2021},
  publisher={ACM New York, NY, USA}
}

@article{need-protection2,
author = {Fung, Benjamin C. M. and Wang, Ke and Chen, Rui and Yu, Philip S.},
title = {Privacy-preserving data publishing: A survey of recent developments},
year = {2010},
publisher = {Association for Computing Machinery},
address = {New York, NY, USA},
volume = {42},
number = {4},
issn = {0360-0300},
journal = {ACM Comput. Surv.},
month = jun,
articleno = {14},
numpages = {53}
}

@ARTICLE{PPDM,
  author={Mendes, Ricardo and Vilela, João P.},
  journal={IEEE Access}, 
  title={Privacy-Preserving Data Mining: Methods, Metrics, and Applications}, 
  year={2017},
  volume={5},
  number={},
  pages={10562-10582},
  }

@misc{DC-fed-17,
      title={Federated Learning System without Model Sharing through Integration of Dimensional Reduced Data Representations}, 
      author={Anna Bogdanova and Akie Nakai and Yukihiko Okada and Akira Imakura and Tetsuya Sakurai},
      year={2020},
      primaryClass={cs.LG},
}

@ARTICLE{GraphEmbeddingExtensions,
  author = {Shuicheng Yan and Dong Xu and B. Zhang and Hong-Jiang Zhang and Qiang Yang and Stephen Lin},  journal={IEEE Transactions on Pattern Analysis and Machine Intelligence}, 
  title={Graph Embedding and Extensions: A General Framework for Dimensionality Reduction}, 
  year={2007},
  volume={29},
  number={1},
  pages={40-51},
}

@ARTICLE{common_obfuscation,
  author={Kun Liu and Kargupta, H. and Ryan, J.},
  journal={IEEE Transactions on Knowledge and Data Engineering}, 
  title={Random projection-based multiplicative data perturbation for privacy preserving distributed data mining}, 
  year={2006},
  volume={18},
  number={1},
  pages={92-106},
  }

@inproceedings{private_obfuscation,
  title={On Lightweight Privacy-Preserving Collaborative Learning for Internet-of-Things Objects},
  author={Jiang, Linshan and Tan, Rui and Lou, Xin and Lin, Guosheng},
  booktitle={Proceedings of the International Conference on Internet of Things Design and Implementation},
  pages={70--81},
  year={2019}
}

@inproceedings{RandomProjection,
  author    = {Ella Bingham and Heikki Mannila},
  title     = {Random Projection in Dimensionality Reduction: Applications to Image and Text Data},
  booktitle = {Proceedings of the Seventh ACM SIGKDD International Conference on Knowledge Discovery and Data Mining},
  pages     = {245--250},
  year      = {2001},
}

@inproceedings{rotation_perturbation,
  author    = {Keke Chen and Ling Liu},
  title     = {Privacy Preserving Data Classification with Rotation Perturbation},
  booktitle = {Fifth IEEE International Conference on Data Mining (ICDM'05)},
  pages     = {589--592},
  year      = {2005},
}

@article{DP_Algorithmic,
author = {Dwork, Cynthia and Roth, Aaron},
title = {The Algorithmic Foundations of Differential Privacy},
year = {2014},
issue_date = {Aug 2014},
publisher = {Now Publishers Inc.},
address = {Hanover, MA, USA},
volume = {9},
number = {3–4},
issn = {1551-305X},
journal = {Found. Trends Theor. Comput. Sci.},
month = aug,
pages = {211–407},
numpages = {197}
}

@article{GDPR,
  author  = {Chris Jay Hoofnagle and Bart van der Sloot and Frederik Zuiderveen Borgesius},
  title   = {The {European Union} General Data Protection Regulation: What It Is and What It Means},
  journal = {Information \& Communications Technology Law},
  volume  = {28},
  number  = {1},
  pages   = {65--98},
  year    = {2019},
}

@Inbook{PPDM-attack,
    author="Liu, Kun
    and Giannella, Chris
    and Kargupta, Hillol",
    editor="Aggarwal, Charu C.
    and Yu, Philip S.",
    title="A Survey of Attack Techniques on Privacy-Preserving Data Perturbation Methods",
    bookTitle="Privacy-Preserving Data Mining: Models and Algorithms",
    year="2008",
    publisher="Springer US",
    address="Boston, MA",
    pages="359--381",
    isbn="978-0-387-70992-5",
}

@inproceedings{Generalized_Representer_Theorem,
  author    = {Bernhard Sch\"{o}lkopf and Ralf Herbrich and Alex J. Smola},
  title     = {A Generalized Representer Theorem},
  booktitle = {Proceedings of the 14th Annual Conference on Computational Learning Theory and 5th European Conference on Computational Learning Theory},
  pages     = {416--426},
  year      = {2001},
}

@misc{ujiindoorloc_310,
  author       = {Torres-Sospedra, Joaqu{\'i}n and Montoliu, Ra{\'u}l and Mart{\'i}nez-Us{\'o}, Adolfo and Arnau, Tom{\'a}s and Avariento, Joan},
  title        = {{UJIIndoorLoc}},
  year         = {2014},
  howpublished = {UCI Machine Learning Repository},
}

@misc{slice_localization,
  author       = {Graf, F. and Kriegel, H.-P. and Schubert, M. and P{\"o}lsterl, S. and Cavallaro, A.},
  title        = {{Relative location of CT slices on axial axis}},
  year         = {2011},
  howpublished = {UCI Machine Learning Repository},
}

\end{document}